\documentclass{article}

%



\usepackage[utf8]{inputenc} 
\usepackage[T1]{fontenc}    
\usepackage[colorlinks=true,citecolor=blue]{hyperref}       
\usepackage{url}            
\usepackage{booktabs}       
\usepackage{amsfonts}       
\usepackage{nicefrac}       
\usepackage{microtype}      

\usepackage{amsmath,amssymb,amsthm}
\usepackage{xcolor}
\usepackage{colortbl}
\usepackage{times}

\usepackage{graphicx}
\graphicspath{{.//}}

\usepackage{natbib}

\usepackage[mathscr]{eucal}
\usepackage{stmaryrd}
\usepackage{bbold}

\usepackage{mdframed}

\usepackage{float}

\usepackage[boxed]{algorithm}
\usepackage{algorithmic}

\usepackage[small]{caption}

\usepackage{subfig}
\usepackage[shortlabels]{enumitem}


\newcommand{\arxivOnly}[1]{#1}
\newcommand{\nipsOnly}[1]{}

\arxivOnly{
\usepackage[left=1.25in, top=1in, bottom=1in, right=1.25in]{geometry}
}

\newcommand{\mytt}[1]{{\small\begin{alltt}#1\end{alltt}}}

\newcommand{\argmin}{\operatorname{argmin }\, }

\newcommand{\indicator}[1]{\llbracket #1 \rrbracket}
\newcommand{\PMOne}{\{ \pm 1 \}}

\newcommand{\Contaminator}[1]{\bar{#1}}
\newcommand{\etaCont}{\Contaminator{\eta}}
\newcommand{\DCont}{\Contaminator{\D}}
\newcommand{\piCont}{\Contaminator{\pi}}
\newcommand{\PCont}{\Contaminator{P}}
\newcommand{\QCont}{\Contaminator{Q}}

\newcommand{\X}{\mathsf{X}}
\newcommand{\Y}{\mathsf{Y}}
\newcommand{\SSf}{\mathsf{S}}
\newcommand{\SCont}{\Contaminator{\mathsf{S}}}
\newcommand{\YCont}{\Contaminator{\Y}}
\newcommand{\XCal}{\mathscr{X}}
\newcommand{\HCal}{\mathscr{H}}
\newcommand{\SCal}{\mathscr{S}}
\newcommand{\D}{\mathscr{D}}

\newcommand{\RRank}{R_{\mathrm{rank}}}

\renewcommand{\Pr}{\mathbb{P}}
\newcommand{\Expectation}[2]{\mathbb{E}_{#1}\left[ #2 \right]}

\newcommand{\sign}{\mathrm{sign}}
\newcommand{\Real}{\mathbb{R}}

\definecolor{dg}{RGB}{2,101,15}
\newtheoremstyle{dotlessG}{-\topsep}{}{}{}{\color{dg}\bfseries}{.}{ }{}
\definecolor{db}{RGB}{2,15,101}
\newtheoremstyle{dotlessB}{-\topsep}{}{}{}{\color{db}\bfseries}{.}{ }{}
\definecolor{dr}{RGB}{101,2,101}
\newtheoremstyle{dotlessR}{-\topsep}{}{}{}{\color{dr}\bfseries}{.}{ }{}
\definecolor{dy}{RGB}{101,101,2}
\newtheoremstyle{dotlessY}{-\topsep}{}{}{}{\color{dy}\bfseries}{.}{ }{}

\newtheorem{theorem}{Theorem}
\newtheorem{proposition}[theorem]{Proposition}
\newtheorem{lemma}[theorem]{Lemma}
\newtheorem{corollary}[theorem]{Corollary}

\theoremstyle{dotlessR}
\newtheorem*{rem}{Remark}

\theoremstyle{dotlessG}
\newtheorem{defn}{Definition}

\theoremstyle{dotlessB}
\newtheorem{ex}{Example}

\theoremstyle{dotlessY}
\newtheorem{ass}{Assumption}

\newenvironment{definition}
{\begin{mdframed}[innertopmargin=1pt,innerbottommargin=3pt,skipbelow=5pt,backgroundcolor=green!1] \begin{defn}}
{\end{defn} \end{mdframed}}

\newenvironment{example}
{\begin{mdframed}[innertopmargin=1pt,innerbottommargin=3pt,skipbelow=5pt,backgroundcolor=blue!1] \begin{ex}}
{\end{ex} \end{mdframed}}

\newenvironment{remark}
{\begin{mdframed}[innertopmargin=1pt,innerbottommargin=3pt,skipbelow=5pt,backgroundcolor=red!1] \begin{rem}}
{\end{rem} \end{mdframed}}

\newenvironment{assumption}
{\begin{mdframed}[backgroundcolor=yellow!1] \begin{ass}}
{\end{ass} \end{mdframed}}

\newcommand{\SIM}{\mathrm{SIM}}
\newcommand{\GLM}{\mathrm{GLM}}

\newcommand{\SLN}{\mathrm{SLN}}
\newcommand{\CCN}{\mathrm{CCN}}
\newcommand{\PTN}{\mathrm{PTN}}
\newcommand{\BYLN}{\mathrm{BYLN}}
\newcommand{\BCN}{\mathrm{BCN}}
\newcommand{\BCNPlus}{\mathrm{BCN^{+}}}
\newcommand{\IDN}{\mathrm{IDN}}
\newcommand{\ILN}{\mathrm{ILN}}
\newcommand{\SIN}{\mathrm{SIN}}

\newcommand{\MonoLips}{\mathscr{U}}

\newcommand{\weight}{\mathrm{wt}}
\newcommand{\reg}{\mathrm{reg}}
\newcommand{\scorer}{s \colon \XCal \to \Real}

\newcommand{\ellZO}{\ell^{01}}
\newcommand{\ellSQ}{\ell^{\mathrm{sq}}}

\newcommand{\defEq}{\stackrel{\raisebox{-0.25ex}{\scalebox{1.2}{$\cdot$}}}{=}}

\newcommand{\uMar}{u_{\mathrm{mar}( \gamma )}}

\newcommand{\ie}{i.e.\ }
\newcommand{\eg}{e.g.\ }

\allowdisplaybreaks

\title{Learning from Binary Labels with Instance-Dependent Corruption}

%

\arxivOnly{
\author{
  Aditya Krishna Menon \\
  Data61 and the Australian National University \\
  Canberra, ACT, Australia \\
  {\texttt{aditya.menon@nicta.com.au}} \\
  \and
  Brendan van Rooyen \\
  Queensland University of Technology \\
  Brisbane, QLD, Australia \\
  {\texttt{brendan.vanrooyen@qut.edu.au}} \\
  \and
  Nagarajan Natarajan \\
  Microsoft Research Bangalore \\
  Bengaluru, KT, India \\
  {\texttt{naga86@gmail.com}} \\
}
}
\nipsOnly{
\author{
  Aditya Krishna Menon \\
  Data61 and the ANU \\
  \arxivOnly{Canberra, ACT, Australia \\}
  {\scriptsize\texttt{aditya.menon@nicta.com.au}} \\
  \And
  Brendan van Rooyen \\
  Queensland University of Technology \\
  \arxivOnly{Brisbane, QLD, Australia \\}
  {\scriptsize\texttt{brendan.vanrooyen@qut.edu.au}} \\
  \And
  Nagarajan Natarajan \\
  MSR Bangalore \\
  \arxivOnly{Bengaluru, KT, India \\}
  {\scriptsize\texttt{naga86@gmail.com}} \\
}
}

\begin{document}

\maketitle


\begin{abstract}
Suppose we have a sample of instances paired with binary labels corrupted by arbitrary instance- and label-dependent noise. 
With sufficiently many such samples,
can we optimally classify and rank instances with respect to the noise-free distribution?
We provide a theoretical analysis of this question, with three main contributions.
First, we prove that for instance-dependent noise, any algorithm that is consistent for classification on the noisy distribution is also consistent on the clean distribution.
Second, we prove that for a broad class of instance- and label-dependent noise, a similar consistency result holds for the area under the ROC curve.
Third, for the latter noise model,
when the noise-free class-probability function belongs to the generalised linear model family, 
we show that the Isotron can efficiently and provably learn from the corrupted sample.
\end{abstract}


\section{Learning with label noise: from constant to instance-dependent}

Given an instance space $\XCal$, and training samples from some distribution $\D$ over $\XCal \times \PMOne$, the goal in binary supervised learning is to learn a scorer $s \colon \XCal \to \Real$ with low \emph{risk} on future test samples drawn from $\D$.
Depending on the choice of risk, one arrives at the practically pervasive problems of binary classification \citep{Devroye:1996}, class-probability estimation \citep{Buja:2005}, and bipartite ranking \citep{Agarwal:2005}.
While the standard setup assumes that the train and test distributions are identical,
often the training labels are \emph{corrupted} in some way,
so that the training samples are effectively from some $\DCont \neq \D$.
The case where the labels are flipped with constant or class-dependent probabilities 
have been well-studied of late \citep{Natarajan:2013,Scott:2013,Menon:2015,vanRooyen:2015,Patrini:2016}.

Our interest is the case where training labels are flipped with unknown, \emph{instance- and label-dependent} probabilities.
This challenging setting was recently studied in \citep{Manwani:2013,Ghosh:2015}, who established that certain non-convex losses are robust to such noise, provided the true distribution $\D$ is separable.
However, compared to 
the symmetric- and class-conditional noise case, 
three important questions remain unanswered.
First, is suitable risk minimisation on the corrupted sample \emph{consistent} for minimisation on the clean sample?
Second, does consistency hold for more general risks, such as that for bipartite ranking?
Third, if we have more knowledge as to the structure of $\D$, can we design efficient algorithms to provably learn from the corrupted samples?

In this paper, we provide positive answers to all questions under mild assumptions on the noise process, and an additional assumption on $\D$ for the third question.
Specifically:
\begin{itemize}[leftmargin=0.25in]
  \item on the theoretical side,
we prove that:
  \begin{itemize}[leftmargin=0.1in]
    \item under instance-dependent noise,
the Bayes-optimal scorers for certain losses are unchanged (Corollary \ref{corr:bayes-opt-same}), and that
any algorithm consistent for classification on the noisy distribution is also consistent on the clean distribution (Proposition \ref{prop:regret-bound});
  \item under a broad range of instance- and label-dependent noise,
the corrupted class-probability function preserves the order of the clean one (Proposition \ref{prop:eta-monotone}), and
we have consistency for the area under the ROC curve maximisation on the noisy distribution (Proposition \ref{prop:auc-regret});
  \end{itemize}

  \item on the algorithmic side,
we show that if
$\D$ has class-probability function belonging to the 
generalised linear model family,
then under the aforementioned class of instance- and label-dependent noise, so does the corrupted class-probability function (Proposition \ref{prop:corrupt-sim});
and thus, 
consistent classification and ranking is afforded by the 
Isotron \citep{Kalai:2009} (Proposition \ref{prop:isotron-consistency}).
\end{itemize}

Our analysis relies on the structure of the class-probability function under instance- and label-dependent noise (Lemma \ref{lemm:corrupt-eta-general}).
Our results broadly generalise those for class-conditional label noise in \citet{Natarajan:2013,Menon:2015}, where
this viewpoint has similarly proven useful.

\section{Background and notation}

We fix some notation and introduce some relevant background material.

\subsection{Learning from binary labels}

Fix an instance space $\XCal$.
We denote by $\D$ some distribution over $\XCal \times \PMOne$,
with $(\X, \Y) \sim D$ a pair of random variables. 
Any $\D$ may be expressed via the \emph{class-conditional distributions}
$(P, Q) = ( \Pr( \X \mid \Y = 1 ), \Pr( \X \mid \Y = -1 ) )$ and \emph{base}
rate $\pi = \Pr( \Y = 1 )$, or equivalently
via \emph{marginal distribution} $M = \Pr( \X )$ and \emph{class-probability function} $\eta \colon x \mapsto \Pr( \Y = 1 \mid \X = x )$. 
We assume $\pi \in (0, 1)$.

A \emph{scorer} is any $\scorer$.
A \emph{loss} is any $\ell \colon \PMOne \times \Real \to \Real_+$.
The \emph{$\ell$-risk} of a scorer $s$ wrt $\D$ is 
\begin{align}
R( s; \D, \ell ) &\defEq \Expectation{(\X, \Y) \sim \D}{\ell( \Y, s(\X) )} = \Expectation{\X \sim M}{ L( \eta( \X ), s( \X ) ) },
\end{align}
where $L(\eta, v) \defEq \eta \cdot \ell_1( v ) + (1 - \eta) \cdot \ell_{-1}( v )$ is the \emph{conditional risk} of $\ell$.
The \emph{Bayes-optimal} scorers for a loss $\ell$ are those that minimise the $\ell$ risk.
The \emph{$\ell$-regret} of a scorer $\scorer$ is the excess risk over that of any Bayes-optimal scorer $s^* \in \operatorname{argmin}_s R( s; \D, \ell )$:
\begin{align*}
    \reg( s; \D, \ell ) &\defEq R( s; \D, \ell ) - R( s^*; \D, \ell ) = \Expectation{\X \sim M}{ \reg( \eta( \X ), s( \X ), s^*( \X ) ) }
\end{align*}
where, in an overload of notation, $\reg( \eta, s, s^* ) \defEq L( \eta, s ) - L( \eta, s^* )$.

\subsection{Learning from corrupted binary labels}

\arxivOnly{
In the standard problem of learning from binary labels, we have access to a sample $\SSf = \{ ( x_n, y_n ) \}_{n = 1}^{N} \sim \D^N$.
Our goal is to learn a scorer $s$ from this sample with low $\ell$-risk with respect to $\D$.
}
Fix some notional ``clean'' (not necessarily separable) distribution $\D$.
In the problem of learning from \emph{corrupted} binary labels, we have access to a sample $\SCont = \{ ( x_n, \bar{y}_n ) \}_{n = 1}^{N} \sim \DCont^N$,
for some $\DCont \neq \D$ where $\Pr(\X)$ is unchanged, but $\Pr( \YCont \mid \X = x ) \neq \Pr( \Y \mid \X = x )$.
Our goal remains to learn a scorer $s$ from $\SCont$ with low $\ell$-risk with respect to $\D$.
Examples include learning from symmetric label noise \citep{Angluin:1988}, and class-conditional noise \citep{Blum:1998}.

\arxivOnly{
Note that we allow $\D$ to be non-separable, \ie $\eta( x ) \cdot (1 - \eta( x ) ) > 0$ for some $x \in \XCal$;
thus, even under $\D$, there is not necessarily certainty as to every instance's label.
Our use of ``noise'' and ``corruption'' thus refers to an additional, exogenous uncertainty in the labelling process.
}

\arxivOnly{
\subsection{Existing work on learning from noisy labels}

There is too large a body of work on label noise to fully summarise here (see \eg \citet{Frenay:2014} for a recent survey).
Broadly, there have been three strands of theoretical analysis.
\begin{enumerate}[(1),leftmargin=0.25in]
  \item \emph{PAC guarantees}. The first strand has focussed on PAC-style guarantees for learning under symmetric and class-conditional noise (\eg \citep{Bylander:1994,Blum:1996,Blum:1998}),
noise consistent with the distance to the margin (\eg \citet{Angluin:1988,Bylander:1997,Bylander:1998,Servedio:1999}), 
noise with bounded error rate\footnote{For separable $\D$, this is effectively the agnostic learning problem \citep{Kearns:1994}.} (\eg \citet{Kalai:2005,Awasthi:2014})
and arbitrary bounded instance dependent or Massart noise (\eg \citet{Awasthi:2015}).
These works often assume the true distribution $\D$ is linearly separable with some margin,
the marginal over instances has some structure (\eg uniform over the unit sphere, or log-concave isotropic),
and that one employs linear scorers for learning.

  \item \emph{Surrogate losses}. The second strand has focussed on the design of surrogate losses robust to label noise.
\citet{Long:2008} showed that even under symmetric label noise, convex potential minimisation with such scorers will produce classifiers that are akin to random guessing.
For class-conditional noise, \citet{Natarajan:2013} provided a simple ``noise-corrected'' version of any loss.
\citet{Ghosh:2015} showed that losses whose components sum to a constant are robust to symmetric label noise. 
\citet{vanRooyen:2015} showed that the linear or unhinged loss is robust to symmetric label noise.
\citet{Patrini:2016} showed that a range of ``linear-odd'' losses (LOLs) are approximately robust to asymmetric label noise, provided that the mean operator is not affected too much by corruption.

  \item \emph{Consistency}. The third strand, which is closest to our work, has focussed on 
  showing consistency of appropriate risk minimisation in the regime where one has a suitably powerful function class \citep{Scott:2013,Natarajan:2013,Menon:2015}.
  For example, \citet{Natarajan:2013} showed that minimisation of appropriately weighted convex surrogates on the corrupted distribution $\DCont$ is in fact \emph{consistent} for the purposes of classification on $\D$.
  This work has been restricted to the case of symmetric- and class-conditional noise.
\end{enumerate}

In the present paper, we do not make assumptions on $\D$ for our theoretical analysis (unlike (1)),
assume one is working with a suitably rich function class (unlike (1) and (2)),
and work with general instance- and label-dependent noise models (unlike (2) and (3)).
}



%
\subsection{The SIM family of class-probability functions}

\arxivOnly{
Recall the standard generalised linear model (GLM) family of class-probability functions.
\begin{definition}[GLM]
For any $u \colon \Real \to [ 0, 1 ], w^* \in \Real^d$, the GLM class-probability function is
$$ \GLM( u, w^* ) \defEq x \mapsto u( \langle w^*, x \rangle ). $$
\end{definition}
}
For any $u \colon \Real \to [ 0, 1 ], w^* \in \Real^d$, the generalised linear model (GLM) class-probability function is:
$$ \GLM( u, w^* ) \defEq x \mapsto u( \langle w^*, x \rangle ). $$
Logistic regression \eg assumes $\eta = \GLM(u, w^*)$ for $u( z ) = 1/(1 + e^{-z})$.
We often refer to $u( \cdot )$ as a link function. 
In this paper, we are interested in cases where $u( \cdot )$ is \emph{unknown}, but is known to satisfy some mild properties.
Specifically, we will study the ``single-index model'' (SIM) family of class-probability functions \citep{Kalai:2009}.
(See Appendix \ref{app:sim-family} for some examples.)


\begin{definition}[SIM]
For any $L, W \in \Real_+$, the SIM family of class-probability functions is 
$$ \SIM( L, W ) \defEq \{ \GLM( u, w^* ) \colon u \in \MonoLips( L ), || w^* || \leq W \} $$
where $\MonoLips( L )$ is the set of non-decreasing $L$-Lipschitz functions.
\end{definition}


%
\section{The ILN model of label noise}

We now outline the noise models forming the broad basis of this paper, starting with the most general.

\subsection{An instance- and label-dependent noise model}
\label{sec:iln}

In the general instance- and label-dependent noise model (\emph{ILN model}), a conceptual sample from the true distribution $\D$ has each of its labels flipped with an instance- and label-dependent probability.

\begin{definition}[ILN model]
\label{defn:iln-process}
Let $\rho_1, \rho_{-1} \colon \XCal \to [ 0, 1 ]$.
Given any distribution $\D$, under the ILN model we observe a distribution
$ \ILN( \D, \rho_{-1}, \rho_{1} ) $
whose samples $( \X, \Contaminator{\Y} )$ are generated as follows:
one draws a pair $(\X, \Y) \sim \D$ as usual,
but then flips the label with the \emph{instance- and label-dependent} probability $\rho_{\Y}( \X )$.
\end{definition}

In the sequel, we will always assume the following condition on the flip probability functions.
\nipsOnly{
\begin{flalign}
  \label{ass:total-noise}  
  \textbf{Assumption 1.} && ( \forall x \in \XCal ) \, \rho_1( x ) + \rho_{-1}( x ) < 1. &&
\end{flalign}
}
\arxivOnly{
\begin{assumption}[Bounded total noise]
\label{ass:total-noise}
The label flip functions satisfy
\begin{equation}
    \label{eqn:bounded-noise}
     ( \forall x \in \XCal ) \, \rho_1( x ) + \rho_{-1}( x ) < 1.
\end{equation}
\end{assumption}
}

Assumption \ref{ass:total-noise} simply encodes that 
there is always \emph{some} signal to learn from for each instance.
When the flip functions are constant, the requirement is that $\rho_+ + \rho_- < 1$, a standard condition in analysis of the class-conditional setting (e.g.\ \citet{Blum:1998,Scott:2013}).
To reiterate that the assumption is employed, we will refer to $\rho_{\pm 1}$ satisfying Assumption \ref{ass:total-noise} as being ``admissible''.

%
\subsection{Special cases: the IDN and $\BCNPlus$ model}

There are a few special cases of the general ILN model that will be of interest to us;
see Appendix \ref{app:iln-special-case} for more examples and discussion.
The first one is the well-studied class-conditional noise (CCN) setting, where $\rho_{\pm 1}$ are constants independent of $x$.
The second is where the noise is instance dependent only, which we call the IDN model.

\begin{definition}[IDN model]
Consider an ILN model $ \ILN( \D, \rho_{-1}, \rho_{1} ) $ where
$\rho_{-1} \equiv \rho_1 \equiv f$ for some function $f \colon \XCal \to [0, \nicefrac[]{1}{2})$.
We term this problem learning with instance-dependent noise (IDN learning).
We will write the corresponding corrupted distribution as $\IDN( \D, f )$.
\end{definition}

The third is where, roughly, 
the higher the inherent uncertainty (\ie $\eta \approx \nicefrac[]{1}{2}$), the higher the noise.

\begin{definition}[$\BCNPlus$ model]
\label{defn:bcn-plus}
Consider an ILN model $ \ILN( \D, \rho_{-1}, \rho_{1} ) $
where $\rho_y = f_y \circ s$ for some functions $f_{\pm 1} \colon \Real \to [0, 1]$, and a function $s \colon \XCal \to \Real$ such that:
\begin{enumerate}[(a)]
    \item $s$ is order preserving for $\eta$ \ie
$$ ( \forall x, x' \in \XCal ) \, \eta( x ) < \eta( x' ) \implies s( x ) < s( x' ). $$

    \item $f_{\pm 1}$ are non-decreasing when $\eta \leq \nicefrac[]{1}{2}$,
    and non-increasing when $\eta \geq \nicefrac[]{1}{2}$.

    \item The flip function difference $\Delta( z ) = f_{1}( z ) - f_{-1}( z )$ 
    is non-increasing.
\end{enumerate}

We term this problem learning with generalised boundary consistent noise ($\BCNPlus$ learning).
We will write the corresponding corrupted distribution as $\BCNPlus( \D, f_{-1}, f_{1}, s )$;
further, we will say that $( f_{-1}, f_{1}, s, \eta )$ are \emph{$\BCNPlus$-admissible} if they satisfy the conditions detailed above.
\end{definition}

Condition (a) above implies that $\eta = u \circ s$ for some non-decreasing $u$.
Condition (b) encodes that $f_{\pm 1}$ are highest when $\eta \approx \nicefrac[]{1}{2}$, and lowest when $\eta \cdot (1 - \eta) \approx 0$.
Condition (c) is more opaque, but is trivially satisfied when the flip functions are identical or constant,
and is needed to ensure a monotonicity property of $\etaCont$; this will be discussed in \S\ref{sec:order-preservation}.

A simple example of the $\BCNPlus$ model (studied in \eg \citet{Du:2015}) is when $s( x ) = \langle w^*, x \rangle$ and
$\eta( x ) = \indicator{ s( x ) > 0 }$ \ie $\D$ is linearly separable,
and further $f_{\pm 1}( z ) = g(| z |)$ for some monotone decreasing $g$.
By Condition (b), one has higher noise for instances that are closer to the separator $w^*$.
This is a reasonable model of noise in problems involving human annotation:
the more intriniscally ``hard'' an instance, the higher noise we expect for it.
A similar model was studied in \citet{Bootkrajang:2016} from a probabilistic perspective.

\subsection{The corrupted class-probability function for the ILN model}

%

The nature of the corrupted class-probability function $\etaCont$ for the ILN model (Definition \ref{defn:iln-process}) will serve as the basis for learning from such corrupted samples.

\begin{lemma}
\label{lemm:corrupt-eta-general}
Pick any distribution $\D$.
Suppose $\DCont = \ILN( \D, \rho_{-1}, \rho_{1} )$ for some admissible $\rho_{\pm 1} \colon \XCal \to [ 0, 1 ]$.
Then, $\DCont$ has corrupted class-probability function
\begin{equation}
   \label{eqn:corrupt-eta-general}
   ( \forall x \in \XCal ) \, \etaCont( x ) = (1 - \rho_1( x ) ) \cdot \eta( x ) + \rho_{-1}( x ) \cdot ( 1 - \eta(x) ).
\end{equation}
\arxivOnly{
or equivalently,
$$ ( \forall x \in \XCal ) \, \eta( x ) = \frac{\etaCont( x ) - \rho_{-1}( x )}{1 - \rho_1( x ) - \rho_{-1}( x )}. $$
}
\end{lemma}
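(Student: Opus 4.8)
The plan is to compute the corrupted class-probability function $\etaCont(x) = \Pr(\YCont = 1 \mid \X = x)$ directly from the two-stage generative description of the ILN model (Definition \ref{defn:iln-process}), by fixing an arbitrary $x \in \XCal$ and applying the law of total probability over the value of the clean label $\Y$. Since corruption leaves the marginal $\Pr(\X)$ unchanged, conditioning on $\X = x$ is well defined, and it suffices to enumerate the ways a corrupted label can equal $+1$.

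First I would unfold the generative process: drawing $(\X, \Y) \sim \D$ gives, conditioned on $\X = x$, the clean label $\Y = 1$ with probability $\eta(x)$ and $\Y = -1$ with probability $1 - \eta(x)$; the observed label $\YCont$ is then obtained by flipping $\Y$ with probability $\rho_{\Y}(x)$, the flip being conditionally independent of everything else given $(\X, \Y)$. I would split the event $\{\YCont = 1\}$ into the two disjoint cases $\{\Y = 1, \text{ label retained}\}$ and $\{\Y = -1, \text{ label flipped}\}$. The first contributes $\eta(x) \cdot (1 - \rho_1(x))$ and the second contributes $(1 - \eta(x)) \cdot \rho_{-1}(x)$; summing these yields \eqref{eqn:corrupt-eta-general}.

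For the equivalent inverse formula, I would regroup \eqref{eqn:corrupt-eta-general} as $\etaCont(x) = \eta(x) \cdot (1 - \rho_1(x) - \rho_{-1}(x)) + \rho_{-1}(x)$ and solve for $\eta(x)$. The one point deserving care is that this division is valid: admissibility of $\rho_{\pm 1}$ (Assumption \ref{ass:total-noise}) ensures $1 - \rho_1(x) - \rho_{-1}(x) > 0$ for every $x$, so the denominator never vanishes and the inversion is well posed. This is the only genuine ``obstacle,'' and it is mild; the core identity is a one-line consequence of conditioning on the clean label.
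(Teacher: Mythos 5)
Your proposal is correct and follows essentially the same route as the paper's proof: condition on the clean label $\Y$ given $\X = x$ (law of total probability), sum the two contributions, and obtain the inverse formula by rearranging. Your explicit remark that Assumption \ref{ass:total-noise} guarantees $1 - \rho_1(x) - \rho_{-1}(x) > 0$, so the division is well posed, is a point the paper glosses over with ``follows by rearranging,'' but it is the same argument.
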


\arxivOnly{
\begin{remark}
Were it true that $ \rho_1( x ) + \rho_{-1}( x ) = 1 $ for some $x$, then we would have $\etaCont( x ) = \rho_{-1}( x )$ \ie it is independent of the actual $\eta( x )$ value.    
Thus, Assumption \ref{ass:total-noise} specifies that it is possible to infer something about $\eta( x )$ from $\etaCont( x )$.
\end{remark}
}

Lemma \ref{lemm:corrupt-eta-general} generalises \citet[Lemma 7]{Natarajan:2013}, \citet[Appendix C]{Menon:2015}, who derived $\etaCont$ for the case of CCN learning.
See Appendix \ref{app:additional-properties} for more special cases. 


%
\section{Classification consistency under instance-dependent noise}


%

Suppose one minimises the $\ellZO$-risk on the corrupted distribution.
Does this imply minimisation of the $\ellZO$-risk on the \emph{clean} distribution
\ie is the former 
\emph{consistent} for clean $\ellZO$-risk minimisation?
We will show that this is indeed true for instance-dependent noise, and for a range of losses $\ell$ beyond $\ellZO$.

%
\subsection{Relating clean and corrupted risks}

Our first step is to relate the $\ell$-risk on the clean and corrupted distributions.
Following \citet{Ghosh:2015}, we will consider
instance-dependent noise $\IDN( \D, f )$, and
losses $\ell$ that satisfy
\begin{equation}
    \label{eqn:loss-sum-constant}
    ( \forall v \in \Real ) \, \ell_{-1}( v ) + \ell_{1}( v ) = C
 \end{equation}
for some constant $C \in \Real$.
This condition was considered previously in \citet{Ghosh:2015} to study noise-robustness, and is satisfied by the zero-one, ramp, and unhinged losses.
Under these two assumptions, we can show
the clean risk is an \emph{instance-weighted} version of the corrupted risk. 
To simplify notation, for any $w \colon \XCal \to \Real_+$, let the corresponding \emph{weighted $\ell$-risk} be
$$ R^{\weight( w )}( s; \D, \ell ) = \Expectation{\X \sim M}{ w( \X ) \cdot L( \eta( \X ), s( \X ) ) }. $$
Then, we have the following, which is implicit in the proof of \citet[Theorem 1]{Ghosh:2015}.

\begin{proposition}
\label{prop:risk-weighting}
Pick any distribution $\D$, and
loss $\ell$ satisfying Equation \ref{eqn:loss-sum-constant}.
Suppose that $\DCont = \IDN( \D, f )$
for admissible $f \colon \XCal \to [0, \nicefrac[]{1}{2})$.
Then, for any scorer $\scorer$,
\begin{align*}
    R( s; \D, \ell ) = R^{\weight( w )}( s; \DCont, \ell ) + A( \D, f )
\end{align*}
where
$ w( x ) = ( 1 - 2 \cdot f( x ) )^{-1} $,
and $A( \D, f )$ is some term independent of $s$.
\end{proposition}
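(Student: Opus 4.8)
The plan is to establish the identity pointwise in $x$ and then integrate against the marginal. Since the ILN process leaves $\Pr(\X)$ unchanged, both $R(s; \D, \ell)$ and $R^{\weight(w)}(s; \DCont, \ell)$ are expectations of conditional risks over the common $\X \sim M$; it therefore suffices to show that $L(\eta(x), s(x)) = w(x) \cdot L(\etaCont(x), s(x)) + a(x)$ for every $x \in \XCal$, where $a(x)$ depends only on $f(x)$, and then put $A(\D, f) = \Expectation{\X \sim M}{a(\X)}$.

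The structural fact I would exploit is that the conditional risk is affine in its first argument: $L(\eta, v) = \ell_{-1}(v) + \eta \cdot ( \ell_1(v) - \ell_{-1}(v) )$. First I would specialise Lemma \ref{lemm:corrupt-eta-general} to the IDN case $\rho_1 \equiv \rho_{-1} \equiv f$, obtaining $\etaCont(x) = (1 - 2 f(x)) \cdot \eta(x) + f(x)$, equivalently $\eta(x) = w(x) \cdot ( \etaCont(x) - f(x) )$ with $w(x) = (1 - 2 f(x))^{-1}$; admissibility of $f$ (so $f(x) < \nicefrac{1}{2}$) guarantees $w(x)$ is well-defined and positive, hence a legitimate weight. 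Substituting this into the affine form and rewriting the $\etaCont$-term via $\etaCont(x) \cdot (\ell_1(v) - \ell_{-1}(v)) = L(\etaCont(x), v) - \ell_{-1}(v)$, the expression $L(\eta(x), v)$ becomes $w(x) \cdot L(\etaCont(x), v)$ plus a residual $\ell_{-1}(v) \cdot (1 - w(x)) - w(x) \cdot f(x) \cdot (\ell_1(v) - \ell_{-1}(v))$.

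The crux --- and the single place where the loss condition (Equation \ref{eqn:loss-sum-constant}) is indispensable --- is showing this residual does not depend on $v$. Using $1 - w(x) = -2 f(x) \cdot w(x)$, which is immediate from $w(x) = (1 - 2 f(x))^{-1}$, the residual collapses to $-w(x) \cdot f(x) \cdot (\ell_1(v) + \ell_{-1}(v))$. Invoking $\ell_1(v) + \ell_{-1}(v) = C$ then reduces it to $a(x) = -C \cdot f(x) / (1 - 2 f(x))$, which is manifestly independent of $v = s(x)$. I expect this cancellation to be the main (if brief) obstacle: without the sum-constant assumption the residual retains genuine score-dependence and no clean instance-weighting identity survives.

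Finally I would take expectations over $\X \sim M$ to conclude $R(s; \D, \ell) = R^{\weight(w)}(s; \DCont, \ell) + A(\D, f)$ with $A(\D, f) = -C \cdot \Expectation{\X \sim M}{ f(\X) / (1 - 2 f(\X)) }$, which is independent of $s$ as required.
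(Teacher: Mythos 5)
Your proof is correct and takes essentially the same route as the paper's: both invert Lemma \ref{lemm:corrupt-eta-general} to write $\eta$ in terms of $\etaCont$, substitute into the conditional risk, pull out the weighted term $w(\X) \cdot L(\etaCont(\X), s(\X))$, and invoke the constant-sum condition precisely to kill the score-dependence of the residual $-w(x) \cdot f(x) \cdot (\ell_1 + \ell_{-1})$, arriving at the same constant $A(\D, f) = -C \cdot \Expectation{\X \sim M}{f(\X)/(1 - 2 \cdot f(\X))}$. The only cosmetic difference is that you carry out the algebra pointwise in $x$ before integrating, whereas the paper performs it under the expectation via the general ILN identity (Proposition \ref{prop:general-mube}) and then specialises to the IDN case.
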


%

%
\subsection{Relating clean and corrupted regrets}

Proposition \ref{prop:risk-weighting} has an important, non-obvious implication:
under instance-specific noise, for losses satisfying Equation \ref{eqn:loss-sum-constant},
the Bayes-optimal scorers on the clean and corrupted distributions coincide.
This is a simple consequence of the fact that weighting a risk does \emph{not} affect Bayes-optimal scorers.

\begin{corollary}
\label{corr:bayes-opt-same}
Pick any distribution $\D$, and
loss $\ell$ satisfying Equation \ref{eqn:loss-sum-constant}.
Suppose that $\DCont = \IDN( \D, f )$
for admissible $f \colon \XCal \to [0, \nicefrac[]{1}{2})$.
Then,
$$ \underset{s \in \Real^{\XCal}}{\argmin} R( s; \D, \ell ) = \underset{s \in \Real^{\XCal}}{\argmin} R( s; \DCont, \ell ). $$
\end{corollary}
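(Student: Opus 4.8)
The plan is to invoke Proposition \ref{prop:risk-weighting} and then exploit the fact that multiplying a pointwise conditional risk by a strictly positive weight cannot change where the pointwise minimum is attained. First I would write, for the clean distribution, that the risk decomposes as an expectation over $\X \sim M$ of the conditional risk $L( \eta( \X ), s( \X ) )$, so that a Bayes-optimal scorer $s^*$ is precisely one that, for $M$-almost every $x$, sets $s^*(x)$ to a minimiser of $v \mapsto L( \eta( x ), v )$. The key observation is that this pointwise minimisation problem depends only on $\eta(x)$, and is unaffected by scaling the objective by any positive constant.

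Next I would apply Proposition \ref{prop:risk-weighting}, which gives $R( s; \D, \ell ) = R^{\weight( w )}( s; \DCont, \ell ) + A( \D, f )$ with $w(x) = (1 - 2 f(x))^{-1}$ and $A(\D,f)$ independent of $s$. Since $A(\D,f)$ is a constant, minimising $R( s; \D, \ell )$ over $s \in \Real^{\XCal}$ is equivalent to minimising the weighted corrupted risk $R^{\weight(w)}( s; \DCont, \ell )$. Writing the latter as $\Expectation{\X \sim M}{ w(\X) \cdot L( \etaCont(\X), s(\X) ) }$, I would then argue that because $f(x) \in [0, \nicefrac{1}{2})$ forces $w(x) = (1 - 2 f(x))^{-1} > 0$ for every $x$, the minimiser over $s$ of this weighted integral is attained, for $M$-almost every $x$, at a minimiser of $v \mapsto w(x) \cdot L( \etaCont(x), v )$, which is the same as a minimiser of $v \mapsto L( \etaCont(x), v )$ since $w(x)$ is a positive scalar that does not move the argmin.

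Finally I would tie the two ends together: the unweighted corrupted risk $R( s; \DCont, \ell )$ is also $\Expectation{\X \sim M}{ L( \etaCont(\X), s(\X) ) }$, and its Bayes-optimal scorers are exactly those minimising $v \mapsto L( \etaCont(x), v )$ pointwise. Since the weighted and unweighted problems share the same pointwise minimisers (the weight being strictly positive), their argmin sets over $\Real^{\XCal}$ coincide, and by the equivalence established via Proposition \ref{prop:risk-weighting}, both coincide with $\argmin_s R( s; \D, \ell )$. This yields the claimed equality of argmin sets.

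The step I expect to require the most care is the handoff from ``minimising the expected weighted conditional risk'' to ``minimising the pointwise conditional risk'': one must be precise that a positive pointwise weight does not alter the argmin, and that the expectation-level argmin is genuinely characterised by pointwise minimisation (since the scorer may take an arbitrary value at each $x$, there is no coupling across instances in $\Real^{\XCal}$). There is also a mild subtlety in that the argmin may be set-valued or defined only up to $M$-null sets, so I would phrase the equality of $\argmin$ sets with that understanding, noting that the positivity of $w$ guarantees the two sets agree instance-by-instance wherever $M$ has mass. The main obstacle is thus not a hard calculation but stating the pointwise-minimisation characterisation cleanly enough that the positive-weight invariance is manifest.
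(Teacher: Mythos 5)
Your proposal is correct and follows essentially the same route as the paper's own proof: invoke Proposition \ref{prop:risk-weighting} to reduce the clean risk to a constant shift of the weighted corrupted risk, then observe that a strictly positive pointwise weight (here $w(x) = (1 - 2 f(x))^{-1} \geq 1$) cannot change the Bayes-optimal scorers, since minimisation over $\Real^{\XCal}$ decouples pointwise. Your write-up is simply a more careful spelling-out of the paper's one-line justification that ``weighting does not affect the Bayes-optimal scorers for a risk.''
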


For the case of $\ell = \ellZO$, Corollary \ref{corr:bayes-opt-same} 
implies that the optimal\footnote{This assumes minimisation over $\Real^{\XCal}$, and may not hold with a restricted function class; see Appendix \ref{app:bayes-opt}.} classifiers on the two distribution coincide: $\sign( 2\eta( x ) - 1 ) = \sign( 2\etaCont( x ) - 1 )$.
In fact, we can go further, 
and establish a relation between the clean and corrupted \emph{regrets} of an arbitrary scorer.

\begin{proposition}
\label{prop:regret-bound}
Pick any distribution $\D$, and
loss $\ell$ satisfying Equation \ref{eqn:loss-sum-constant}.
Suppose that $\DCont = \IDN( \D, f )$
for admissible $f \colon \XCal \to [0, \nicefrac[]{1}{2} )$
with
$ \rho_{\mathrm{max}} = \max_{x \in \XCal} f( x ) $. 
Then, for any $\scorer$, 
$$ \reg( s; \D, \ell ) \leq \frac{1}{1 - 2 \cdot \rho_{\mathrm{max}}} \cdot \reg( s; \DCont, \ell ). $$
Further, if $\sup_{x \in \XCal} \reg( \etaCont( x ), s( x ), s^*( x ) ) \leq R < +\infty$, then for any $\alpha \in [0, 1]$,
$$ \reg( s; \D, \ell ) \leq \frac{1}{( 1 - 2 \cdot \rho_{\mathrm{max}} )^{1 - \alpha}} \cdot R^{\alpha} \cdot \left( \Expectation{\X \sim M}{ w( \X ) } \right)^\alpha \cdot \left( \reg( s; \DCont, \ell ) \right)^{1 - \alpha}. $$
\end{proposition}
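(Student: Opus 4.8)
The plan is to derive both inequalities as direct consequences of the weighting identity of Proposition~\ref{prop:risk-weighting}, combined with the Bayes-optimal coincidence of Corollary~\ref{corr:bayes-opt-same}, which together turn the clean regret into an explicit \emph{weighted} corrupted regret. First I would fix a Bayes-optimal scorer $s^*$ for $\D$; by Corollary~\ref{corr:bayes-opt-same} this same $s^*$ is Bayes-optimal for $\DCont$, so the pointwise conditional regret $\reg(\etaCont(x), s(x), s^*(x)) = L(\etaCont(x), s(x)) - L(\etaCont(x), s^*(x))$ is nonnegative for $M$-almost every $x$. Applying Proposition~\ref{prop:risk-weighting} to both $s$ and $s^*$ and subtracting, so that the $s$-independent term $A(\D, f)$ cancels, gives
\begin{align*}
\reg(s; \D, \ell) &= R^{\weight(w)}(s; \DCont, \ell) - R^{\weight(w)}(s^*; \DCont, \ell) \\
&= \Expectation{\X \sim M}{ w(\X) \cdot \reg(\etaCont(\X), s(\X), s^*(\X)) },
\end{align*}
where $w(x) = (1 - 2 f(x))^{-1}$. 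Writing $g(x) = \reg(\etaCont(x), s(x), s^*(x)) \geq 0$, the same Corollary identifies $\Expectation{\X \sim M}{ g(\X) } = \reg(s; \DCont, \ell)$.

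For the first bound I would simply note that $f(x) \leq \rho_{\mathrm{max}}$ forces $w(x) \leq (1 - 2\rho_{\mathrm{max}})^{-1}$ uniformly in $x$; pulling this constant out of $\Expectation{\X \sim M}{ w(\X) g(\X) }$ yields $\reg(s; \D, \ell) \leq (1 - 2\rho_{\mathrm{max}})^{-1} \reg(s; \DCont, \ell)$. Before turning to the second bound I would record two elementary estimates on the same quantity: using $g \leq R$ gives $\Expectation{\X \sim M}{ w(\X) g(\X) } \leq R \cdot \Expectation{\X \sim M}{ w(\X) }$, while the uniform weight bound just derived gives $\Expectation{\X \sim M}{ w(\X) g(\X) } \leq (1 - 2\rho_{\mathrm{max}})^{-1} \Expectation{\X \sim M}{ g(\X) }$.

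The second bound is then an interpolation between these two estimates. I would split $\Expectation{\X \sim M}{ w(\X) g(\X) } = (\Expectation{\X \sim M}{ w(\X) g(\X) })^{\alpha} \cdot (\Expectation{\X \sim M}{ w(\X) g(\X) })^{1 - \alpha}$, bound the first factor (raised to $\alpha \geq 0$) by the $g \leq R$ estimate and the second factor (raised to $1 - \alpha \geq 0$) by the uniform-weight estimate, and multiply. This reproduces the claimed inequality exactly, with the $R^{\alpha}$, $(\Expectation{\X \sim M}{ w(\X) })^{\alpha}$, and $(\reg(s; \DCont, \ell))^{1 - \alpha}$ factors appearing in the right places.

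There is no deep obstacle here: once the weighting identity is in hand, everything reduces to pointwise manipulations, and the interpolation needs only nonnegativity of the two factors together with $\alpha, 1 - \alpha \geq 0$. The one place that genuinely requires care is the justification that $g(x) \geq 0$ and that $\Expectation{\X \sim M}{ g(\X) }$ is literally the corrupted regret $\reg(s; \DCont, \ell)$ rather than some differently-anchored excess risk; both facts hinge on $s^*$ being \emph{simultaneously} Bayes-optimal for $\D$ and $\DCont$, i.e.\ on Corollary~\ref{corr:bayes-opt-same}.
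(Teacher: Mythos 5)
Your proof is correct, and its skeleton---the weighted-regret identity obtained by applying Proposition~\ref{prop:risk-weighting} to both $s$ and $s^*$ so that $A(\D,f)$ cancels, with Corollary~\ref{corr:bayes-opt-same} guaranteeing that $s^*$ is simultaneously Bayes-optimal for $\DCont$ (hence the conditional regret $g(x) \geq 0$ almost everywhere and $\Expectation{\X \sim M}{g(\X)} = \reg(s;\DCont,\ell)$)---together with your first bound, coincides exactly with the paper's argument. Where you genuinely diverge is the interpolated bound. The paper works \emph{pointwise under the integral}: it writes the integrand as $m(x)^{\alpha} w(x) \cdot m(x)^{1-\alpha} r(x)$, normalises $w$ by its maximum $(1-2\rho_{\mathrm{max}})^{-1}$ and $r$ by $R$, uses $t \leq t^{\alpha}$ for $t \in [0,1]$, and then invokes H\"{o}lder's inequality with exponents $1/\alpha$ and $1/(1-\alpha)$ to split $\int (m w)^{\alpha}(m r)^{1-\alpha}$ into $\left(\Expectation{\X \sim M}{w(\X)}\right)^{\alpha} \left(\reg(s;\DCont,\ell)\right)^{1-\alpha}$. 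You instead derive two \emph{global} linear bounds on the single scalar $A = \Expectation{\X \sim M}{w(\X)\, g(\X)}$, namely $A \leq R \cdot \Expectation{\X \sim M}{w(\X)}$ and $A \leq (1-2\rho_{\mathrm{max}})^{-1}\,\reg(s;\DCont,\ell)$, and combine them via the elementary observation that $0 \leq A \leq B$ and $A \leq C$ imply $A = A^{\alpha} A^{1-\alpha} \leq B^{\alpha} C^{1-\alpha}$, using only the monotonicity of $t \mapsto t^{\alpha}$ and $t \mapsto t^{1-\alpha}$ on $\Real_+$. Both routes produce the identical inequality; yours is the more elementary (no H\"{o}lder's inequality, no manipulation of densities), and it makes transparent that the parameterised bound is literally a geometric interpolation between the uniform-weight bound and the bounded-conditional-regret bound, whereas the paper's pointwise H\"{o}lder argument is the heavier but more standard machinery that one would reach for if the two bounds to be interpolated were not already available in closed form.
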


Since $\left( \Expectation{\X \sim M}{ w( \X ) } \right)^\alpha \leq M^{\alpha}$ trivially, the dependence on $\rho_{\mathrm{max}}$ is less stringent in the second bound above, at the expense of a possibly worse dependence on the corrupted regret.
\arxivOnly{It is intuitive that one down-weight the contribution of large weights that occur on instances with low marginal probability.}
Note that we trivially have $R = 1$ for the case of $\ell = \ellZO$, since the loss is bounded.

By Proposition \ref{prop:regret-bound}, for\footnote{For a simpler proof of Proposition \ref{prop:regret-bound} that is specific to $\ellZO$, see Appendix \ref{app:regret-01}.} $\ell = \ellZO$, if we can find a sequence $\{ s_n \}$ of scorers satisfying $\reg( s_n; \DCont, \ellZO ) \to 0$,
then we also guarantee $\reg( s; \D, \ellZO ) \to 0$, \ie we have consistency of classification on the \emph{clean} distribution.
One can guarantee $\reg( s; \DCont, \ellZO ) \to 0$ by minimising an appropriate convex surrogate to $\ellZO$ on $\DCont$, owing to standard classification calibration results \citep{Zhang:2004,Bartlett:2006}.
This surrogate does \emph{not} have to satisfy Equation \ref{eqn:loss-sum-constant}.

In the course of proving Proposition \ref{prop:risk-weighting}, we actually establish a more general relation between clean and corrupted risks (Proposition \ref{prop:general-mube} in Appendix) that holds for ILN noise, and losses not satisfying Equation \ref{eqn:loss-sum-constant}.
This general result cannot however be used to prove a meaninguful regret bound beyond the IDN case with losses satisfying Equation \ref{eqn:loss-sum-constant}; see Appendix \ref{app:clean-corrupt} for a discussion.



%
\subsection{Beyond misclassification error?}

Can the above consistency result 
be extended to generalised classification performance measures such as balanced error and $F$-score?
Disappointingly, the answer is no.
The reason is simple: 
for measures beyond the 0-1 loss, the following shows that Bayes-optimal classifier itself will not coincide on the clean and corrupted distributions, so that no analogue of Corollary \ref{corr:bayes-opt-same} can possibly hold.

\begin{proposition}
\label{prop:clean-corrupt-threshold-idn}
Pick any distribution $\D$.
Suppose that $\DCont = \IDN( \D, f )$
for admissible $f \colon \XCal \to [0, \nicefrac[]{1}{2}).$
Then, for any $t \in [0, 1]$,
$$ ( \forall x \in \XCal ) \, \eta( x ) > t \iff \etaCont( x ) > t + f( x ) \cdot (1 - 2 \cdot t). $$
\end{proposition}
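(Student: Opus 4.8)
The plan is to reduce everything to the explicit formula for $\etaCont$ supplied by Lemma \ref{lemm:corrupt-eta-general}, and then to observe that the claimed threshold condition is nothing more than a strictly increasing affine reparametrisation of the condition $\eta(x) > t$. First I would specialise Lemma \ref{lemm:corrupt-eta-general} to the IDN case, where $\rho_1 \equiv \rho_{-1} \equiv f$. Substituting into Equation \ref{eqn:corrupt-eta-general} and collecting the $\eta(x)$ terms yields the clean affine relationship
$$ \etaCont(x) = (1 - 2 f(x)) \cdot \eta(x) + f(x), $$
valid for every $x \in \XCal$.

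The crucial structural fact is that admissibility of $f$ — here the IDN requirement $f(x) < \nicefrac[]{1}{2}$ — guarantees $1 - 2 f(x) > 0$. Because the map $\eta \mapsto (1 - 2 f(x)) \cdot \eta + f(x)$ is therefore strictly increasing in $\eta$, applying it to both sides of a strict inequality preserves that inequality as a genuine equivalence. Concretely, I would fix $x \in \XCal$ and $t \in [0,1]$, multiply $\eta(x) > t$ through by the positive scalar $1 - 2 f(x)$, and add $f(x)$ to both sides; the left-hand side becomes exactly $\etaCont(x)$ by the displayed relation, giving
$$ \eta(x) > t \iff \etaCont(x) > (1 - 2 f(x)) \cdot t + f(x). $$

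Finally I would simplify the right-hand threshold via $(1 - 2 f(x)) \cdot t + f(x) = t + f(x) \cdot (1 - 2t)$, which matches the statement verbatim. There is no substantive obstacle here beyond bookkeeping; the only point that genuinely invokes a hypothesis is the positivity of $1 - 2 f(x)$, and it is worth flagging explicitly that this is precisely where admissibility enters. Without it the affine map could be constant or orientation-reversing, the equivalence would degenerate, and — consistent with the surrounding discussion motivating why no analogue of Corollary \ref{corr:bayes-opt-same} survives for non-$\ellZO$ measures — the corrupted threshold would cease to track the clean threshold instance by instance.
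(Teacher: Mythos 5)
Your proof is correct and matches the paper's argument in substance: the paper proves the general ILN version (Proposition \ref{prop:clean-corrupt-threshold}) from Lemma \ref{lemm:corrupt-eta-general} using exactly the same rearrangement and the same positivity of $1 - \rho_1(x) - \rho_{-1}(x)$, and then obtains Proposition \ref{prop:clean-corrupt-threshold-idn} by plugging in $\rho_{\pm 1} \equiv f$. You merely perform the specialisation to IDN before the algebra rather than after, which is an immaterial difference.
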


For any $t \neq \nicefrac[]{1}{2}$, Proposition \ref{prop:clean-corrupt-threshold-idn} implies that classification on the corrupted distribution \emph{requires knowledge of the flipping function $f( x )$}
\ie only the 0-1 threshold is preserved under corruption.

\subsection{Relation to existing work}

The results of this section generalise those for the SLN model in \citet{Natarajan:2013}.
In particular, for the SLN model
Proposition \ref{prop:risk-weighting},
reduces to \citet[Theorem 9]{Natarajan:2013},
Corollary \ref{corr:bayes-opt-same} to \citet[Corollary 10]{Natarajan:2013}, and
Proposition \ref{prop:regret-bound} to \citet[Theorem 11]{Natarajan:2013}.

In the instance-dependent noise case, 
for linearly separable $\D$, \citet{Awasthi:2015} observed that the Bayes-optimal classifier is unchanged,
and
\citet[Theorem 1]{Ghosh:2015} established that if $\min_s R( s; \D, \ellZO ) = 0$, \ie if $\D$ is separable, then the 0-1 risk minimiser using \emph{any} function class is unaffected.
Corollary \ref{corr:bayes-opt-same} above shows that if we work with a suitably rich function class, then we have equivalence of risk minimisers even for non-separable $\D$.
\citet[Theorem 1]{Ghosh:2015} 
showed that one can make guarantees about the degradation of risk minimisation wrt losses $\ell$ satisfying Equation \ref{eqn:loss-sum-constant}.
This result only holds for the risk wrt the clean distribution $\D$, and does \emph{not} give a regret bound.
\arxivOnly{
More precisely, \citet[Theorem 1]{Ghosh:2015} (generalised to the case of general instance- and label-dependent noise in \citet{brendan_thesis}),
showed the following.

\begin{theorem}[{\citep[Theorem 1]{Ghosh:2015}}]
\label{thm:ghosh}
Pick any distribution $\D$ and loss $\ell$ satisfying Equation \ref{eqn:loss-sum-constant}.
Let $\DCont = \IDN( \D, f )$ for some admissible $f \colon \XCal \to [0, \nicefrac[]{1}{2})$.
Then, for any function class $\SCal \subseteq \Real^{\XCal}$,
$$ R( \Contaminator{s}^*; \D, \ell ) \leq \frac{R( s^*; \D, \ell )}{1 - 2 \cdot \max_{x} f( x )} $$
where
\begin{align*}
   s^* &= \underset{s \in \SCal}{\argmin} R( s; \D, \ell ) \\
   \Contaminator{s}^* &= \underset{s \in \SCal}{\argmin} R( s; \IDN( \D, f ), \ell ).
 \end{align*}
\end{theorem}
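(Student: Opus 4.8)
The plan is to reduce everything to a single pointwise identity between the clean and corrupted conditional risks, and then exploit the sum-to-constant property of $\ell$ together with the boundedness $f(x) < \nicefrac{1}{2}$. First I would invoke Lemma \ref{lemm:corrupt-eta-general} with $\rho_1 \equiv \rho_{-1} \equiv f$, which gives $\etaCont(x) = (1 - 2 f(x)) \cdot \eta(x) + f(x)$. Substituting this into $L(\etaCont(x), s(x)) = \etaCont(x) \cdot \ell_1(s(x)) + (1 - \etaCont(x)) \cdot \ell_{-1}(s(x))$ and collecting terms, the assumption $\ell_1(v) + \ell_{-1}(v) = C$ makes the residual cross terms telescope into a multiple of $C$, yielding $L(\etaCont(x), s(x)) = (1 - 2 f(x)) \cdot L(\eta(x), s(x)) + C \cdot f(x)$. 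Taking expectations over $\X \sim M$ then gives the risk-level relation $R(s; \DCont, \ell) = \Expectation{\X \sim M}{(1 - 2 f(\X)) \cdot L(\eta(\X), s(\X))} + C \cdot \Expectation{\X \sim M}{f(\X)}$, where the additive term is independent of $s$; this is exactly the content underlying Proposition \ref{prop:risk-weighting}.

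The second step is a two-sided sandwich. Writing $\rho_{\mathrm{max}} = \max_{x} f(x) < \nicefrac{1}{2}$, admissibility gives $0 < 1 - 2\rho_{\mathrm{max}} \leq 1 - 2 f(x) \leq 1$ for all $x$; since $\ell$ is non-negative so is $L(\eta, \cdot)$, and hence the integrand $(1 - 2 f(\X)) \cdot L(\eta(\X), s(\X))$ is squeezed between $(1 - 2\rho_{\mathrm{max}}) \cdot L(\eta(\X), s(\X))$ and $L(\eta(\X), s(\X))$. Subtracting the $s$-independent constant $C \cdot \Expectation{\X \sim M}{f(\X)}$ from the risk identity, this produces, for every scorer $s$, the double inequality $(1 - 2\rho_{\mathrm{max}}) \cdot R(s; \D, \ell) \leq R(s; \DCont, \ell) - C \cdot \Expectation{\X \sim M}{f(\X)} \leq R(s; \D, \ell)$.

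The final step is to apply the two halves of this sandwich to the two different minimisers and use optimality of $\Contaminator{s}^*$ to bridge them. Applying the lower bound at $s = \Contaminator{s}^*$, then the defining inequality $R(\Contaminator{s}^*; \DCont, \ell) \leq R(s^*; \DCont, \ell)$ (as $\Contaminator{s}^*$ minimises the corrupted risk over $\SCal$), and finally the upper bound at $s = s^*$, the shared additive constant $C \cdot \Expectation{\X \sim M}{f(\X)}$ cancels at each transition and one obtains $(1 - 2\rho_{\mathrm{max}}) \cdot R(\Contaminator{s}^*; \D, \ell) \leq R(s^*; \D, \ell)$. Dividing through by $1 - 2\rho_{\mathrm{max}} > 0$ yields the claim.

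I expect the only real subtlety to be this last asymmetric use of the sandwich: the lower bound must be applied to the corrupted minimiser $\Contaminator{s}^*$ and the upper bound to the clean minimiser $s^*$, so that the chain of inequalities points in the right direction. The fact that the additive term is \emph{identical} for both scorers --- a direct consequence of the sum-to-constant condition on $\ell$ --- is precisely what lets it cancel, and is what makes the argument go through for an arbitrary (possibly restricted) function class $\SCal$, without ever appealing to the Bayes-optimal scorers of Corollary \ref{corr:bayes-opt-same}.
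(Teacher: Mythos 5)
Your proof is correct, and it takes essentially the approach the paper itself points to: the paper does not reprove this result (it is quoted from Ghosh et al.), but it remarks that Proposition \ref{prop:risk-weighting} is implicit in that proof, and your pointwise identity $L(\etaCont(x), s(x)) = (1 - 2 f(x)) \cdot L(\eta(x), s(x)) + C \cdot f(x)$, integrated over $M$, is precisely that risk-weighting relation. Your subsequent sandwich $(1 - 2\rho_{\mathrm{max}}) \cdot R(s; \D, \ell) \leq R(s; \DCont, \ell) - C \cdot \Expectation{\X \sim M}{f(\X)} \leq R(s; \D, \ell)$ (valid since the paper defines losses to take values in $\Real_+$), combined with optimality of $\Contaminator{s}^*$ over $\SCal$ in the asymmetric order you describe, is exactly the standard argument completing the bound, so there is no gap.
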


Theorem \ref{thm:ghosh} implies that for instance-dependent noise, the $\ell$-risk minimiser (for suitable $\ell$) will not differ considerably on the clean and corrupted samples.
But a limitation of the result is that one cannot guarantee \emph{consistency} wrt, \eg 0-1 loss, of using the result of $\ell$-risk minimisation on the corrupted samples.
This is because the above only holds for the risk wrt the clean distribution $\D$.
It does \emph{not} let us bound the clean regret $\reg( s; \D, \ell )$ in terms of the corrupted regret $\reg( s; \ILN( \D, \rho ), \ell )$.
}

Proposition \ref{prop:clean-corrupt-threshold-idn} generalises \citet[Theorem 9]{Natarajan:2013}, \citet[Section F.1]{Menon:2015} which were for the CCN and SLN models.
For the CCN model, \citet{Menon:2015} established that the Bayes-optimal classifier for the \emph{balanced error} is also unaffected by corruption.
One might expect this to carry over to the instance-dependent case,
but perhaps surprisingly, this is not the case.

\section{AUC consistency under the BCN$^{+}$ model}

Bipartite ranking is concerned with the ranking risk 
$$ \RRank( s; \D ) = \Expectation{\X \sim P, \X' \sim Q}{ \ellZO_1( s( \X ) - s( \X' ) ) } $$
\emph{viz.} one minus the area under the ROC curve (AUC) \citep{Agarwal:2005}.
Can an analogous regret bound to Proposition \ref{prop:regret-bound} be established for this risk?
Unfortunately, without further assumptions, this is not possible. 
The reason is as before: to establish a regret bound, the Bayes-optimal scorers must coincide.
As the AUC is optimised by any scorer that is order preserving for $\eta$ \citep{Clemencon:2008},
the corrupted AUC will be optimised by any scorer that is order preserving for $\etaCont$.
For the two to coincide, we will have to ensure that $\etaCont$ is order preserving for $\eta$.
Intuitively, this will not be true in general, since there is no necessary relationship between $\rho_{\pm 1}$ and $\eta$;
see Appendix \ref{app:order-fails}.

\subsection{Relating clean and corrupted AUC regret under the BCN$^+$ model}
\label{sec:order-preservation}

It is of interest to determine conditions under which we \emph{can} guarantee order preservation of $\eta$.
Intuitively, this will require there being \emph{some} dependence between the flip functions and $\eta$.
Fortunately, the previously introduced $\BCNPlus$ model is a feasible candidate.

\begin{proposition}
\label{prop:eta-monotone}
Pick any distribution $\D$.
Suppose $\DCont = \BCNPlus( \D, f_{-1}, f_{1}, s )$ where $( f_{-1}, f_{1}, s, \eta )$ are $\BCNPlus$-admissible.
Then,
$$ ( \forall x, x' \in \XCal ) \, \eta( x ) < \eta( x' ) \implies \etaCont( x ) < \etaCont( x' ) $$
so that $\eta = \phi \circ \etaCont$ for some non-decreasing $\phi$.
\end{proposition}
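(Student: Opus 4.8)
The plan is to reduce the claim to a one-dimensional monotonicity statement in the score $z = s(x)$, and then use the three $\BCNPlus$ conditions to handle the delicate ``crossing'' regime where $\eta$ passes through $\tfrac{1}{2}$. First I would use condition (a), which (as noted after Definition \ref{defn:bcn-plus}) gives $\eta = u \circ s$ for a non-decreasing $u$, together with $\rho_y = f_y \circ s$, to write everything as a function of the scalar $z$. By Lemma \ref{lemm:corrupt-eta-general}, setting $\sigma(z) = f_1(z) + f_{-1}(z)$ and $\Delta(z) = f_1(z) - f_{-1}(z)$, the corrupted class-probability becomes
\[
  g(z) \defEq (1 - f_1(z))\, u(z) + f_{-1}(z)\,(1 - u(z)) = u(z)\,(1 - \sigma(z)) + \tfrac{1}{2}\bigl(\sigma(z) - \Delta(z)\bigr).
\]
Since the hypothesis $\eta(x) < \eta(x')$ forces $z = s(x) < s(x') = z'$ by condition (a), and then $u(z) = \eta(x) < \eta(x') = u(z')$, it suffices to show $g(z) < g(z')$ whenever $z < z'$ with $a := u(z) < u(z') =: a'$.

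Second, I would discharge the most awkward term using condition (c). Since $z < z'$ and $\Delta$ is non-increasing, $\Delta(z') \le \Delta(z)$, so the contribution $-\tfrac{1}{2}(\Delta(z') - \Delta(z))$ to $g(z') - g(z)$ is non-negative and can be dropped, reducing the goal to showing
\[
  a'(1 - \sigma') - a(1 - \sigma) + \tfrac{1}{2}(\sigma' - \sigma) > 0,
\]
where $\sigma = \sigma(z),\ \sigma' = \sigma(z')$. This quantity admits two equivalent regroupings,
\[
  (a' - a)(1 - \sigma') + (\sigma - \sigma')\,\bigl(a - \tfrac{1}{2}\bigr) \qquad\text{and}\qquad (a' - a)(1 - \sigma) + (\sigma - \sigma')\,\bigl(a' - \tfrac{1}{2}\bigr),
\]
in each of which the leading term is \emph{strictly} positive: $a' - a > 0$ by hypothesis, and $1 - \sigma, 1 - \sigma' > 0$ by admissibility (Assumption \ref{ass:total-noise}). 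It therefore remains only to pick, for the case at hand, a regrouping whose cross term $(\sigma - \sigma')(\,\cdot - \tfrac{1}{2})$ is non-negative.

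The main obstacle is precisely controlling this cross term across the crossing regime $a \le \tfrac{1}{2} \le a'$, where $\sigma$ is non-monotone along $[z,z']$, so neither $\sigma - \sigma'$ nor a single factor $(\,\cdot - \tfrac{1}{2})$ has a fixed sign a priori. Here condition (b) is exactly what is needed: it makes $\sigma = f_1 + f_{-1}$ non-decreasing on $\{u \le \tfrac{1}{2}\}$ and non-increasing on $\{u \ge \tfrac{1}{2}\}$, i.e.\ unimodal with peak at $u = \tfrac{1}{2}$. I would then argue by the sign of $\sigma - \sigma'$: if $\sigma = \sigma'$ both cross terms vanish; if $\sigma < \sigma'$ then $a \le \tfrac{1}{2}$ (otherwise both $z,z'$ lie in $\{u \ge \tfrac{1}{2}\}$, where $\sigma$ is non-increasing, forcing $\sigma \ge \sigma'$), so the first regrouping has a non-negative cross term; symmetrically, if $\sigma > \sigma'$ then $a' \ge \tfrac{1}{2}$ and the second regrouping works. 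This subsumes the pure low and high regions, yielding $g(z') > g(z)$ in every case, hence $\eta(x) < \eta(x') \implies \etaCont(x) < \etaCont(x')$.

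Finally, I would deduce the factorisation $\eta = \phi \circ \etaCont$ by defining $\phi$ on the range of $\etaCont$ via $\phi(\etaCont(x)) = \eta(x)$. This is well-defined since $\eta(x) \neq \eta(x')$ would yield $\etaCont(x) \neq \etaCont(x')$ by the strict implication just proved (applied to the appropriately ordered pair); and it is non-decreasing since $\etaCont(x) < \etaCont(x')$ together with $\eta(x) > \eta(x')$ would, via the same implication on the swapped pair, give $\etaCont(x') < \etaCont(x)$, a contradiction.
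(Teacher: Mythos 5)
Your proposal is correct, and it is essentially the paper's own argument: the paper proves this via Lemmas \ref{lemm:eta-diff} and \ref{lemm:eta-diff-bound}, which use the same two decompositions of $\etaCont(x') - \etaCont(x)$ into a strictly positive scaling term $(1-\sigma)\,(\eta(x')-\eta(x))$ (or $(1-\sigma')\,(\eta(x')-\eta(x))$) plus a correction --- your two regroupings are algebraically identical to the paper's correction terms $-\Delta_1$ and $-\Delta_2$ --- with condition (c) neutralising the flip-function asymmetry and condition (b) forcing, by contradiction, the benign sign of whichever cross term is selected. The only differences are organisational: you inline the helper lemmas into a single one-dimensional argument and case on the sign of $\sigma(z)-\sigma(z')$ rather than on $f_{-1}(z)-f_{-1}(z')$, which changes nothing of substance.
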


Proving Proposition \ref{prop:eta-monotone} relies on establishing a relation between $\etaCont( x ) - \etaCont( x' )$ and its counterpart on the clean distribution\footnote{For some special cases, the proofs simplify considerably; see Appendix \ref{app:simplified-monotone}.}.
We emphasise that Condition (c) in the $\BCNPlus$ model is vital to the result; see Appendix \ref{app:order-fails} for an example where removing this condition leads to a forfeit of order preservation.

Using Proposition \ref{prop:eta-monotone}, we can deduce that the $\BCNPlus$ model affords a suitable AUC regret bound.

\begin{proposition}
\label{prop:auc-regret}
Pick any distribution $\D$.
Suppose that $\DCont = \BCNPlus( \D, f_{-1}, f_{1}, s )$ where $( f_{-1}, f_{1}, s, \eta )$ are $\BCNPlus$-admissible,
and the total noise bound (Assumption \ref{ass:total-noise}) is
$$ \rho_{\mathrm{max}} = \frac{1}{2} \cdot \max_{x \in \XCal} ( \rho_1( x ) + \rho_{-1}( x ) ) < \frac{1}{2}. $$
Then, for any scorer $\scorer$,
$$ \reg_{\mathrm{rank}}( s; \D ) \leq \frac{\piCont \cdot (1 - \piCont)}{\pi \cdot (1 - \pi)} \cdot \frac{1}{1 - 2 \cdot \rho_{\mathrm{max}}} \cdot \reg_{\mathrm{rank}}( s; \DCont ) $$
where $\reg_{\mathrm{rank}}$ denotes the excess ranking risk of a scorer $s$.
\end{proposition}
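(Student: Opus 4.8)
The plan is to exploit the structural fact supplied by Proposition \ref{prop:eta-monotone}: since $\etaCont$ preserves the strict order of $\eta$, the Bayes-optimal scorers for the clean and corrupted AUC coincide, and — crucially — for a \emph{fixed} scorer $s$ the set of pairs that $s$ misranks relative to the optimal order is \emph{the same} under $\D$ and $\DCont$. I would therefore write both ranking regrets in the standard Clémençon--Lugosi--Vayatis variational form and reduce the whole claim to a single pointwise inequality comparing $|\eta(x)-\eta(x')|$ with $|\etaCont(x)-\etaCont(x')|$.

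Concretely, using $dP = (\eta/\pi)\,dM$ and $dQ = ((1-\eta)/(1-\pi))\,dM$ to rewrite $\RRank(s;\D)$ as an integral over $M \times M$, and then symmetrising over the two orderings of each unordered pair, one obtains
\[ \reg_{\mathrm{rank}}( s; \D ) = \frac{1}{2 \pi (1 - \pi)} \cdot \Expectation{\X, \X' \sim M}{ |\eta( \X ) - \eta( \X' )| \cdot \Psi( s, \X, \X' ) }, \]
where $\Psi( s, x, x' ) = \indicator{ (s(x) - s(x'))(\eta(x) - \eta(x')) < 0 }$ records whether $s$ misorders the pair relative to $\eta$ (with the usual factor $\tfrac12$ on $s$-ties, which does not affect the argument). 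The identical computation for $\DCont$ yields the same expression with $\eta \mapsto \etaCont$ and $\pi \mapsto \piCont$; and because the optimal order of $\etaCont$ equals that of $\eta$ on every pair with $\eta(x) \neq \eta(x')$, the misranking weight $\Psi$ appearing in $\reg_{\mathrm{rank}}(s;\DCont)$ is \emph{the same} $\Psi$.

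Granting the pointwise bound $|\eta(x) - \eta(x')| \leq (1 - 2 \rho_{\mathrm{max}})^{-1} |\etaCont(x) - \etaCont(x')|$, the result follows by substitution: pairs with $\eta(x) = \eta(x')$ contribute zero on the clean side and a nonnegative amount on the corrupted side (so dropping them only helps), while on the remaining pairs $\Psi$ is shared, and the prefactors combine as $\frac{1}{2\pi(1-\pi)} \cdot \frac{1}{1-2\rho_{\mathrm{max}}} \cdot 2 \piCont(1-\piCont)$, which is exactly $\frac{\piCont(1-\piCont)}{\pi(1-\pi)} \cdot \frac{1}{1-2\rho_{\mathrm{max}}}$. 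Thus the entire proposition rests on that pointwise inequality, which is the step I expect to be the main obstacle — it is precisely the quantitative refinement of the monotonicity behind Proposition \ref{prop:eta-monotone}.

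To prove it, take WLOG $\eta(x) < \eta(x')$, so order-preservation gives $s(x) < s(x')$; writing $z = s(x)$, $z' = s(x')$ and using $\rho_y(x) = f_y(z)$, $\eta(x) = u(z)$, Lemma \ref{lemm:corrupt-eta-general} expresses $\etaCont$ as $\rho_{-1}(z) + (1 - \rho_1(z) - \rho_{-1}(z))\,u(z)$. The claim becomes that $h(z) \defEq \etaCont(z) - (1 - 2\rho_{\mathrm{max}})\,u(z) = \rho_{-1}(z) + b(z)\,u(z)$, with $b(z) = 2\rho_{\mathrm{max}} - \rho_1(z) - \rho_{-1}(z) \geq 0$, is non-decreasing in $z$. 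Expanding $h(z') - h(z)$ and discarding the manifestly nonnegative term $b \cdot (u(z') - u(z))$ reduces matters to an endpoint inequality of the form $(\rho_{-1}(z') - \rho_{-1}(z))(1 - u) - (\rho_1(z') - \rho_1(z))\,u \geq 0$; Condition (c) ($\Delta = f_1 - f_{-1}$ non-increasing) gives $\rho_1(z') - \rho_1(z) \leq \rho_{-1}(z') - \rho_{-1}(z)$, which collapses this to $(\rho_{-1}(z') - \rho_{-1}(z))(1 - 2u) \geq 0$, and Condition (b) (unimodality of $f_{-1}$ about the $\eta = \nicefrac{1}{2}$ level) settles the sign on each of the regions $\{u \leq \nicefrac{1}{2}\}$ and $\{u \geq \nicefrac{1}{2}\}$ separately, using the appropriate endpoint in each. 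The genuine difficulty is the pair straddling the $\eta = \nicefrac{1}{2}$ threshold, where $f_{-1}$ is neither monotone nor sign-definite; I would resolve this by inserting the crossing point $z_0$ with $u(z_0) = \nicefrac{1}{2}$ and chaining $h(z) \leq h(z_0) \leq h(z')$, handling the degenerate case where $u$ jumps across $\nicefrac{1}{2}$ without attaining it by a one-sided limiting argument.
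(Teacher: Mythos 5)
Your proposal reproduces the paper's own proof structure almost exactly: the paper likewise writes both regrets in the Cl\'{e}men\c{c}on/Agarwal variational form, uses Proposition \ref{prop:eta-monotone} to argue the misranking indicator can be shared between the clean and corrupted expressions (with the same treatment of $\eta$-ties: zero contribution on the clean side, nonnegative on the corrupted side), reduces everything to the pointwise inequality $|\eta(x)-\eta(x')| \leq (1-2\rho_{\mathrm{max}})^{-1} \cdot |\etaCont(x)-\etaCont(x')|$, and closes with the same prefactor algebra. That pointwise inequality is exactly the paper's Lemma \ref{lemm:eta-diff-bound}, and your reduction of it --- expressing $\etaCont$ via Lemma \ref{lemm:corrupt-eta-general}, using Condition (c) to collapse the endpoint remainder to a term of the form $(f_{-1}(z') - f_{-1}(z)) \cdot (1 - 2u)$ with $u$ one of the two endpoints $u(z), u(z')$ --- is also the paper's reduction.

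The one place you diverge is the final sign argument, and there your proposal has a gap. You split by the regions $\{u \leq \nicefrac[]{1}{2}\}$ and $\{u \geq \nicefrac[]{1}{2}\}$ and handle a straddling pair $u(z) \leq \nicefrac[]{1}{2} \leq u(z')$ by chaining your $h$ through a crossing point $z_0$ with $u(z_0) = \nicefrac[]{1}{2}$. When no such point exists (when $u$ jumps over $\nicefrac[]{1}{2}$), the one-sided limiting argument you sketch does not close: whichever side $u(z_0)$ falls on, one of the links $(z,z_0)$ or $(z_0,z')$ is itself a straddling pair, and approximating that link by nearby points keeps it straddling, so the limit leaves you facing exactly the configuration you were trying to avoid (and $h$, $u$, $f_{\pm 1}$ need not be continuous, so limits do not transfer the bound anyway). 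The fix is simpler than the machinery you propose and needs neither a crossing point nor continuity: in the straddling case, case on the sign of $f_{-1}(z') - f_{-1}(z)$. If it is nonnegative, take the endpoint $u(z)$, so $1 - 2 u(z) \geq 0$ and the product is nonnegative; if it is negative, take the endpoint $u(z')$, so $1 - 2 u(z') \leq 0$ and the product is again nonnegative. This sign-casing on $f_{-1}(z') - f_{-1}(z)$, with Condition (b) used contrapositively to locate an endpoint of the right sign, is precisely how the paper proves Lemma \ref{lemm:eta-diff-bound} for all pairs at once, with no special treatment of the straddling configuration.
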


Thus, under the BCN$^{+}$ model, maximising AUC on the corrupted sample is consistent for maximisation on the clean sample.
As before, one can appeal to surrogate regret bounds for the AUC \citep{Agarwal:2014} to deduce that appropriate surrogate minimisation on $\DCont$ will ensure $\reg_{\mathrm{rank}}( s; \DCont ) \to 0$.

\arxivOnly{
\begin{remark}
Proposition \ref{prop:auc-regret} is slightly surprising in the sense that the AUC can be expressed as an average of the balanced error across a range of thresholds \citep{Flach:2011}.
Proposition \ref{prop:clean-corrupt-threshold-idn} suggested that in general we do cannot have a regret bound for the clean and corrupted balanced errors.
Note that such a bound would imply one for the AUC, but not vice-versa.
\end{remark}
}

\subsection{Relation to existing work}

Proposition \ref{prop:eta-monotone} is, to our knowledge, novel.
Proposition \ref{prop:auc-regret} generalises \citet[Corollary 3]{Menon:2015}, which established a risk \emph{equivalence} between the clean and corrupted AUC for class-conditional noise.
The reason an equivalence is possible in the CCN setting is that here, $\etaCont( x ) - \etaCont( x )$ is just a scaling of $\eta( x ) - \eta( x' )$, so that both bounds in the proof above are tight (see Example \ref{ex:ccn-eta-monotone} in Appendix).

\section{Learning noisy SIMs with the Isotron}

While the preceding sections establish consistency of corrupted risk minimisation, a practical concern is how precisely one ensures vanishing regret on the corrupted distribution.
Certainly this is possible if one chooses $s$ from the set of all measurable scorers (\eg by employing a universal kernel with appropriately tuned parameters), but this may be infeasible in practical scenarios demanding the use of some restricted function class, \eg linear scorers in a low-dimensional feature space.

We turn our attention to providing a simple algorithm guaranteeing $\reg( s; \DCont, \ellZO ) \to 0$ and $\reg_{\mathrm{rank}}( s; \DCont ) \to 0$
when the class of linear scorers is suitable for $\D$, and the noise possesses some structure.
Specifically, we focus on $\D$ such that $\eta \in \SIM$,
so that $\eta = \GLM( u, w^* )$ for some (unknown) $u, w^*$.
We then consider a $\BCNPlus$ model of the noise, with $s^*( x ) = \langle w^*, x \rangle$ determining the flip probability;
for convenience, we shall call this the single index noise or ``$\SIN$'' model.

\begin{definition}
Let $f_1, f_{-1} \colon \Real \to [ 0, 1 ]$.
Given any distribution $\D$ with $\eta = \GLM( u, w^* )$ for some $(u, w^*)$, define 
$ \SIN( \D, f_{-1}, f_{1} ) \defEq \BCNPlus( \D, f_{-1}, f_{1}, s^* ) $
where $s^* \colon x \mapsto \langle w^*, x \rangle$.
\end{definition}

A special case of the above is where $\D$ is separable with some margin (see Appendix \ref{app:sim-family}),
and one observes corrupted samples with instances closer to the separator having a higher chance of being corrupted.
This is a seemingly reasonable model when labels are provided by human annotators.
A similar model was considered in \citet{Du:2015}, where it is assumed that the link $u( \cdot )$ is known.

\subsection{Corruption runs in the SIN family}

Proposition \ref{prop:eta-monotone} established that for the $\BCNPlus$ model, $\etaCont$ is order preserving for $\eta$.
When $\eta \in \SIM$, this implies that for the $\SIN$ model with suitably Lipschitz label flipping functions, $\etaCont \in \SIM$ as well.

\begin{proposition}
\label{prop:corrupt-sim}
Pick any distribution $\D$ with $\eta \in \SIM( L, W )$.
Suppose that $\DCont = \SIN( \D, f_{-1}, f_{1} )$ where
$( f_{-1}, f_{1}, \eta )$ are $\BCNPlus$-admissible, and
$( f_{-1}, f_{1} )$ are $( L_{-1}, L_{1} )$-Lipschitz respectively.
Then, $\etaCont \in \SIM( L + L_{-1} + L_{1}, W )$.
In particular,
$ \etaCont( x ) = \Contaminator{u}( \langle w^*, x \rangle ) $
where
\begin{align}
     \label{eqn:eta-cont}
     \Contaminator{u}( z ) &= ( 1 - f_1( z ) ) \cdot u( z ) + f_{-1}( z ) \cdot ( 1 - u ( z ) ).
 \end{align}
\end{proposition}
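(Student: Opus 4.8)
The plan is to directly verify the explicit formula for $\etaCont$ and then check that the resulting link function $\Contaminator{u}$ is both monotone and Lipschitz with the claimed constant, leaving the weight vector $w^*$ unchanged. First I would instantiate Lemma \ref{lemm:corrupt-eta-general} in the $\SIN$ setting. Since $\DCont = \SIN(\D, f_{-1}, f_1) = \BCNPlus(\D, f_{-1}, f_1, s^*)$ with $s^*(x) = \langle w^*, x\rangle$, the flip functions are $\rho_y(x) = f_y(\langle w^*, x\rangle)$. Substituting $\eta(x) = u(\langle w^*, x\rangle)$ and these $\rho_y$ into Equation \ref{eqn:corrupt-eta-general} gives
$$ \etaCont(x) = (1 - f_1(\langle w^*,x\rangle)) \cdot u(\langle w^*,x\rangle) + f_{-1}(\langle w^*,x\rangle) \cdot (1 - u(\langle w^*,x\rangle)), $$
which is exactly $\Contaminator{u}(\langle w^*, x\rangle)$ for $\Contaminator{u}$ as defined in Equation \ref{eqn:eta-cont}. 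This already shows $\etaCont = \GLM(\Contaminator{u}, w^*)$, so the same $w^*$ (hence the same norm bound $\|w^*\|\le W$) is inherited; it remains only to analyse $\Contaminator{u}$ as a function of the scalar $z = \langle w^*, x\rangle$.

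Next I would establish that $\Contaminator{u}$ is non-decreasing. This is where I would lean on Proposition \ref{prop:eta-monotone} rather than arguing from scratch: since $(f_{-1}, f_1, s^*, \eta)$ are $\BCNPlus$-admissible, that proposition tells us $\eta$ is order preserving under corruption, i.e.\ $\eta(x) < \eta(x') \implies \etaCont(x) < \etaCont(x')$. Because $u$ is non-decreasing, ordering in $z$ corresponds to ordering in $\eta$ (strictly where $u$ increases, and where $u$ is locally constant the values of $\etaCont$ agree), so monotonicity of $\Contaminator{u}$ in $z$ follows. Alternatively one can verify monotonicity of $\Contaminator{u}$ directly from conditions (a)--(c) of Definition \ref{defn:bcn-plus}, but invoking Proposition \ref{prop:eta-monotone} keeps the argument clean and avoids re-deriving the case analysis around the $\eta = \nicefrac{1}{2}$ threshold.

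The remaining and most delicate step is the Lipschitz bound. Writing $\Contaminator{u}(z) = u(z) - f_1(z)u(z) + f_{-1}(z)(1 - u(z))$ and treating it as a sum of products of Lipschitz, bounded (in $[0,1]$) functions, I would bound $|\Contaminator{u}(z) - \Contaminator{u}(z')|$ by splitting across the three terms. The product rule for Lipschitz constants of bounded functions — namely $\mathrm{Lip}(gh) \le \|g\|_\infty \mathrm{Lip}(h) + \|h\|_\infty \mathrm{Lip}(g)$, with all sup-norms here at most $1$ — gives that each product $f_1 u$ and $f_{-1}(1-u)$ contributes at most $L + L_1$ and $L + L_{-1}$ respectively in the naive accounting. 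The main obstacle is that a careless application of the product rule overcounts the contribution of $u$ and would yield a constant like $L + L_1 + L_{-1} + (\text{extra } L)$ rather than the sharp $L + L_{-1} + L_1$ claimed. To get the stated constant I would instead group the $u$-dependent terms first: note $\Contaminator{u}(z) = u(z)\cdot(1 - f_1(z) - f_{-1}(z)) + f_{-1}(z)$, and exploit that $1 - f_1 - f_{-1} \in (0,1]$ by Assumption \ref{ass:total-noise} together with the fact that $u$ itself is $L$-Lipschitz and bounded by $1$; a careful bound on $\mathrm{Lip}(u \cdot (1 - f_1 - f_{-1}))$ using $\|u\|_\infty \le 1$ and $\|1 - f_1 - f_{-1}\|_\infty \le 1$ yields $\mathrm{Lip} \le L + (L_1 + L_{-1})$, and the trailing $f_{-1}$ adds at most $L_{-1}$ — so I would have to track signs and cancellations rather than take absolute values prematurely to land exactly on $L + L_{-1} + L_1$. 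Concluding, $\Contaminator{u} \in \MonoLips(L + L_{-1} + L_1)$ with $\|w^*\| \le W$, so $\etaCont \in \SIM(L + L_{-1} + L_1, W)$.
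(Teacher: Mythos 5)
Your overall skeleton matches the paper's proof (formula for $\etaCont$ via Lemma \ref{lemm:corrupt-eta-general}, then monotonicity, then the Lipschitz constant), but both of the substantive steps have gaps. First, the monotonicity argument via Proposition \ref{prop:eta-monotone} does not go through as stated: that proposition only constrains $\etaCont$ when $\eta(x) < \eta(x')$, and is silent when $z < z'$ but $u(z) = u(z')$, i.e.\ on flat stretches of the link. Your parenthetical repair --- ``where $u$ is locally constant the values of $\etaCont$ agree'' --- is false. Take $u(z) = \max(0, \min(1, z))$, $f_1 \equiv 0$, and $f_{-1}(z) = e^{\min(z, 1/2)}/10$; this is $\BCNPlus$-admissible (conditions (b) and (c) hold, total noise is bounded), yet on $\{ u = 0 \}$ one has $\Contaminator{u}(z) = f_{-1}(z)$, which varies. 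Flat regions are not a corner case for SIMs: the margin link $\uMar$ of Equation \ref{eqn:u-margin} is constant outside $[-\gamma, \gamma]$. The correct statement, and the one the paper invokes, is Corollary \ref{corr:eta-cont-score-preserving}: $s(x) \leq s(x') \implies \etaCont(x) \leq \etaCont(x')$ (order preservation with respect to the \emph{score}, not $\eta$), whose proof in Lemma \ref{lemm:eta-diff-bound} is precisely the case analysis on conditions (b) and (c) that your ``alternatively, verify directly'' clause defers.

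Second, the Lipschitz bound --- the step the claimed constant hinges on --- is left as an intention rather than an argument. You correctly diagnose the overcounting problem (your own grouping $\Contaminator{u} = u \cdot (1 - f_1 - f_{-1}) + f_{-1}$ charges $f_{-1}$'s variation twice, giving $L + L_1 + 2 L_{-1}$), but ``track signs and cancellations'' is where the proof actually lives. The cancellation is realised by the exact identity underlying Lemma \ref{lemm:eta-diff}, specialised to this model:
\begin{align*}
\Contaminator{u}(z) - \Contaminator{u}(z') ={}& \left( 1 - f_1(z') - f_{-1}(z') \right) \cdot \left( u(z) - u(z') \right) \\
&+ \left( f_1(z') - f_1(z) \right) \cdot u(z) + \left( f_{-1}(z) - f_{-1}(z') \right) \cdot \left( 1 - u(z) \right),
\end{align*}
in which the two appearances of $f_{-1}$'s variation combine into a single term with coefficient $1 - u(z) \in [0, 1]$. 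The triangle inequality, together with $0 < 1 - f_1(z') - f_{-1}(z') \leq 1$ (Assumption \ref{ass:total-noise}) and $u(z) \in [0,1]$, then yields $| \Contaminator{u}(z) - \Contaminator{u}(z') | \leq ( L + L_{-1} + L_1 ) \cdot | z - z' |$ immediately. Without exhibiting this (or an equivalent) decomposition, the constant $L + L_{-1} + L_1$ is not established, and the weaker constants your bookkeeping produces would still prove $\etaCont \in \SIM(\cdot, W)$ but not the proposition as stated.
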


Examples of the form of $\etaCont$ for specific $u( \cdot )$ are presented in Appendix \ref{app:corrupt-sim-examples}.

\subsection{The Isotron: an efficient algorithm to learn noisy SIMs}

Proposition \ref{prop:corrupt-sim} suggests that for a large class of noisy label problems, if one can learn a generic SIM, then the corrupted class-probability function may be estimated.
Fortunately, SIMs can be provably learned with the Isotron \citep{Kalai:2009}, and its Lipschitz variant, the SLIsotron \citep{Kakade:2011}.
The elegant algorithm 
consists of alternately updating the hyperplane 
$w$, and the link function $u$.
The latter is estimated using the PAV algorithm \citep{Ayer:1955},
which finds a solution to the isotonic regression problem:
$$ ( \hat{{u}}_1, \ldots, \hat{{u}}_m ) = \underset{{u}_1 \leq {u}_2 \leq \ldots \leq {u}_m}{\argmin}{\sum_{i = 1}^m ( y_i  - {u}_i )^2}, $$
where we assume that the $s_i$'s are ordered such that $s_1 \leq s_2 \leq \ldots \leq s_m$, \ie we wish for the $u$'s to respect the ordering of the $s$'s.
The PAV algorithm provides a nonparametric estimate of $u(\cdot)$ at the specified points.
At other points, one may use linear interpolation.
The SLIsotron algorithm is identical to the Isotron, except that one calls LPAV, a variant of PAV that obeys a Lipschitz constraint.

One can provide precise theoretical guarantees on the output of the (SL)Isotron.
Combined with Proposition \ref{prop:regret-bound}, this lets one make guarantees about classification from corrupted labels.

\begin{proposition}
\label{prop:isotron-consistency}
Pick any distribution $\D$ over $\mathbb{B}^d \times \PMOne$ with $\eta \in \SIM(L, W)$.
Suppose that $\DCont = \SIN( \D, f_{-1}, f_{1} )$ where $( f_{-1}, f_{1} )$ are Lipschitz.
Then, for $\ellSQ$ being the square loss,
we can construct 
$\hat{\etaCont}_{\mathsf{S}} \colon \XCal \to [0, 1]$ from a corrupted sample $\Contaminator{\mathsf{S}} \sim \DCont^m$ using the SLIsotron, such that
$$ \reg_{\mathrm{rank}}( \hat{\etaCont}_{\Contaminator{\mathsf{S}}}; \D ) \stackrel{\Pr}{\to} 0. $$
Further, if $f_{-1} = f_{1}$, we can construct a classifier $c_{\Contaminator{\mathsf{S}}} \colon x \mapsto \sign( 2\hat{\etaCont}_{\Contaminator{\mathsf{S}}} - 1 )$ such that
$$ \reg( c_{\Contaminator{\mathsf{S}}}; \D, \ellZO ) \stackrel{\Pr}{\to} 0. $$
\end{proposition}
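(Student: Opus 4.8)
The plan is to reduce the problem to learning the corrupted class-probability function $\etaCont$ as a single-index model, and then to push the resulting estimation guarantee through the regret-transfer results already established. The crucial enabling fact is Proposition \ref{prop:corrupt-sim}: since $\eta \in \SIM(L, W)$ and the flip functions $( f_{-1}, f_1 )$ are Lipschitz, we have $\etaCont \in \SIM( L + L_{-1} + L_1, W )$, and restricting the domain to the unit ball $\mathbb{B}^d$ supplies the boundedness that the (SL)Isotron requires. Running the SLIsotron on the corrupted sample $\SCont \sim \DCont^m$ therefore learns $\etaCont$: by the guarantee of \citet{Kakade:2011}, the best iterate $\hat{\etaCont}_{\SCont}$ satisfies, with high probability, $\Expectation{\X \sim M}{ ( \hat{\etaCont}_{\SCont}( \X ) - \etaCont( \X ) )^2 } \leq \epsilon( m )$ with $\epsilon( m ) \to 0$. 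Because $\ellSQ$ is proper with Bayes-optimal scorer equal to the conditional mean, this squared $L_2$ error coincides (up to constants) with the square-loss regret, so we obtain $\reg( \hat{\etaCont}_{\SCont}; \DCont, \ellSQ ) \stackrel{\Pr}{\to} 0$.

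For the ranking claim, I would convert this square-loss regret into AUC regret on the corrupted distribution. Since $\ellSQ$ is a strongly proper composite loss, the surrogate regret bounds for the AUC of \citet{Agarwal:2014} give $\reg_{\mathrm{rank}}( \hat{\etaCont}_{\SCont}; \DCont ) \stackrel{\Pr}{\to} 0$ whenever $\reg( \hat{\etaCont}_{\SCont}; \DCont, \ellSQ ) \to 0$. Finally, because $\SIN$ is the special case of the $\BCNPlus$ model with $s = s^*$ and the admissibility conditions hold, Proposition \ref{prop:auc-regret} applies and yields $\reg_{\mathrm{rank}}( \hat{\etaCont}_{\SCont}; \D ) \leq C \cdot \reg_{\mathrm{rank}}( \hat{\etaCont}_{\SCont}; \DCont )$ for the finite constant $C = \frac{\piCont \cdot ( 1 - \piCont )}{\pi \cdot ( 1 - \pi )} \cdot \frac{1}{1 - 2 \cdot \rho_{\mathrm{max}}}$, delivering clean-distribution ranking consistency.

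For the classification claim, the extra hypothesis $f_{-1} = f_1$ places us in the IDN regime: writing $f \defEq f_1 \circ s^*$ we have $\DCont = \IDN( \D, f )$ with $f < \nicefrac[]{1}{2}$ by Assumption \ref{ass:total-noise}. I would threshold the learned estimate at $\nicefrac[]{1}{2}$. By the classical plug-in bound (e.g.\ \citet{Devroye:1996}) followed by Jensen, the $\ellZO$-regret of $c_{\SCont} = \sign( 2 \hat{\etaCont}_{\SCont} - 1 )$ on $\DCont$ is controlled by the estimation error, $\reg( c_{\SCont}; \DCont, \ellZO ) \leq 2 \cdot \Expectation{\X \sim M}{ | \hat{\etaCont}_{\SCont}( \X ) - \etaCont( \X ) | } \leq 2 \sqrt{ \reg( \hat{\etaCont}_{\SCont}; \DCont, \ellSQ ) } \stackrel{\Pr}{\to} 0$. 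Crucially, Corollary \ref{corr:bayes-opt-same} guarantees that thresholding at $\nicefrac[]{1}{2}$ targets the same Bayes classifier on both distributions, so Proposition \ref{prop:regret-bound} transfers the bound to the clean distribution: $\reg( c_{\SCont}; \D, \ellZO ) \leq \frac{1}{1 - 2 \cdot \rho_{\mathrm{max}}} \cdot \reg( c_{\SCont}; \DCont, \ellZO ) \stackrel{\Pr}{\to} 0$.

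The main obstacle I anticipate is not any single inequality but the bookkeeping needed to turn the SLIsotron's finite-sample, best-of-$T$-iterates bound into an honest convergence-in-probability statement: one must let the iteration budget $T = T( m )$ grow with $m$, select the best iterate (say via a held-out split of $\SCont$), and verify that the resulting $\epsilon( m )$ still tends to $0$, all while tracking the Lipschitz constant $L + L_{-1} + L_1$ and the norm bound $W$ through the sample-complexity expression. Once that is in place, the remaining steps --- the two surrogate/plug-in regret bounds and the two clean-versus-corrupt transfer results (Propositions \ref{prop:auc-regret} and \ref{prop:regret-bound}) --- are purely a matter of chaining the already-established inequalities.
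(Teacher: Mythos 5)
Your proposal is correct and follows essentially the same route as the paper's proof: invoke Proposition \ref{prop:corrupt-sim} to place $\etaCont$ in the SIM family, use the SLIsotron guarantee of \citet{Kakade:2011} to drive the square-loss regret on $\DCont$ to zero, convert to AUC regret via \citet{Agarwal:2014} and transfer to $\D$ by Proposition \ref{prop:auc-regret}, and for the symmetric case convert to $\ellZO$ regret (your plug-in-plus-Jensen step is the same standard square-loss surrogate bound the paper cites) and transfer by Proposition \ref{prop:regret-bound}. The iterate-selection bookkeeping you flag is handled no more rigorously by the paper itself, which simply takes an ``appropriately determined iteration,'' so it is not a gap relative to the paper's argument.
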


A salient feature of the Isotron is that one need not know the {precise} form of either $\eta$, nor the label flipping functions.
Even if one just knows that there exists some $u$ such that $\eta = \GLM(u,w^*)$, and that the labels are subject to (Lipschitz) monotonic noise, one can estimate $\etaCont$.
Even when $u( \cdot )$ is known, $\Contaminator{u}$ will involve the typically unknown flipping functions.
Thus, the Isotron solves a non-trivial estimation problem; see Appendix \ref{app:isotron-ccn} for an illustrative example in the CCN setting.

\arxivOnly{
\begin{remark}
Suppose one knows the precise form of $u$, but does not know $w^*$.
For example, one may know that $\D$ is separable with a certain margin.
Then, under the \emph{symmetric} $\BCNPlus$ model, we can in fact infer the label flipping function as
$$ f( z ) = \frac{\Contaminator{u}( z ) - u( z )}{1 - 2 \cdot u( z )}. $$
The estimation error in this term depends wholly on the error in estimating $\Contaminator{u}$.
\end{remark}
}

\section{Experiments with Isotron and boundary-consistent noise}

We now empirically verify that the Isotron can learn SIMs subject to noise from the $\BCNPlus$ model. 

\subsection{Synthetic data}
\label{sec:synthetic}

We first consider a $\D$ such that $M$ is a mixture of 2D Gaussians with identity covariance, and means $(1, 1)$ and $(-1, -1)$.
We picked $\eta \colon x \mapsto \indicator{s^*( x ) > 0}$
where $s^*( x ) = x_1 + x_2$.
For flip functions $f_{\pm 1} \colon z \mapsto 1/(1 + e^{|z|})$,
we drew a sample $\Contaminator{\SSf}$ of 5000 elements from $\DCont = \BCNPlus( \D, f_{-1}, f_{1}, s^*)$, the boundary-consistent corruption of $\D$.
We then estimated $\etaCont$ from $\Contaminator{\SSf}$ using 1000 iterations of Isotron.
Figure \ref{fig:noisetron_synthetic} shows this estimate closely matches the actual $\etaCont$ (computed explicitly via Equation \ref{eqn:corrupt-eta-general}).
Further, on a test sample of 5000 instances from $\D$,
thresholding our estimate around $\nicefrac[]{1}{2}$ gives essentially perfect ($99.46\%$) accuracy.

\begin{figure}[!h]
  \centering
  \includegraphics[scale=0.0725]{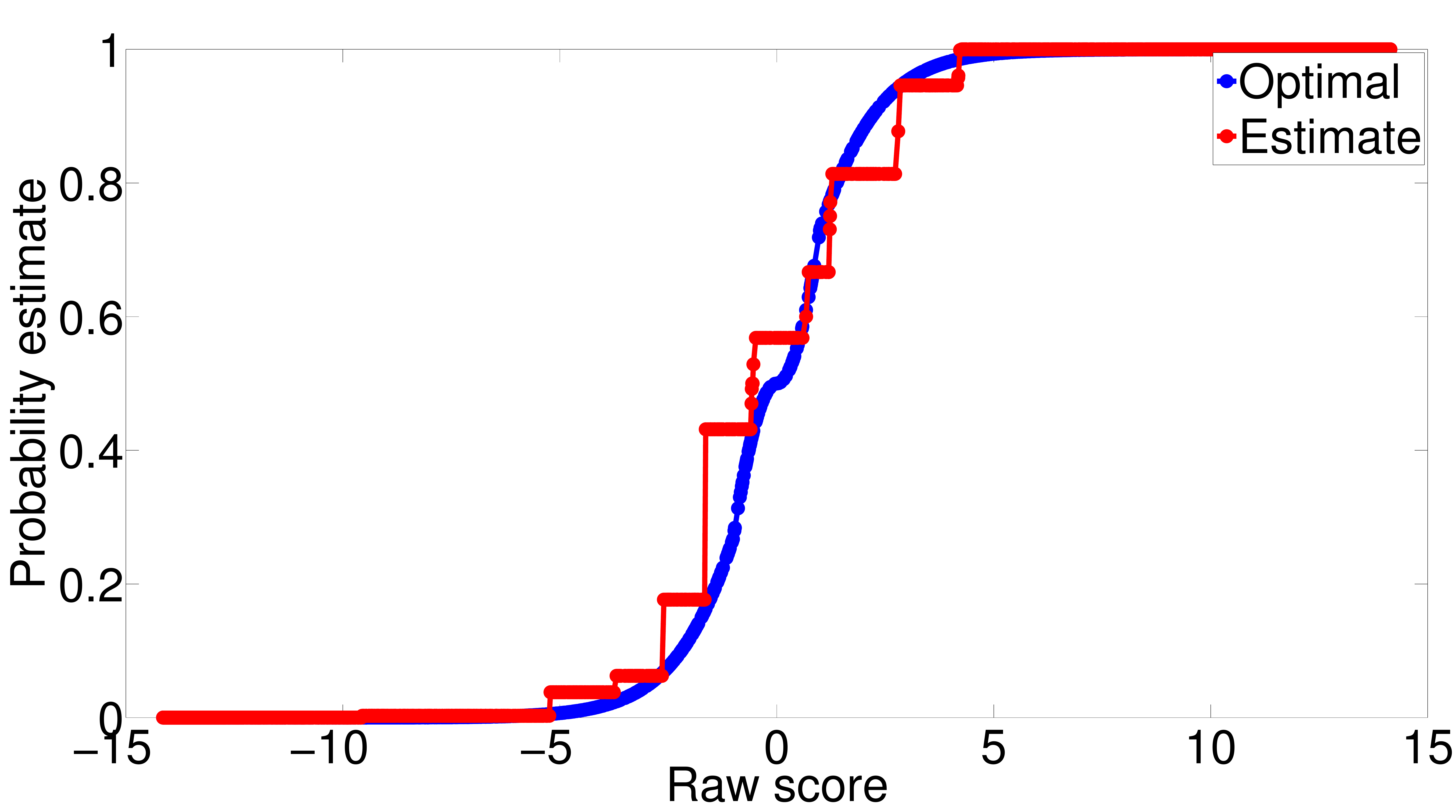}
  \caption{Isotron results for estimating $\etaCont$ on synthetic data (\S\ref{sec:synthetic}).}
  \label{fig:noisetron_synthetic}
\end{figure}

\subsection{Real-world data}
\label{sec:real}

We next ran experiments involving boundary consistent noise on the USPS and MNIST handwritten digit datasets.
We converted both datasets into binary classification tasks by seeking to distinguish digits 0 and 9 for the former, and 6 and 7 for the latter.
In each case, the binary classification task is nearly linearly separable:
to make it fully separable, we took the optimal hyperplane $w^*$ found by ordinary least squares, and discarded all instances with margin violations for a margin $\gamma = 0.1$.

On the resulting separable dataset, we created a training sample $\SSf$ comprising 80\% of instances, with the remaining 20\% of instances used for testing.
For each $(x, y) \in \SSf$, we inject 
boundary-consistent noise by flipping the label with probability $f( x ) = \left( 1 + e^{\alpha \cdot |\langle w^*, x \rangle|} \right)^{-1}$ for some parameter $\alpha \in [0, \infty)$.
The resulting corrupted sample $\Contaminator{\SSf}$ mimics a scenario where the labels are provided by a human annotator more liable to make errors for the easily confusable digits.
We then trained a regularised least squares model (using regularisation strength $\lambda = 10^{-8}$), and the Isotron (using $100$ iterations) on $\Contaminator{\SSf}$. 
We measured the models' classification accuracy on the test set with \emph{clean} labels.


Table \ref{tbl:results-mnist-usps} reports the mean and standard error of the accuracies of ridge regression and the Isotron over $T = 25$ independent label corruptions for both datasets.
We report results for $\alpha \in \{ 2^{-3}, \ldots, 2^{3} \}$, and for each $\alpha$ note the \% of labels that end up being flipped.
We find that for lower $\alpha$, both methods perform comparably.
This is unsurprising, as under low noise both should eventually find the optimal separator.
For higher $\alpha$, the Isotron offers a significant improvement over ridge regression (upto 17\% on MNIST),
in keeping with our analysis that it can effectively learn from instance-dependent noise.

\begin{table}[!h]
  \centering
  \renewcommand{\arraystretch}{1.25}

  \scalebox{0.6}{
  \subfloat[USPS 0 vs 9]{
  \begin{tabular}{llll}
    \toprule
    \toprule
    $\alpha$ & \textbf{Flip \%} & \textbf{Ridge ACC} & \textbf{Isotron ACC} \\
    \toprule
$\nicefrac[]{1}{8}$ & 0.03 $\pm$ 0.01 & 0.9940 $\pm$ 0.0003 & \cellcolor{gray!25}{0.9974 $\pm$ 0.0002} \\
$\nicefrac[]{1}{4}$ & 0.17 $\pm$ 0.01 & 0.9947 $\pm$ 0.0004 & \cellcolor{gray!25}{0.9974 $\pm$ 0.0003} \\
$\nicefrac[]{1}{2}$ & 2.15 $\pm$ 0.09 & \cellcolor{gray!25}{0.9944 $\pm$ 0.0004} & 0.9937 $\pm$ 0.0006 \\
1 & 11.84 $\pm$ 0.17 & \cellcolor{gray!25}{0.9853 $\pm$ 0.0012} & 0.9700 $\pm$ 0.0021 \\
2 & 26.57 $\pm$ 0.22 & 0.8988 $\pm$ 0.0053 & \cellcolor{gray!25}{0.9239 $\pm$ 0.0050} \\
4 & 37.65 $\pm$ 0.24 & 0.7410 $\pm$ 0.0072 & \cellcolor{gray!25}{0.7863 $\pm$ 0.0138} \\
8 & 43.76 $\pm$ 0.25 & 0.6185 $\pm$ 0.0078 & \cellcolor{gray!25}{0.6467 $\pm$ 0.0405} \\
    \bottomrule
  \end{tabular}
  }
  \subfloat[MNIST 6 vs 7]{
  \begin{tabular}{llll}
    \toprule
    \toprule
    $\alpha$ & \textbf{Flip \%} & \textbf{Ridge ACC} & \textbf{Isotron ACC} \\
    \toprule
$\nicefrac[]{1}{8}$ & 0.04 $\pm$ 0.00 & 0.9958 $\pm$ 0.0001 & \cellcolor{gray!25}{0.9984 $\pm$ 0.0001} \\
$\nicefrac[]{1}{4}$  & 0.44 $\pm$ 0.01 & 0.9958 $\pm$ 0.0001 & \cellcolor{gray!25}{0.9979 $\pm$ 0.0001} \\
$\nicefrac[]{1}{2}$   & 4.25 $\pm$ 0.04 & 0.9953 $\pm$ 0.0002 & \cellcolor{gray!25}{0.9966 $\pm$ 0.0003} \\
1     & 15.97 $\pm$ 0.05 & \cellcolor{gray!25}{0.9871 $\pm$ 0.0005} & 0.9864 $\pm$ 0.0007 \\
2     & 29.97 $\pm$ 0.09 & 0.9446 $\pm$ 0.0012 & \cellcolor{gray!25}{0.9565 $\pm$ 0.0013} \\
4     & 39.49 $\pm$ 0.08 & 0.8262 $\pm$ 0.0022 & \cellcolor{gray!25}{0.8768 $\pm$ 0.0041} \\
8     & 44.63 $\pm$ 0.08 & 0.6872 $\pm$ 0.0024 & \cellcolor{gray!25}{0.8088 $\pm$ 0.0291} \\
    \bottomrule
  \end{tabular}
  }
  }
  
  \caption{Mean and standard error for 0-1 accuracies of ridge regression (``Ridge'') and Isotron over $T = 25$ independent corruption trials. See text in \S\ref{sec:real} for details of parameter $\alpha$.}
  \label{tbl:results-mnist-usps}
\end{table}

\section{Conclusion}

We have analysed the problem of learning with instance- and label-dependent noise, concluding that
for instance-dependent noise, minimising the classification risk on the noisy distribution is consistent for classification on the clean distribution;
for a broad class of instance- and label-dependent noise, a similar consistency result holds for the area under the ROC curve;
and one can learn generalised linear models subject to the same noise model using the Isotron.

\newpage

\bibliographystyle{plainnat}
\bibliography{references}

\newpage

\appendix
\let\stdsection\section
\renewcommand\section{\newpage\stdsection}

{\LARGE
\begin{center}
\textbf{Supplementary Material for ``Learning from Binary Labels with Instance-Dependent Corruption''}
\end{center}
}

Appendix \ref{sec:helper} contains some helper results useful for the proofs of results in the main body.
These proofs are provided in Appendix \ref{sec:proofs}.
The subsequent sections contain further discussion and examples of material from the main body.
(There are no new theoretical results in these later sections, but rather, simply expositions of special cases of results in the main body.)

\newpage

\section{Additional helper results}
\label{sec:helper}

%
\subsection{Order preservation}

We will make use of the following simple fact about order preservation, stated without proof.

\begin{lemma}
Suppose $f, g \colon \Real \to \Real$ are such that
$$ ( \forall x, y \in \Real ) \, f( x ) < f( y ) \implies g( x ) < g( y ). $$
Then, $f = u \circ g$ for some non-decreasing $u$.
\end{lemma}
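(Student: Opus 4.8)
The plan is to construct $u$ directly on the range of $g$ by setting $u(g(x)) \defEq f(x)$, to check that this prescription is both well-defined and non-decreasing, and finally to extend $u$ to all of $\Real$ while preserving monotonicity.

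First I would establish well-definedness, which is really the crux of the argument. The formula $u(g(x)) = f(x)$ only makes sense if $g(x) = g(x')$ forces $f(x) = f(x')$. This follows from the contrapositive of the hypothesis: if $f(x) \neq f(x')$, say $f(x) < f(x')$, then the assumption yields $g(x) < g(x')$, and in particular $g(x) \neq g(x')$. Hence equal $g$-values produce equal $f$-values, so $u$ is a well-defined function on $\{ g(x) : x \in \Real \}$, and $f = u \circ g$ holds on that set by construction.

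Next I would verify that $u$ is non-decreasing on the range of $g$. Take $x, x'$ with $g(x) < g(x')$; I claim $f(x) \leq f(x')$. If instead $f(x') < f(x)$, then applying the hypothesis with the two points interchanged gives $g(x') < g(x)$, contradicting $g(x) < g(x')$. Therefore $u(g(x)) = f(x) \leq f(x') = u(g(x'))$, so $u$ is non-decreasing wherever it has been defined so far.

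The remaining step is to extend $u$ from the range of $g$ to all of $\Real$ without destroying monotonicity; this is where one must be slightly careful, though it is not deep. A clean explicit choice is $u(t) \defEq \sup \{ f(x) : g(x) \leq t \}$: for $t = g(x_0)$ the monotonicity argument above shows that every competitor $f(x)$ with $g(x) \leq g(x_0)$ satisfies $f(x) \leq f(x_0)$, so the supremum is attained at $x_0$ and recovers $u(g(x_0)) = f(x_0)$, while for $t_1 < t_2$ the defining sets are nested and hence $u(t_1) \leq u(t_2)$. (Alternatively, one may simply invoke the standard fact that a monotone map defined on an arbitrary subset of $\Real$ extends to a monotone map on all of $\Real$.) I expect the only point requiring attention is this well-definedness and extension bookkeeping; the substantive logical content is entirely the two short contrapositive arguments used above.
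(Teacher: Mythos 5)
The paper states this lemma \emph{without proof} (it is introduced as a ``simple fact about order preservation, stated without proof''), so there is no in-paper argument to compare against. Your two contrapositive arguments are exactly the substantive content that the paper leaves implicit, and both are correct: $u(g(x)) \defEq f(x)$ is well-defined because $g(x) = g(x')$ with $f(x) \neq f(x')$ would contradict the hypothesis, and $u$ is non-decreasing on the range of $g$ because $g(x) < g(x')$ together with $f(x') < f(x)$ would likewise contradict it. This is also consistent with how the paper actually uses the fact (\eg its Corollary \ref{corr:monotone-contrapositive} is just the contrapositive restatement).

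Your third step, however, is both unnecessary and flawed as written. It is unnecessary because $f = u \circ g$ only requires $u$ to be defined on $\{ g(x) \colon x \in \Real \}$, which your first two steps already achieve. It is flawed because the extension can genuinely be impossible: your formula $u(t) = \sup \{ f(x) \colon g(x) \leq t \}$ returns $\sup \emptyset = -\infty$ for any $t < \inf g$, and the parenthetical ``standard fact'' is false for real-valued monotone functions on arbitrary subsets of $\Real$. Concretely, take $f(x) = x$ and $g(x) = e^x$, which satisfy the hypothesis; then $u$ must agree with $\log$ on $(0, \infty)$, and any non-decreasing extension to all of $\Real$ would need $u(0) \leq \inf_{s > 0} \log s = -\infty$, which no real value satisfies. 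So the lemma is only true --- and only needed --- with $u$ defined on the range of $g$; in the paper's applications $f$ is a class-probability function with values in $[0,1]$, where an extension (\eg with the convention $\sup \emptyset \defEq 0$) does exist. Dropping the extension step, or restricting it to bounded $f$, makes your proof fully correct.
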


Taking the contrapositive gives us an alternate useful statement.

\begin{corollary}
\label{corr:monotone-contrapositive}
Suppose $f, g \colon \Real \to \Real$ are such that
$$ ( \forall x, y \in \Real ) \, g( x ) \leq g( y ) \implies f( x ) \leq f( y ). $$
Then, $f = u \circ g$ for some non-decreasing $u$.
\end{corollary}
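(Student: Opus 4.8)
The plan is to obtain this corollary as an immediate consequence of the preceding lemma, by verifying that the hypothesis stated here is logically equivalent to the hypothesis required there. Once that equivalence is in hand, the lemma's conclusion that $f = u \circ g$ for some non-decreasing $u$ transfers verbatim, so no independent construction of $u$ is needed.

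First I would take the inner implication appearing in the lemma's hypothesis, namely $f(x) < f(y) \implies g(x) < g(y)$, and replace it by its contrapositive. The negation of $g(x) < g(y)$ is $g(x) \geq g(y)$, and likewise for $f$, so the contrapositive reads $g(x) \geq g(y) \implies f(x) \geq f(y)$. Because the lemma quantifies the implication over all pairs $(x, y)$, its contrapositive also holds for all $(x, y)$; an implication and its contrapositive are equivalent, so passing to it costs nothing.

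Second I would exploit that the quantification ranges over all \emph{ordered} pairs by relabelling $x \leftrightarrow y$. Under this swap the statement $g(x) \geq g(y) \implies f(x) \geq f(y)$ becomes $g(y) \geq g(x) \implies f(y) \geq f(x)$, which is precisely $g(x) \leq g(y) \implies f(x) \leq f(y)$. This is exactly the hypothesis of the present corollary, so the two hypotheses coincide, and invoking the lemma completes the argument.

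The only point demanding attention is the bookkeeping of inequality directions: taking the contrapositive converts the strict ``$<$'' conditions into non-strict ``$\geq$'' conditions, and the subsequent relabelling converts ``$\geq$'' into ``$\leq$''. There is no genuine mathematical obstacle here — all the substance resides in the (unproved) lemma, and this corollary is a purely formal restatement of it obtained by contraposition and renaming of bound variables.
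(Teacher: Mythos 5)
Your proposal is correct and is essentially identical to the paper's own argument: the paper likewise obtains this corollary by taking the contrapositive of the inner implication in the preceding (unproved) lemma and relabelling the bound variables, so the lemma's conclusion transfers directly. The bookkeeping of strict versus non-strict inequalities in your write-up is accurate, and nothing further is needed.
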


Finally, we can make a more precise statement about behaviour when $g( x ) = g( y )$ under the above conditions.

\begin{lemma}
Suppose $f, g \colon \Real \to \Real$ are such that
$$ ( \forall x, y \in \Real ) \, f( x ) < f( y ) \implies g( x ) < g( y ). $$
Then,
$$ ( \forall x, y \in \Real ) \, g( x ) = g( y ) \implies f( x ) = f( y ). $$
$$ ( \forall x, y \in \Real ) \, g( x ) < g( y ) \implies f( x ) \leq f( y ). $$
\end{lemma}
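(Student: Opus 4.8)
The plan is to derive both implications as contrapositives of the single hypothesis, exploiting the fact that the hypothesis can be applied with the roles of $x$ and $y$ interchanged. The whole argument is elementary; there is no heavy machinery, just a careful case analysis, so the only thing to get right is the distinction between the two conclusions (an equality in the first, a non-strict inequality in the second).

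For the first claim, I would argue by contradiction. Fix $x, y$ with $g(x) = g(y)$ and suppose $f(x) \neq f(y)$. Then either $f(x) < f(y)$ or $f(y) < f(x)$. In the former case the hypothesis yields $g(x) < g(y)$, and in the latter it yields $g(y) < g(x)$; both contradict $g(x) = g(y)$. Hence $f(x) = f(y)$, and since $x, y$ were arbitrary this gives
$$ ( \forall x, y \in \Real ) \, g( x ) = g( y ) \implies f( x ) = f( y ). $$

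For the second claim, I would again argue by contradiction but rule out only one inequality. Fix $x, y$ with $g(x) < g(y)$ and suppose, for contradiction, that $f(x) > f(y)$, i.e.\ $f(y) < f(x)$. Applying the hypothesis with the pair $(y, x)$ in place of $(x, y)$ gives $g(y) < g(x)$, contradicting $g(x) < g(y)$. Therefore $f(x) \leq f(y)$, establishing
$$ ( \forall x, y \in \Real ) \, g( x ) < g( y ) \implies f( x ) \leq f( y ). $$

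The step I expect to require the most care is not a genuine obstacle but the bookkeeping of which strict inequality to exclude: in the equality case one must rule out \emph{both} $f(x) < f(y)$ and $f(y) < f(x)$ to force equality of $f$, whereas in the strict case one rules out only $f(y) < f(x)$, leaving the possibility $f(x) = f(y)$ and hence the non-strict conclusion $f(x) \leq f(y)$. Recognising that the hypothesis is invariant under swapping $x$ and $y$ (so that it may legitimately be invoked for the pair $(y,x)$) is what makes both contrapositives go through.
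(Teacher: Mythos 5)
Your proof is correct and is essentially the paper's argument: both rest on taking the contrapositive of the hypothesis (exploiting its symmetry in $x$ and $y$), with the paper stating it once as $g(x) \leq g(y) \implies f(x) \leq f(y)$ and then applying it to each case, while you unroll the same logic as two contradiction arguments. The case bookkeeping you flag — ruling out both strict inequalities for the equality conclusion, only one for the non-strict conclusion — matches the paper exactly.
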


\begin{proof}
By the contrapositive,
$$ ( \forall x, y \in \Real ) \, g( x ) \leq g( y ) \implies f( x ) \leq f( y ). $$
If $g( x ) < g( y )$ then trivially $g( x ) \leq g( y )$ and the result follows.
Suppose that $g( x ) = g( y )$.
Then $g( x ) \leq g( y )$ and $g( y ) \leq g( x )$.
Thus $f( x ) \leq f( y )$ and $f( y ) \leq f( x )$, \ie $f( x ) = f( y )$.
\end{proof}

Note that if we only know that $g( x ) < g( y ) \implies f( x ) \leq f( y )$, we cannot conclude that $f = u \circ g$, nor that $g = u \circ f$;
we must be able to conclude something about the behaviour of $f$ when $g( x ) = g( y )$.

\subsection{Additional properties of $\DCont$}

From Proposition \ref{lemm:corrupt-eta-general}, we can derive expressions for the corrupted base rate and class-conditional distributions.

\begin{corollary}
Pick any distribution $\D$.
Then, for any $\rho_1, \rho_{-1} \colon \XCal \to [ 0, 1 ]$,
$\ILN( \D, \rho_{-1}, \rho_{1} )$ has
\begin{align*}
    \piCont &= \pi - \Expectation{\X \sim M}{ ( \rho_1( \X ) + \rho_{-1}( \X ) ) \cdot \eta( \X ) } + \Expectation{\X \sim M}{ \rho_{-1}( \X ) ) } \\
    \PCont( x ) &= \piCont^{-1} \cdot \left( (1 - \rho_1( x ) ) \cdot \pi \cdot P( x ) + \rho_{-1}( x ) \cdot (1 - \pi) \cdot Q( x ) \right) \\
    \QCont( x ) &= (1 - \piCont)^{-1} \cdot \left( \rho_1( x ) \cdot \pi \cdot P( x ) + (1 - \rho_{-1}( x )) \cdot (1 - \pi) \cdot Q( x ) \right).
\end{align*}
\end{corollary}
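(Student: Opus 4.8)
The plan is to derive all three quantities directly from the single expression for $\etaCont$ in Lemma~\ref{lemm:corrupt-eta-general}, using two structural facts: that the marginal $M = \Pr(\X)$ is left unchanged by the ILN corruption (this is built into the definition of the corruption model), and the elementary Bayes identities $\eta(x) \cdot M(x) = \pi \cdot P(x)$ and $(1 - \eta(x)) \cdot M(x) = (1 - \pi) \cdot Q(x)$ that hold on the clean distribution $\D$. Everything then reduces to substitution and collecting terms.

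First I would compute the corrupted base rate by the law of total probability, $\piCont = \Expectation{\X \sim M}{\etaCont(\X)}$. Substituting the expression from Lemma~\ref{lemm:corrupt-eta-general} and expanding, the integrand becomes $\eta(\X) - (\rho_1(\X) + \rho_{-1}(\X)) \cdot \eta(\X) + \rho_{-1}(\X)$. Taking expectations termwise and using $\Expectation{\X \sim M}{\eta(\X)} = \pi$ gives the claimed formula for $\piCont$.

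Next, for the class-conditionals I would apply Bayes' rule on the corrupted distribution. Since $M$ is unchanged, $\PCont(x) = \etaCont(x) \cdot M(x) / \piCont$ and $\QCont(x) = (1 - \etaCont(x)) \cdot M(x) / (1 - \piCont)$. Multiplying the Lemma's expression for $\etaCont(x)$ by $M(x)$ and invoking the two clean Bayes identities to replace $\eta(x) \cdot M(x)$ by $\pi \cdot P(x)$ and $(1 - \eta(x)) \cdot M(x)$ by $(1 - \pi) \cdot Q(x)$ yields $\etaCont(x) \cdot M(x) = (1 - \rho_1(x)) \cdot \pi \cdot P(x) + \rho_{-1}(x) \cdot (1 - \pi) \cdot Q(x)$, which after dividing by $\piCont$ is exactly the stated $\PCont(x)$. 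For $\QCont(x)$ I would instead compute $(1 - \etaCont(x)) \cdot M(x) = M(x) - \etaCont(x) \cdot M(x)$, substitute $M(x) = \pi \cdot P(x) + (1 - \pi) \cdot Q(x)$ together with the expression just derived, and collect terms.

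There is no genuine obstacle here --- the result is a direct bookkeeping exercise --- so the only point requiring care is the $\QCont$ computation, where one must telescope the decomposition $M(x) = \pi \cdot P(x) + (1 - \pi) \cdot Q(x)$ against $\etaCont(x) \cdot M(x)$ so that the $P(x)$ and $Q(x)$ coefficients combine correctly into $\rho_1(x) \cdot \pi$ and $(1 - \rho_{-1}(x)) \cdot (1 - \pi)$ respectively. Keeping the unchanged-marginal assumption explicit throughout is what makes every step legitimate.
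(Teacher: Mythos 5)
Your proposal is correct and follows essentially the same route as the paper: compute $\piCont = \Expectation{\X \sim M}{\etaCont(\X)}$ from Lemma \ref{lemm:corrupt-eta-general}, then obtain $\PCont$ and $\QCont$ via Bayes' rule on the corrupted distribution using the unchanged marginal and the identities $\eta(x) \cdot M(x) = \pi \cdot P(x)$, $(1-\eta(x)) \cdot M(x) = (1-\pi) \cdot Q(x)$. The only cosmetic difference is that the paper expands $\etaCont$ as $(1 - \rho_1 - \rho_{-1}) \cdot \eta + \rho_{-1}$ and then substitutes $M = \pi \cdot P + (1-\pi) \cdot Q$, whereas you apply the two Bayes identities to the symmetric form of $\etaCont$ directly; the algebra is interchangeable.
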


\begin{proof}
Proposition \ref{lemm:corrupt-eta-general} implies that the corrupted base rate is
\begin{align*}
     \piCont &= \Expectation{\X \sim M}{ (1 - \rho_1( \X ) ) \cdot \eta( \X ) + \rho_{-1}( \X ) \cdot ( 1 - \eta( \X ) ) } \\
     &= \pi - \Expectation{\X \sim M}{ ( \rho_1( \X ) + \rho_{-1}( \X ) ) \cdot \eta( \X ) } + \Expectation{\X \sim M}{ \rho_{-1}( \X ) ) },
 \end{align*}
which is a complex translation of the clean base rate.
 Further, the corrupted class-conditional distributions are
 \begin{align*}
     \PCont( x ) &= \frac{\etaCont( x ) \cdot M( x )}{\piCont} \\
     &= \frac{((1 - \rho_1( x ) - \rho_{-1}(x) ) \cdot \eta( x ) + \rho_{-1}( x ) ) \cdot M( x )}{\piCont} \\
     &= ( 1 - \rho_1( x ) - \rho_{-1}(x) ) \cdot P( x ) \cdot \frac{\pi}{\piCont} + \frac{\rho_{-1}( x ) \cdot M( x )}{\piCont} \\
     &= \piCont^{-1} \cdot \left( (1 - \rho_1( x ) ) \cdot \pi \cdot P( x ) + \rho_{-1}( x ) \cdot (1 - \pi) \cdot Q( x ) \right),
 \end{align*}
 and similarly
 $$ \QCont( x ) = (1 - \piCont)^{-1} \cdot \left( \rho_1( x ) \cdot \pi \cdot P( x ) + (1 - \rho_{-1}( x )) \cdot (1 - \pi) \cdot Q( x ) \right). $$
 For the class-conditionals, we can equally write
\begin{equation}    
\label{eqn:class-conditional-relation}
\begin{aligned}
     P( x ) &= (1 - \rho_1( x ) - \rho_{-1}( x ))^{-1} \cdot \pi^{-1} \cdot \left( (1 - \rho_{-1}( x )) \cdot \piCont \cdot \PCont( x ) - \rho_{-1}( x ) \cdot (1 - \piCont) \cdot \QCont( x ) \right) \\
     Q( x ) &= (1 - \rho_1( x ) - \rho_{-1}( x ))^{-1} \cdot (1 - \pi)^{-1} \cdot \left( -\rho_1( x ) \cdot \piCont \cdot \PCont( x ) + (1 - \rho_{1}( x )) \cdot (1 - \piCont) \cdot \QCont( x ) \right).
 \end{aligned}
 \end{equation}
\end{proof}

\subsection{Relating clean and corrupt risks}
\label{app:clean-corrupt}

We have the following general relationship between the risk on the clean and corrupted distributions.

\begin{proposition}
\label{prop:general-mube}
Pick any distribution $\D$, and
any loss $\ell$.
Suppose that $\DCont = \ILN( \D, \rho_1, \rho_{-1} )$
for admissible $\rho_{\pm 1} \colon \XCal \to [0, 1]$.
Then, for any scorer $\scorer$,
\begin{align*}
    R( s; \D, \ell ) = \Expectation{(\X, \Contaminator{\Y}) \sim \DCont}{ \tilde{\ell}( \Contaminator{\Y}, s, \X ) }
\end{align*}
where $\tilde{\ell} \colon \PMOne \times \Real^{\XCal} \times \XCal \to \Real$ is a ``generalised loss'' given by
\begin{align*}
    \tilde{\ell}_1( s, x ) &= w( x ) \cdot \left( (1 - \rho_{-1}( x )) \cdot \ell_1( s( x ) ) - \rho_{1}( x ) \cdot \ell_{-1}( s( x ) ) \right) \\
    \tilde{\ell}_{-1}( s, x ) &= w( x ) \cdot \left( -\rho_{-1}( x ) \cdot \ell_1( s( x ) ) + (1 - \rho_{1}( x ) ) \cdot \ell_{-1}( s( x ) ) \right)
\end{align*}
where $w( x ) = (1 - \rho_1( x ) - \rho_{-1}( x ))^{-1}$.
\end{proposition}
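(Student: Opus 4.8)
The plan is to reduce the claim to a pointwise identity in $x$ and then verify it by inverting the $2\times 2$ noise transition matrix. Since the ILN model leaves the instance marginal $M$ unchanged, both $R(s;\D,\ell)$ and $\Expectation{(\X,\YCont)\sim\DCont}{\tilde{\ell}(\YCont,s,\X)}$ can be written as expectations over $\X\sim M$ of the corresponding conditional risks. Using the conditional-risk decomposition, the corrupted side equals $\Expectation{\X\sim M}{\etaCont(\X)\cdot\tilde{\ell}_1(s,\X) + (1-\etaCont(\X))\cdot\tilde{\ell}_{-1}(s,\X)}$, while the clean side equals $\Expectation{\X\sim M}{\eta(\X)\cdot\ell_1(s(\X)) + (1-\eta(\X))\cdot\ell_{-1}(s(\X))}$. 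Hence it suffices to show, for each fixed $x$, that $\etaCont(x)\cdot\tilde{\ell}_1(s,x) + (1-\etaCont(x))\cdot\tilde{\ell}_{-1}(s,x) = \eta(x)\cdot\ell_1(s(x)) + (1-\eta(x))\cdot\ell_{-1}(s(x))$.

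For the pointwise step, I would substitute the stated definitions of $\tilde{\ell}_{\pm 1}$ and collect the coefficients of $\ell_1(s(x))$ and $\ell_{-1}(s(x))$ separately. The coefficient of $\ell_1(s(x))$ simplifies, after cancellation of the $\etaCont(x)\rho_{-1}(x)$ cross terms, to $w(x)\cdot(\etaCont(x)-\rho_{-1}(x))$, and that of $\ell_{-1}(s(x))$ to $w(x)\cdot(1-\rho_1(x)-\etaCont(x))$. It then remains to recognise these as $\eta(x)$ and $1-\eta(x)$, which is exactly the inverted form of Lemma \ref{lemm:corrupt-eta-general}: $\eta(x) = (\etaCont(x)-\rho_{-1}(x))/(1-\rho_1(x)-\rho_{-1}(x)) = w(x)\cdot(\etaCont(x)-\rho_{-1}(x))$, and correspondingly $1-\eta(x) = w(x)\cdot(1-\rho_1(x)-\etaCont(x))$. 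Matching the two coefficients completes the identity.

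Conceptually, the entire argument is a single linear-algebraic inversion. Writing $\etaCont = (1-\rho_1)\eta + \rho_{-1}(1-\eta)$ from Lemma \ref{lemm:corrupt-eta-general} says that the pair $(\etaCont,1-\etaCont)^{\top}$ is obtained from $(\eta,1-\eta)^{\top}$ by a transition matrix $T$ with determinant $1-\rho_1-\rho_{-1}$, and the generalised loss $\tilde{\ell}$ is precisely $(T^{-1})^{\top}$ applied to $(\ell_1,\ell_{-1})^{\top}$. This is exactly what forces the corrupted conditional risk of $\tilde{\ell}$ to coincide with the clean conditional risk of $\ell$, so the coefficient bookkeeping above is just the explicit form of this inversion.

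The one place where care is genuinely needed --- and the only real obstacle --- is the well-definedness of the weight $w(x) = (1-\rho_1(x)-\rho_{-1}(x))^{-1}$. This is precisely where Assumption \ref{ass:total-noise} (admissibility) enters: it guarantees $1-\rho_1(x)-\rho_{-1}(x) > 0$ for every $x$, so $T$ is invertible with strictly positive determinant and $w$ is finite and positive throughout $\XCal$. Everything else is a routine coefficient match, with no measure-theoretic subtleties, since the identity is established pointwise before taking the expectation over the unchanged marginal $M$.
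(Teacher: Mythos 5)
Your proof is correct and is essentially the paper's own argument run in the opposite direction: the paper substitutes the inverted form of Lemma \ref{lemm:corrupt-eta-general} into the clean risk and regroups terms by $\etaCont$ and $1-\etaCont$, while you expand the corrupted expectation of $\tilde{\ell}$ and match the coefficients of $\ell_1$ and $\ell_{-1}$ back to $\eta$ and $1-\eta$ --- the same pointwise identity over the unchanged marginal $M$, with the same use of admissibility to ensure $w$ is well defined. The transition-matrix-inversion framing is a nice conceptual summary but does not change the substance.
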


\begin{proof}[Proof of Proposition \ref{prop:general-mube}]
By Proposition \ref{lemm:corrupt-eta-general}, for $\ILN( \D, \rho_1, \rho_{-1} )$,
$$ ( \forall x \in \XCal ) \, \eta( x ) = \frac{\etaCont( x ) - \rho_{-1}( x )}{w( x )} $$
and
$$ ( \forall x \in \XCal ) \, 1 - \eta( x ) = \frac{1 - \etaCont( x ) - \rho_{1}( x )}{w( x )}, $$
where $w( x ) = (1 - \rho_1( x ) - \rho_{-1}( x ))^{-1}$.
Thus, the $\ell$-risk of an arbitrary scorer is
\begin{align}
    \label{eqn:risk-intermediate}
    \nonumber R( s; \D, \ell ) =&\ \Expectation{\X \sim M}{ L( \eta( \X ), s( \X ) ) } \\
    \nonumber =&\ \Expectation{\X \sim M}{ \eta( \X ) \cdot \ell_{1}( s( \X ) ) + (1 - \eta( \X ) \cdot \ell_{-1}( s( \X ) )) } \\
    =&\ \Expectation{\X \sim M}{ w(\X)^{-1} \cdot \left( (\etaCont( \X ) - \rho_{-1}( \X )) \cdot \ell_{1}( s( \X ) ) + (1 - \etaCont( \X ) - \rho_{1}( \X ) ) \cdot \ell_{-1}( s( \X ) ) ) \right) }.
\end{align}

Observe that this may be re-expressed as
\begin{align*}
    R( s; \D, \ell ) =&\ \Expectation{\X \sim M}{ w(\X)^{-1} \cdot \left( (\etaCont( \X ) - ( \etaCont( \X ) + (1 - \etaCont( \X )) ) \cdot \rho_{-1}( \X ) ) \cdot \ell_{1}( s( \X ) \right) } + \\
    & \Expectation{\X \sim M}{ w(\X)^{-1} \cdot \left( (1 - \etaCont( \X ) - ( \etaCont( \X ) + (1 - \etaCont( \X )) ) \cdot \rho_{1}( \X ) ) \cdot \ell_{-1}( s( \X ) ) ) \right) } \\
    =&\ \Expectation{\X \sim M}{ \etaCont( \X ) \cdot w(\X)^{-1} \cdot ( (1 - \rho_{-1}( \X )) \cdot \ell_1( s( \X ) ) - \rho_{1}( \X ) \cdot \ell_{-1}( s( \X ) ) ) } + \\ 
    &\Expectation{\X \sim M}{ (1 - \etaCont( \X )) \cdot w(\X)^{-1} \cdot ( -\rho_{-1}( \X ) \cdot \ell_1( s( \X ) ) + (1 - \rho_{1}( \X ) ) \cdot \ell_{-1}( s( \X ) ) ) } \\    
    =&\ \Expectation{(\X, \Contaminator{\Y}) \sim \DCont}{ \tilde{\ell}( \Contaminator{\Y}, s, \X ) }.
\end{align*}
\end{proof}

Proposition \ref{prop:general-mube} is a generalisation of \citet[Lemma 1]{Natarajan:2013}.
For the CCN setting, the ``generalised loss'' object of this Proposition simplifies to the ``noise-corrected loss'' studied in \citet{Natarajan:2013}, with Proposition \ref{prop:general-mube} simply being the ``method of unbiased estimators'' described in that paper (see Appendix \ref{app:generalised-loss}).

Note that Proposition \ref{prop:general-mube} cannot be used to establish a regret bound in general.
This is because the ``generalised loss'' above only simplifies to a weighted version of $\ell$ under very specific cases
(with an example being the IDN model and the partial losses summing to a constant).

%
\subsection{Relating clean and corrupt thresholds}

For a general ILN model, we have the following.

\begin{proposition}
\label{prop:clean-corrupt-threshold}
Pick any distribution $\D$.
Suppose that $\DCont = \ILN( \D, \rho_{-1}, \rho_{1} )$
for admissible $\rho_{\pm 1} \colon \XCal \to [0, 1].$
Then, for any $t \in [0, 1]$,
$$ ( \forall x \in \XCal ) \, \eta( x ) > t \iff \etaCont( x ) > ( 1 - \rho_{1}( x ) - \rho_{-1}( x ) ) \cdot t + \rho_{-1}( x ). $$
\end{proposition}

\begin{proof}[Proof of Proposition \ref{prop:clean-corrupt-threshold}]
By Proposition \ref{lemm:corrupt-eta-general},
$$ \eta( x ) = \frac{\etaCont( x ) - \rho_{-1}( x )}{1 - \rho_{1}( x ) - \rho_{-1}( x )}. $$
Now if $\rho_1( x ) + \rho_{-1}( x ) < 1$ for every $x$, $1 - \rho_1( x ) - \rho_{-1}( x ) > 0$.
We thus have
\begin{align*}
     \eta( x ) > t &\iff \frac{\etaCont( x ) - \rho_{-1}( x )}{1 - \rho_{1}( x ) - \rho_{-1}( x )} > t \\
     &\iff {\etaCont( x ) - \rho_{-1}( x )} > {(1 - \rho_{1}( x ) - \rho_{-1}( x ) \cdot t} \text{ since } 1 - \rho_1( x ) - \rho_{-1}( x ) > 0 \\
     &\iff {\etaCont( x )} > ( 1 - \rho_{1}( x ) - \rho_{-1}( x ) ) \cdot t + \rho_{-1}( x ).
\end{align*}
\end{proof}

\subsection{Difference in $\etaCont$ values}

For the general ILN model, we have the following relation between the difference in $\etaCont$ values and the corresponding $\eta$ values.

\begin{lemma}
\label{lemm:eta-diff}
Pick any distribution $\D$.
Suppose $\DCont = \ILN( \D, \rho_{-1}, \rho_{1} )$.
Then,
\begin{align*}
    ( \forall x, x' \in \XCal ) \, \etaCont( x ) - \etaCont( x' ) &= ( 1 - \rho_{-1}( x' ) - \rho_{1}( x' ) ) \cdot ( \eta( x ) - \eta( x' ) ) + \Delta_1( x, x' ) \\
    &= ( 1 - \rho_{-1}( x ) - \rho_{1}( x ) ) \cdot ( \eta( x ) - \eta( x' ) ) + \Delta_2( x, x' ),
\end{align*}
where
\begin{align*}
     \Delta_1( x, x' ) &= (\rho_{-1}( x ) - \rho_{-1}( x' )) \cdot (1 - \eta( x )) + (\rho_{1}( x' ) - \rho_{1}( x )) \cdot \eta( x ) \\
     \Delta_2( x, x' ) &= (\rho_{-1}( x ) - \rho_{-1}( x' )) \cdot (1 - \eta( x' )) + (\rho_{1}( x' ) - \rho_{1}( x )) \cdot \eta( x' ).
\end{align*}
\end{lemma}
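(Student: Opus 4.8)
The plan is a direct algebraic computation starting from the closed form for $\etaCont$ supplied by Lemma \ref{lemm:corrupt-eta-general}. First I would rewrite that expression in affine form,
$$ \etaCont( x ) = ( 1 - \rho_1( x ) - \rho_{-1}( x ) ) \cdot \eta( x ) + \rho_{-1}( x ), $$
by expanding $\rho_{-1}( x ) \cdot ( 1 - \eta( x ) )$ and collecting the $\eta( x )$ terms. Writing the same identity at $x'$, subtracting, and abbreviating the total signal $a( x ) \defEq 1 - \rho_1( x ) - \rho_{-1}( x )$ gives
$$ \etaCont( x ) - \etaCont( x' ) = a( x ) \cdot \eta( x ) - a( x' ) \cdot \eta( x' ) + \rho_{-1}( x ) - \rho_{-1}( x' ). $$

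The crux is to factor out the clean difference $\eta( x ) - \eta( x' )$, and there are two symmetric ways to do so. Writing $a( x ) \eta( x ) - a( x' ) \eta( x' ) = a( x' ) \cdot ( \eta( x ) - \eta( x' ) ) + ( a( x ) - a( x' ) ) \cdot \eta( x )$ pulls out the coefficient evaluated at $x'$ and yields the first claimed form; writing instead $a( x ) \eta( x ) - a( x' ) \eta( x' ) = a( x ) \cdot ( \eta( x ) - \eta( x' ) ) + ( a( x ) - a( x' ) ) \cdot \eta( x' )$ pulls out the coefficient at $x$ and yields the second. In each case the leading coefficient $a( x' )$ (resp.\ $a( x )$) is precisely $1 - \rho_{-1}( x' ) - \rho_1( x' )$ (resp.\ $1 - \rho_{-1}( x ) - \rho_1( x )$), matching the factor in the statement.

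All that remains is to verify that the leftover terms collapse to the stated $\Delta_1$ and $\Delta_2$. For the first form the remainder is $( a( x ) - a( x' ) ) \cdot \eta( x ) + \rho_{-1}( x ) - \rho_{-1}( x' )$; substituting $a( x ) - a( x' ) = ( \rho_1( x' ) - \rho_1( x ) ) + ( \rho_{-1}( x' ) - \rho_{-1}( x ) )$ and regrouping the $\rho_{-1}$ contributions via the identity $( \rho_{-1}( x' ) - \rho_{-1}( x ) ) \cdot \eta( x ) + \rho_{-1}( x ) - \rho_{-1}( x' ) = ( \rho_{-1}( x ) - \rho_{-1}( x' ) ) \cdot ( 1 - \eta( x ) )$ recovers $\Delta_1$ exactly. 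The computation for $\Delta_2$ is identical with $\eta( x )$ replaced by $\eta( x' )$ throughout. There is no genuine obstacle here beyond careful bookkeeping: the result is purely mechanical, symmetric in the two parametrisations, and notably does not invoke Assumption \ref{ass:total-noise}, since at no point do we divide by the total signal $a( \cdot )$.
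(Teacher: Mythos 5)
Your proof is correct and takes essentially the same route as the paper's: both start from the affine form $\etaCont(x) = (1 - \rho_1(x) - \rho_{-1}(x)) \cdot \eta(x) + \rho_{-1}(x)$ given by Lemma \ref{lemm:corrupt-eta-general}, subtract the identities at $x$ and $x'$, factor out $\eta(x) - \eta(x')$ with the coefficient evaluated at $x'$ (resp.\ $x$), and check that the leftover terms regroup into $\Delta_1$ (resp.\ $\Delta_2$). Your closing remark is also accurate: the paper's proof likewise performs no division, so Assumption \ref{ass:total-noise} plays no role in this lemma.
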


\begin{example}
\label{ex:ccn-eta-monotone}
For the case of CCN learning $\CCN( \D, \rho_{-1}, \rho_{1} )$, $\Delta_1 \equiv \Delta_2 \equiv 0$ and so we have the simpler expression
$$ \etaCont( x ) - \etaCont( x' ) = (1 - \alpha - \beta) \cdot ( \eta( x ) - \eta( x' ) ), $$
from which order preservation is immediate.
\end{example}

\begin{example}
For the case of IDN learning $\IDN( \D, f )$,
\begin{align*}
 \Delta_1( x, x' ) &= ( f( x ) - f( x' ) ) \cdot ( 1 - 2 \cdot \eta( x ) ) \\
 \Delta_2( x, x' ) &= ( f( x ) - f( x' ) ) \cdot ( 1 - 2 \cdot \eta( x' ) ).
\end{align*}
Thus,
\begin{align*}
   \etaCont( x ) - \etaCont( x' ) &= (1 - 2 \cdot f( x' )) \cdot (\eta( x ) - \eta( x' )) + ( f( x ) - f( x' ) ) \cdot ( 1 - 2 \cdot \eta( x ) ) \\
   &= (1 - 2 \cdot f( x )) \cdot (\eta( x ) - \eta( x' )) + ( f( x ) - f( x' ) ) \cdot ( 1 - 2 \cdot \eta( x' ) ).
 \end{align*}
 Order preservation here will depend on the structure of $f$.
\end{example}

\begin{proof}[Proof of Lemma \ref{lemm:eta-diff}]
By Proposition \ref{lemm:corrupt-eta-general},
\begin{align*}
    \etaCont( x ) &= ( 1 - \rho_1( x ) - \rho_{-1}( x ) ) \cdot \eta( x ) + \rho_{-1}( x ).
\end{align*}
We have
\begin{align}
    \label{eqn:phi-delta1}
    \nonumber
    \etaCont( x ) - \etaCont( x' ) &= 
    ( 1 - \rho_{-1}( x ) - \rho_{1}( x ) ) \cdot \eta( x ) - ( 1 - \rho_{-1}( x' ) - \rho_{1}( x' ) ) \cdot \eta( x' ) + \rho_{-1}( x ) - \rho_{-1}( x' ) \\
    &= ( 1 - \rho_{-1}( x' ) - \rho_{1}( x' ) ) \cdot ( \eta( x ) - \eta( x' ) ) + \Delta_1( x, x' ),
\end{align}
where
\begin{align*}
     \Delta_1( x, x' ) &= (\rho_{-1}( x' ) + \rho_{1}( x' ) - \rho_{-1}( x ) - \rho_{1}( x )) \cdot \eta( x ) + (\rho_{-1}( x ) - \rho_{-1}( x' )) \\
     &= (\rho_{-1}( x ) - \rho_{-1}( x' )) \cdot (1 - \eta( x )) + (\rho_{1}( x' ) - \rho_{1}( x )) \cdot \eta( x );
 \end{align*}
alternately, we have
\begin{align}
    \label{eqn:phi-delta2}
    \etaCont( x ) - \etaCont( x' ) &= ( 1 - \rho_{-1}( x ) - \rho_{1}( x ) ) \cdot ( \eta( x ) - \eta( x' ) ) + \Delta_2( x, x' )
\end{align}
where
\begin{align*}
    \Delta_2( x, x' ) &= (\rho_{-1}( x' ) - \rho_{-1}( x ) + \rho_{1}( x' ) - \rho_{1}( x )) \cdot \eta( x' ) + (\rho_{-1}( x ) - \rho_{-1}( x' )) \\
    &= (\rho_{-1}( x ) - \rho_{-1}( x' )) \cdot (1 - \eta( x' )) + (\rho_{1}( x' ) - \rho_{1}( x )) \cdot \eta( x' ).
\end{align*}
\end{proof}

For the BCN$^+$ model, Lemma \ref{lemm:eta-diff} can be converted to show that $\etaCont$ is a monotone transform of $s$, the underlying score used in the noise model;
furthermore, we have a simple bound on the differences in $\etaCont$ values in terms of the corresponding difference in $\eta$ values.

\begin{lemma}
\label{lemm:eta-diff-bound}
Pick any distribution $\D$.
Suppose $\DCont = \BCNPlus( \D, f_{-1}, f_{1}, s )$ where $( f_{-1}, f_{1}, s, \eta )$ are $\BCNPlus$-admissible.
Then,
$$ ( \forall x, x' \in \XCal ) \, s( x ) \leq s( x' ) \implies \etaCont( x ) - \etaCont( x' ) \leq \max( 1 - \rho_{-1}( x ) - \rho_{1}( x ), 1 - \rho_{-1}( x' ) - \rho_{1}( x' ) ) \cdot ( \eta( x ) - \eta( x' ) ) $$
where $\rho_{\pm 1}( x ) = f_{\pm 1} \circ s$.
\end{lemma}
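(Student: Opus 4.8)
The plan is to push everything through Lemma~\ref{lemm:eta-diff} and reduce the claim to a sign condition on the two correction terms it supplies. Since $s(x)\le s(x')$ and condition~(a) forces $\eta=u\circ s$ for a non-decreasing $u$, we have $\delta \defEq \eta(x)-\eta(x')\le 0$. Lemma~\ref{lemm:eta-diff} expresses the quantity of interest in two equivalent ways,
\[ \etaCont(x)-\etaCont(x') = a'\cdot\delta + \Delta_1(x,x') = a\cdot\delta + \Delta_2(x,x'), \]
where $a=1-\rho_{-1}(x)-\rho_1(x)$ and $a'=1-\rho_{-1}(x')-\rho_1(x')$ are both positive by admissibility. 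Because $\delta\le 0$, it suffices to pair the operative coefficient with its residual and show the matching $\Delta_i$ is non-positive: $\Delta_1\le 0$ delivers the bound with coefficient $a'$, while $\Delta_2\le 0$ delivers it with coefficient $a$. So the whole problem collapses to controlling the sign of $\Delta_1,\Delta_2$.

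The key is to do this via conditions~(b) and~(c). Writing $A = \rho_{-1}(x)-\rho_{-1}(x')$ and $B=\rho_1(x')-\rho_1(x)$, a one-line rearrangement gives $\Delta_1 = A+(B-A)\,\eta(x)$ and $\Delta_2 = A+(B-A)\,\eta(x')$. Condition~(c) (that $z\mapsto f_1(z)-f_{-1}(z)$ is non-increasing), read off at $s(x)\le s(x')$, is exactly $A+B\le 0$. Moreover the total-flip difference is $A-B = (\rho_1(x)+\rho_{-1}(x)) - (\rho_1(x')+\rho_{-1}(x'))$, and its sign is precisely what decides which of $a,a'$ is the larger. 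Here condition~(b) does the real work: it makes $f_1+f_{-1}$ unimodal with peak at the locus $\eta=\nicefrac[]{1}{2}$, so since $s(x)\le s(x')$, the case $A>B$ (more total flip at $x$) can only occur when $x'$ sits beyond the peak, forcing $\eta(x')\ge\nicefrac[]{1}{2}$; symmetrically $A<B$ forces $\eta(x)\le\nicefrac[]{1}{2}$.

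To finish, when $A\ge B$ I would feed $\eta(x')\ge\nicefrac[]{1}{2}$ and $B-A\le 0$ into $\Delta_2 = A+(B-A)\eta(x') \le A+(B-A)/2 = (A+B)/2\le 0$; dually, when $A\le B$ I would use $\eta(x)\le\nicefrac[]{1}{2}$ to get $\Delta_1\le(A+B)/2\le 0$ (the degenerate case $A=B$ makes both residuals equal $A\le 0$ outright). Pairing the non-positive residual with its coefficient then yields the stated inequality. The main obstacle is the straddling regime $\eta(x)\le\nicefrac[]{1}{2}\le\eta(x')$: there the flip functions need not be monotone across $[s(x),s(x')]$ and the signs of $A,B$ are a priori free, so naive monotonicity arguments break; it is exactly the peak structure of condition~(b) (which pins down whichever endpoint lies past the peak) combined with $A+B\le 0$ from condition~(c) that resolves it. A secondary point demanding care is the bookkeeping of $a$ versus $a'$ against the sign of $A-B$, so that the residual one can control is the one attached to the coefficient actually appearing in the bound.
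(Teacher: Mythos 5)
Your algebra is sound almost to the end: the reduction via Lemma~\ref{lemm:eta-diff}, the identities $\Delta_1 = A + (B-A)\,\eta(x)$ and $\Delta_2 = A + (B-A)\,\eta(x')$, reading condition (c) as $A + B \le 0$, and using condition (b) to force $\eta(x') \ge \nicefrac[]{1}{2}$ when $A > B$ (resp.\ $\eta(x) \le \nicefrac[]{1}{2}$ when $A < B$) are all correct, and are in fact a slightly cleaner organisation of the paper's own case analysis. The fatal step is the final ``pairing.'' Write $a = 1 - \rho_{-1}(x) - \rho_1(x)$ and $a' = 1 - \rho_{-1}(x') - \rho_1(x')$, and note that Lemma~\ref{lemm:eta-diff} attaches $\Delta_2$ to the coefficient $a$ and $\Delta_1$ to the coefficient $a'$, while $a' - a = A - B$. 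So precisely in the case $A \ge B$, the residual you can control ($\Delta_2$) is attached to $a = \min(a,a')$, even though $\max(a,a') = a'$; and in the case $A \le B$, the controllable residual ($\Delta_1$) is attached to $a' = \min(a,a')$. Since $\eta(x) - \eta(x') \le 0$, the bound you actually obtain, $\etaCont(x) - \etaCont(x') \le \min(a,a') \cdot (\eta(x) - \eta(x'))$, is strictly \emph{weaker} than the claimed bound with $\max(a,a')$ and does not imply it: to get the stated inequality you would need the residual attached to the \emph{larger} coefficient to be non-positive, which is exactly the one the argument cannot control.

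This is not repairable, because the stated inequality is false under $\BCNPlus$-admissibility. Take $s$ ranging over $[-1,1]$, $u(z) = 0.5 + 0.4z$, and flip functions that are linear on $[-1,0]$ and on $[0,1]$ with $f_{-1}(-1) = 0.1$, $f_{-1}(0) = 0.4$, $f_{-1}(1) = 0.3$ and $f_1(-1) = 0.1$, $f_1(0) = 0.3$, $f_1(1) = 0.2$. Conditions (a), (b), (c) and Assumption~\ref{ass:total-noise} all hold (the difference $f_1 - f_{-1}$ is non-increasing, and the total noise is at most $0.7$). For $s(x) = -1$, $s(x') = 1$ one gets $\eta(x) = 0.1$, $\eta(x') = 0.9$, $\etaCont(x) = 0.18$, $\etaCont(x') = 0.75$, so $\etaCont(x) - \etaCont(x') = -0.57$, whereas $\max(a,a') \cdot (\eta(x) - \eta(x')) = 0.8 \cdot (-0.8) = -0.64$, and $-0.57 \not\le -0.64$; here $A = -0.2 \le B = 0.1$ and indeed $\Delta_2 = 0.07 > 0$. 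You are in good company: the paper's own proof commits the identical non sequitur (it too establishes only $\min(\Delta_1, \Delta_2) \le 0$ and then asserts the $\max$ bound), so the defect lies in the lemma's statement; the correct statement has $\min$ in place of $\max$. That weaker version is what your argument proves, and it loses nothing downstream: Proposition~\ref{prop:eta-monotone} only needs a positive coefficient multiplying $\eta(x) - \eta(x')$, and Proposition~\ref{prop:auc-regret} only needs the factor $1 - 2\rho_{\mathrm{max}}$, which lower-bounds $\min(a,a')$ just as well.
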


\begin{proof}[Proof of Lemma \ref{lemm:eta-diff-bound}]
For the BCN model, Lemma \ref{lemm:eta-diff} is
\begin{align*}
    ( \forall x, x' \in \XCal ) \, \etaCont( x ) - \etaCont( x' ) &= ( 1 - f_{-1}( z' ) - f_{1}( z' ) ) \cdot ( u( z ) - u( z' ) ) + \Delta_1( z, z' ) \\
    &= ( 1 - f_{-1}( z ) - f_{1}( z ) ) \cdot ( u( z ) - u( z' ) ) + \Delta_2( z, z' ),
\end{align*}
where $z = s( x ), z' = s( x' )$, and
\begin{align*}
     \Delta_1( z, z' ) &= (f_{-1}( z ) - f_{-1}( z' )) \cdot (1 - u( z )) + (f_{1}( z' ) - f_{1}( z )) \cdot u( z ) \\
     \Delta_2( z, z' ) &= (f_{-1}( z ) - f_{-1}( z' )) \cdot (1 - u( z' )) + (f_{1}( z' ) - f_{1}( z )) \cdot u( z' ).
\end{align*}

Suppose that $s( x ) = s( x' )$.
Then clearly $\Delta_1 \equiv \Delta_2 \equiv 0$ and $u( z ) = u( z' )$, so $\etaCont( x ) = \etaCont( x' )$.

Suppose that $s( x ) < s( x' )$ so that\footnote{By contrapositive of Condition (a) of $\BCNPlus$-admissibility, if $s( x ) \leq s( x' )$ then $\eta( x ) \leq \eta( x' )$.} $\eta( x ) \leq \eta( x' )$; or equivalently, $z < z'$ so that $u( z ) \leq u( z' )$.
Our goal is to show that $\min( \Delta_1( z, z' ), \Delta_2( z, z' ) ) \leq 0$;
this will imply the desired bound, since we can just use the tighter of the implied bounds on Equation \ref{eqn:phi-delta1} and \ref{eqn:phi-delta2}.
By Condition (c) of $\BCNPlus$-admissibility, for any $z < z'$,
$$ f_{1}( z ) - f_{-1}( z ) \geq f_{1}( z' ) - f_{-1}( z' ) $$
or equivalently 
$$ f_{1}( z' ) - f_{1}( z ) \leq f_{-1}( z' ) - f_{-1}( z ). $$
Thus, since $u( z ) \geq 0$, we have
\begin{align}
    \label{eqn:phi-bound1}
    \Delta_1( z, z' ) &\leq 
    (f_{-1}( z ) - f_{-1}( z' )) \cdot (1 - 2 \cdot u( z )),
\end{align}
and similarly,
\begin{align}
    \label{eqn:phi-bound2}
     \Delta_2( z, z' ) &\leq (f_{-1}( z ) - f_{-1}( z' )) \cdot (1 - 2 \cdot u( z' )).
\end{align}

We now argue why the minimum of these terms must be $\leq 0$.
Consider the following three cases:
\begin{enumerate}[(a)]
    \item Suppose $f_{-1}( z ) = f_{-1}( z' )$. Then trivially both terms are $\leq 0$.

    \item Suppose $f_{-1}( z ) < f_{-1}( z' )$. Then either $u( z ) \leq \frac{1}{2}$ or $u( z' ) \leq \frac{1}{2}$; if both $u$ values are larger than $\frac{1}{2}$, then by BCN-admissibility Condition (b) it must be true that $f_{-1}( z ) \geq f_{-1}( z' )$, a contradiction. Thus either $1 - 2 \cdot u( z ) \geq 0$ or $1 - 2 \cdot u( z' ) \geq 0$, and so one of the terms must be $\leq 0$.

    \item Suppose $f_{-1}( z ) > f_{-1}( z' )$. Then either $u( z ) \geq \frac{1}{2}$ or $u( z' ) \geq \frac{1}{2}$; if both $u$ values are smaller than $\frac{1}{2}$, then by BCN-admissibility Condition (b) it must be true that $f_{-1}( z ) \leq f_{-1}( z' )$, a contradiction. Thus either $1 - 2 \cdot u( z ) \leq 0$ or $1 - 2 \cdot u( z' ) \leq 0$, and so one of the terms must be $\leq 0$.  
\end{enumerate}
Thus, we conclude $\min( \Delta_1( z, z' ), \Delta_2( z, z' ) ) \leq 0$, and so either
$$ {\etaCont( x ) - \etaCont( x' )} \leq ( 1 - \rho_{-1}( x ) - \rho_{1}( x ) ) \cdot ( \eta( x ) - \eta( x' ) ) $$
or
$$ {\etaCont( x ) - \etaCont( x' )} \leq ( 1 - \rho_{-1}( x ) - \rho_{1}( x' ) ) \cdot ( \eta( x ) - \eta( x' ) ) $$
must be true; since $\eta( x ) - \eta( x' ) \leq 0$ and $\max( 1 - \rho_{-1}( x ) - \rho_{1}( x ), 1 - \rho_{-1}( x ) - \rho_{1}( x' ) ) > 0$, this implies
$$ {\etaCont( x ) - \etaCont( x' )} \leq \max( 1 - \rho_{-1}( x ) - \rho_{1}( x ), 1 - \rho_{-1}( x ) - \rho_{1}( x' ) ) \cdot ( \eta( x ) - \eta( x' ) ). $$
Since $\eta( x ) - \eta( x' ) \leq 0$ and $\max( 1 - \rho_{-1}( x ) - \rho_{1}( x ), 1 - \rho_{-1}( x ) - \rho_{1}( x' ) ) > 0$, we may bound the entire expression by $0$, thus concluding that $\etaCont( x ) \leq \etaCont( x' )$.
\end{proof}

An immediate consequence of Lemma \ref{lemm:eta-diff-bound} is that $\etaCont$ is order-preserving for the underlying scores.

\begin{corollary}
\label{corr:eta-cont-score-preserving}
Suppose $\DCont = \BCNPlus( \D, f_{-1}, f_{1}, s )$ where $( f_{-1}, f_{1}, s, \eta )$ are $\BCNPlus$-admissible.
Then,
$$ ( \forall x, x' \in \XCal ) \, s( x ) \leq s( x' ) \implies \etaCont( x ) \leq \etaCont( x' ) $$
and so $\etaCont = \Contaminator{u} \circ s$ for some non-decreasing $\Contaminator{u}$.
\end{corollary}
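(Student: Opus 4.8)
The plan is to read the desired implication directly off the bound established in Lemma \ref{lemm:eta-diff-bound} by checking the sign of each factor on its right-hand side, and then to invoke the order-preservation helper Corollary \ref{corr:monotone-contrapositive} to upgrade the resulting pointwise inequality into the functional form $\etaCont = \Contaminator{u} \circ s$. All of the substantive work has already been done in Lemma \ref{lemm:eta-diff-bound}, so this is essentially a sign-checking argument.

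First I would fix arbitrary $x, x' \in \XCal$ with $s( x ) \leq s( x' )$ and apply Lemma \ref{lemm:eta-diff-bound}, which gives
$$ \etaCont( x ) - \etaCont( x' ) \leq \max( 1 - \rho_{-1}( x ) - \rho_{1}( x ), 1 - \rho_{-1}( x' ) - \rho_{1}( x' ) ) \cdot ( \eta( x ) - \eta( x' ) ). $$
I would then examine the two factors on the right. The $\max$ factor is strictly positive: by Assumption \ref{ass:total-noise} we have $\rho_1( \cdot ) + \rho_{-1}( \cdot ) < 1$ pointwise, so each of $1 - \rho_{-1} - \rho_{1}$ is positive, and hence so is their maximum. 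For the $\eta$ factor I would use the contrapositive of Condition (a) of $\BCNPlus$-admissibility: since $s$ is order preserving for $\eta$, $\eta( x ) > \eta( x' )$ would force $s( x ) > s( x' )$, so from $s( x ) \leq s( x' )$ we conclude $\eta( x ) \leq \eta( x' )$, \ie $\eta( x ) - \eta( x' ) \leq 0$. The product of a positive number with a non-positive number is non-positive, so $\etaCont( x ) - \etaCont( x' ) \leq 0$, which is exactly the claimed implication $s( x ) \leq s( x' ) \implies \etaCont( x ) \leq \etaCont( x' )$.

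Finally I would apply Corollary \ref{corr:monotone-contrapositive} with $g = s$ and $f = \etaCont$: having established $s( x ) \leq s( x' ) \implies \etaCont( x ) \leq \etaCont( x' )$ for all $x, x'$, that corollary immediately yields $\etaCont = \Contaminator{u} \circ s$ for some non-decreasing $\Contaminator{u}$, completing the argument. There is no genuine obstacle at this stage; the only two points demanding care are invoking Assumption \ref{ass:total-noise} to ensure the $\max$ factor is strictly positive, and using Condition (a) in its contrapositive form to determine the sign of $\eta( x ) - \eta( x' )$.
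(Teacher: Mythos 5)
Your proposal is correct and follows essentially the same route as the paper's proof: both read the conclusion off Lemma \ref{lemm:eta-diff-bound}, use Assumption \ref{ass:total-noise} for positivity of the noise factor and the contrapositive of Condition (a) for the sign of $\eta( x ) - \eta( x' )$, and then invoke Corollary \ref{corr:monotone-contrapositive} to obtain $\etaCont = \Contaminator{u} \circ s$. The only cosmetic difference is that the paper splits into the cases $s( x ) = s( x' )$ and $s( x ) < s( x' )$, whereas you handle both at once by applying the lemma's bound uniformly, which is equally valid since that bound is stated for $s( x ) \leq s( x' )$.
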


\begin{proof}
By Lemma \ref{lemm:eta-diff-bound}, if $s( x ) = s( x' )$ then $\etaCont( x ) = \etaCont( x' )$.
If $s( x ) < s( x' )$ then $\eta( x ) \leq \eta( x' )$ by BCN-admissiblity Condition (a).
Further, $1 - \rho_1( x ) - \rho_{-1}( x ) > 0$ by Assumption \ref{ass:total-noise}.
Thus, $\etaCont( x ) - \etaCont( x' ) \leq 0$.

The fact that $\etaCont = \Contaminator{u} \circ s$ follows from Corollary \ref{corr:monotone-contrapositive}.
\end{proof}

\begin{remark}
By definition of BCN admissibility, $\eta = u \circ s$ for some monotone $u$;
and by Lemma \ref{lemm:eta-diff-bound}, $\etaCont = \bar{u} \circ s$, for some monotone $\bar{u}$.
If we could establish that $\bar{u}$ were \emph{strictly} monotone, then we would immediately conclude $\eta = u \circ \bar{u}^{-1} \circ \etaCont$, which would establish Proposition \ref{prop:eta-monotone}.
But this is not true in general; fortunately, $\bar{u}$ is only constant when $u$ is (owing to the explicit bound in Lemma \ref{lemm:eta-diff-bound}), and so we are still able to write $\eta = \phi \circ \etaCont$ for some monotone $\phi$.
\end{remark}

\subsection{Class-probability estimation guarantees with the Isotron}

The basic SLIsotron guarantee is as follows.

\begin{proposition}[{\citep[Theorem 2]{Kakade:2011}}]
\label{prop:isotron}
Pick any $\D$ over $\mathbb{B}^d \times \PMOne$ with\footnote{If $\eta \in \SIM( L, W )$, then trivially $\eta \in \SIM( 1, L \cdot W )$, because $\eta( x ) = u( \langle w^*, x \rangle ) = u( (1/L) \cdot \langle (L \cdot w^*), x \rangle ) = \tilde{u}( \langle \tilde{w}^*, x \rangle )$, where $\tilde{u}$ is a 1-Lipschitz function, and $|| \tilde{w}^* || = L \cdot W$.} $\eta \in \SIM( 1, W )$ for some $W \in \Real_+$.
Let $\{ \hat{\eta}_{\mathsf{S}, t} \}_{t = 1}^\infty$ denote the estimates of $\eta$ produced at each iteration of SLISotron, when applied to a training sample $\mathsf{S}$.
Then,
$$ \Pr_{\mathsf{S} \sim D^m}\left( \min_{t} \reg( \hat{\eta}_{\mathsf{S}, t}; \D, \ellSQ ) \leq \left( \frac{dW^2}{m} \right)^{1/3} \cdot \left( \log \frac{Wm}{\delta} \right)^{1/3} \right) \geq 1 - \delta $$
where
$$ \reg( \hat{\eta}; \D, \ellSQ ) = \Expectation{\X \sim M}{ ( \hat{\eta}( \X ) - \eta( \X ) )^2 }. $$
\end{proposition}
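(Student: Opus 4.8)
The plan is to reconstruct the potential-function analysis underlying the (SL)Isotron. Since the footnote already reduces $\SIM(L,W)$ to $\SIM(1,W)$, I would work throughout with a $1$-Lipschitz non-decreasing link $u$ and target $w^*$ with $\|w^*\| \leq W$, and encode the $\PMOne$ labels into $\{0,1\}$ via $y \mapsto (y+1)/2$, so that $\Expectation{}{Y \mid \X = x} = \eta(x) = u(\langle w^*, x\rangle)$ and the fitted link maps into $[0,1]$. Write $a_i = \langle w^*, x_i\rangle$ and $b_i^t = \langle w^t, x_i\rangle$ for the target and current scores, let $u^t$ denote the LPAV fit at iteration $t$, and recall the update $w^{t+1} = w^t + \frac{1}{m}\sum_i (y_i - u^t(b_i^t))\, x_i$. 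The quantity to track is the potential $\Phi_t = \|w^* - w^t\|^2$, and the goal is to relate its decrease to the empirical regret $\hat\epsilon_t = \frac{1}{m}\sum_i (u^t(b_i^t) - \eta(x_i))^2$.

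\textbf{The progress lemma (the heart of the argument).} Expanding $\Phi_{t+1} = \|w^* - w^t - g^t\|^2$ with $g^t$ the update direction gives $\Phi_{t+1} = \Phi_t - 2\langle w^* - w^t, g^t\rangle + \|g^t\|^2$, and since $\|x_i\|\leq 1$ with residuals in $[-1,1]$ we have $\|g^t\|^2 = O(1)$. The crux is to lower bound the expected cross term $\frac{1}{m}\sum_i (u(a_i) - u^t(b_i^t))(a_i - b_i^t)$ by a constant multiple of $\hat\epsilon_t$. I would split the residual as $(u(a_i)-u(b_i^t)) + (u(b_i^t)-u^t(b_i^t))$ and handle the two pieces separately: (i) the \emph{monotone and $1$-Lipschitz} inequality $(u(a)-u(b))(a-b)\geq (u(a)-u(b))^2$ controls the weight-mismatch piece; (ii) the \emph{variational optimality of the LPAV fit} controls the link-mismatch piece, because $z\mapsto u(z)$ evaluated at the current scores is itself a feasible $1$-Lipschitz monotone fit, so $u^t$ achieves no larger empirical square loss, which upon expanding bounds $\frac{1}{m}\sum_i (u^t(b_i^t)-u(b_i^t))^2$ by noise-correlation terms. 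The leftover cross term $\frac{1}{m}\sum_i (u(b_i^t)-u^t(b_i^t))(a_i - b_i^t)$ is absorbed by Cauchy--Schwarz using $|a_i - b_i^t| = |\langle w^*-w^t, x_i\rangle| \leq \sqrt{\Phi_t} = O(W)$. The upshot is $\Phi_{t+1}\leq \Phi_t - c\,\hat\epsilon_t + (\text{statistical terms})$.

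\textbf{From empirical to population, and telescoping.} The label-noise part of the update, $\frac{1}{m}\sum_i (y_i - \eta(x_i))\,x_i$, has zero conditional mean and concentrates (a $d$-dimensional deviation of order $\sqrt{\log(1/\delta)/m}$ via a covering/martingale argument), while the gap between empirical and population square loss over the class $\{x\mapsto u(\langle w,x\rangle) : u\in\MonoLips(1),\ \|w\|\leq W\}$ is controlled uniformly by its Rademacher complexity $O(W/\sqrt m)$, using the Ledoux--Talagrand contraction on the $1$-Lipschitz link composed with a bounded linear functional. Because the iterates $w^t$ are data-dependent, this uniform bound must hold over the whole class at once, and a union bound over the iteration index supplies the $\log(Wm/\delta)$ factor. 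Telescoping the progress lemma over $t=1,\dots,T$ with $\Phi_1 = \|w^*\|^2 \leq W^2$ and $\Phi_{T+1}\geq 0$ then bounds $\min_{t\leq T}\reg(\hat{\eta}_{\mathsf{S}, t}; \D, \ellSQ)$ by $W^2/(cT)$ plus the accumulated statistical error.

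\textbf{Main obstacle.} The delicate step is the progress lemma: the monotone--Lipschitz inequality alone only captures the weight-mismatch, so one genuinely needs the optimality of the isotonic (LPAV) fit to capture the link-mismatch, all while keeping the statistical corrections in self-bounding rather than additive form. The stated cube-root rate $(dW^2/m)^{1/3}(\log(Wm/\delta))^{1/3}$ then emerges by optimizing the iteration count $T$ to balance the $O(W^2/T)$ optimization error against the statistical error accumulated across iterations. This exponent-chasing is careful bookkeeping, but given the progress lemma and the uniform-convergence estimate it is routine; I would follow the accounting in \citet{Kakade:2011} to pin down the exact constants once the two structural ingredients are established.
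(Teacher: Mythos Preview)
The paper does not prove this proposition at all: it is stated as a quotation of \citep[Theorem 2]{Kakade:2011} and is used as a black box in the proof of Proposition~\ref{prop:isotron-consistency}. So there is no ``paper's own proof'' to compare against; the only thing the present paper contributes is the footnoted reduction from $\SIM(L,W)$ to $\SIM(1,L\cdot W)$.

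That said, your sketch is a fair outline of the argument in \citet{Kakade:2011}: the potential $\Phi_t=\|w^*-w^t\|^2$, the per-step progress lemma combining the monotone--Lipschitz inequality $(u(a)-u(b))(a-b)\geq (u(a)-u(b))^2$ with the optimality of the LPAV fit, uniform convergence over $\{x\mapsto u(\langle w,x\rangle)\}$ to pass from empirical to population error, and finally telescoping and balancing $T$ against the statistical error to get the cube-root rate. If you actually want to reproduce the proof, the place to be careful is exactly where you flag it: in the progress lemma, the cross term involving $(u(b_i^t)-u^t(b_i^t))(a_i-b_i^t)$ is not merely absorbed by Cauchy--Schwarz with $\sqrt{\Phi_t}$, since that would give an additive $O(W)$ rather than a self-bounding term; the original argument uses the structure of the isotonic fit (its block-constant nature and the KKT conditions of LPAV) to get a nonnegative contribution. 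Your ``routine bookkeeping'' would need that ingredient spelled out to go through.
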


\section{Proofs of results in main body}
\label{sec:proofs}

%
\begin{proof}[Proof of Lemma \ref{lemm:corrupt-eta-general}]
By definition of how corrupted labels $\YCont$ are generated,
\begin{align*}
\etaCont( x ) &= \Pr( \Contaminator{\Y} = 1 \mid \X = x ) \\
&= \sum_{y \in \PMOne} \Pr( \Contaminator{\Y} = 1 \mid \Y = y, \X = x ) \cdot \Pr( \Y = y \mid \X = x ) \\
&= (1 - \rho_1( x ) ) \cdot \eta( x ) + \rho_{-1}( x ) \cdot ( 1 - \eta(x) ).
\end{align*}
The second identity follows by rearranging.
\end{proof}

\begin{proof}[Proof of Proposition \ref{prop:risk-weighting}]
By Equation \ref{eqn:risk-intermediate} from the proof of Proposition \ref{prop:general-mube}, for $\ILN( \D, \rho_1, \rho_{-1} )$,
\begin{align*}
    R( s; \D, \ell ) =&\ \Expectation{\X \sim M}{ w(\X)^{-1} \cdot \left( (\etaCont( \X ) - \rho_{-1}( \X )) \cdot \ell_{1}( s( \X ) ) + (1 - \etaCont( \X ) - \rho_{1}( \X ) ) \cdot \ell_{-1}( s( \X ) ) ) \right) } \\
    =&\ \Expectation{\X \sim M}{ w(\X)^{-1} \cdot ( \etaCont( \X ) \cdot \ell_1( s(\X) ) + ( 1 - \etaCont( \X ) ) \cdot \ell_{-1}( s( \X ) ) ) ) } - \\
    &\Expectation{\X \sim M}{ w(\X)^{-1} \cdot \left( \rho_{-1}( \X ) \cdot \ell_{1}( s( \X ) ) + \rho_{1}( \X ) \cdot \ell_{-1}( s( \X ) ) \right) } \\
    =&\ \Expectation{\X \sim M}{ w(\X)^{-1} \cdot \left( L( \etaCont( \X ), s( \X ) ) - \rho_{-1}( \X ) \cdot \ell_{1}( s( \X ) ) + \rho_{1}( \X ) \cdot \ell_{-1}( s( \X ) ) \right) }.
\end{align*}
If $\rho_1 \equiv \rho_{-1} \equiv f$, $w( x ) = 1 - 2 \cdot f( x )$ and
\begin{align*}
    R( s; \D, \ell ) &= \Expectation{\X \sim M}{ \frac{1}{1 - 2 \cdot f( \X )} \cdot L( \etaCont( \X ), s( \X ) ) } - \Expectation{\X \sim M}{ \frac{f( \X )}{1 - 2 \cdot f( \X )} \cdot ( \ell_{1}( s( \X ) ) + \ell_{-1}( s( \X ) ) ) }.
\end{align*}
Thus, if the sum of the partial losses is a constant $C$,
$$ R( s; \D, \ell ) = R^{\weight( w )}( s; \DCont, \ell ) - C \cdot \Expectation{\X \sim M}{ \frac{f( \X )}{1 - 2 \cdot f( \X )} }. $$
Noting that the second term above does not depend on the scorer $s$, the result follows.
\end{proof}

\begin{proof}[Proof of Corollary \ref{corr:bayes-opt-same}]
By Proposition \ref{prop:risk-weighting},
\begin{align*}
    \underset{s}{\argmin} R( s; \D, \ell ) &= \underset{s}{\argmin} R^{\weight( w )}( s; \DCont, \ell )  \\
    &= \underset{s}{\argmin} R( s; \DCont, \ell ),
\end{align*}
where the second line is because weighting does {not} affect the Bayes-optimal scorers for a risk.
(Note that by definition, the weighting factor $w( x ) = (1 - 2 \cdot f( x ))^{-1} \geq 1$, and so no term is suppressed after weighting.)
\end{proof}

\begin{proof}[Alternate proof of Corollary \ref{corr:bayes-opt-same}]
If a loss $\ell$ satisfies Equation \ref{eqn:loss-sum-constant}, its conditional risk is
$$ L( \eta, v ) = (2 \cdot \eta - 1 ) \cdot \ell_1( v ) + C \cdot (1 - \eta). $$
Thus, the pointwise minimiser of the conditional risk is
\begin{align*}
     \underset{v}{\argmin} L( \eta, v ) &= \underset{v}{\argmin} (2 \cdot \eta - 1 ) \cdot \ell_1( v ) \\
     &= \underset{v}{\argmin} \begin{cases} \ell_1( v ) & \text{ if } \eta > \nicefrac[]{1}{2} \\ -\ell_1( v ) & \text{ if } \eta < \nicefrac[]{1}{2}, \end{cases}
 \end{align*}
implying a Bayes-optimal scorer of
$$ ( \forall x \in \XCal ) \, s^*( x ) = \underset{v}{\argmin} \begin{cases} \ell_1( v ) & \text{ if } \eta( x ) > \nicefrac[]{1}{2} \\ -\ell_1( v ) & \text{ if } \eta( x ) < \nicefrac[]{1}{2}. \end{cases} $$
Now we recall for the IDN model, $\eta( x ) > \nicefrac[]{1}{2} \iff \etaCont( x ) > \nicefrac[]{1}{2}$.
Thus, the two cases in the above scorer are the same for the clean and corrupted distributions.
It follows that the Bayes-optimal scorer is retained. 
\end{proof}

\begin{proof}[Proof of Proposition \ref{prop:regret-bound}]
Let $s^{*} \in \underset{s}{\argmin} R( s; \D, \ell )$.
By definition,
\begin{align*}
    \reg( s; \D, \ell ) &= R( s; \D, \ell ) - R( s^{*}; \D, \ell ) \\
    &= R^{\weight( w )}( s; \DCont, \ell ) - R^{\weight( w )}( s^{*}; \DCont, \ell ) \text{ by Proposition \ref{prop:risk-weighting} } \\
    &= \Expectation{\X \sim M}{ \frac{1}{1 - 2 \cdot \rho( \X )} \cdot ( L( \etaCont( \X ), s( \X ) ) - L( \etaCont( \X ), s^{*}( \X ) ) ) } \\
    &\leq \frac{1}{1 - 2 \cdot \rho_{\mathrm{max}}} \Expectation{\X \sim M}{ L( \etaCont( \X ), s( \X ) ) - L( \etaCont( \X ), s^{*}( \X ) ) } \text{ by assumption on } \rho \\
    &= \frac{1}{1 - 2 \cdot \rho_{\mathrm{max}}} \cdot ( R( s; \DCont, \ell ) - R( s^{*}; \DCont, \ell ) ) \\
    &= \frac{1}{1 - 2 \cdot \rho_{\mathrm{max}}} \cdot \reg( s; \DCont, \ell ),
\end{align*}
where the last line is since by Corollary \ref{corr:bayes-opt-same},
we know that $s^* \in \underset{s}{\argmin} R( s; \DCont, \ell )$ also.
(Note that for the inequality step above, we can guarantee $L( \etaCont( x ), s( x ) ) \geq L( \etaCont( x ), s^{*}( x ) )$ for every $x \in \XCal$ because $s^* \in \underset{s}{\argmin} R( s; \DCont, \ell )$, and so we do not have to worry about the direction of the inequality.)

To get the parameterised bound, suppose
$w( x ) = (1 - 2\cdot \rho(x))^{-1}$, and $r( x )$ is the conditional regret $L( \etaCont( x ), s( x ) ) - L( \etaCont( x ), s^*( x ) )$.
Then for any $\alpha \in [0, 1]$ the regret can be rewritten
\begin{align*}
      \reg( s; \D, \ell ) &= \Expectation{\X \sim M}{ w( \X ) \cdot r( \X ) } \\
      &= \int_{\XCal} m( x ) \cdot w( x ) \cdot r( x ) \, dx \\
      &= \int_{\XCal} m( x )^{\alpha} \cdot w( x ) \cdot m( x )^{1 - \alpha} \cdot r( x ) \, dx \\
      &= M \cdot R \cdot \int_{\XCal} m( x )^{\alpha} \cdot \frac{w( x )}{M} \cdot m( x )^{1 - \alpha} \cdot \frac{r( x )}{R} \, dx \text{ for } M = \max_x w( x ), R = \max_x r( x ) \\
      &\leq M^{1 - \alpha} \cdot R^{\alpha} \cdot \int_{\XCal} \left( m( x ) \cdot w( x ) \right)^\alpha \cdot ( m( x ) \cdot r( x ) )^{1 - \alpha} \, dx  \text{ since } x \leq x^\alpha \text{ for } \alpha \in [0, 1] \\
      &\leq M^{1 - \alpha} \cdot R^{\alpha} \cdot \left( \Expectation{\X \sim M}{ w( \X ) } \right)^\alpha \cdot \left( \reg( s; \DCont, \ell ) \right)^{1 - \alpha},
\end{align*}
{where the last line is by H\"{o}lder's inequality\footnote{In its native form, this states that $\sum_{i} |x_i| \cdot |y_i| \leq (\sum_{i} |x_i|^{1/\alpha})^{\alpha} \cdot (\sum_{i} |y_i|^{1/(1-\alpha)})^{1-\alpha}$, so that $\sum_{i} |x_i|^{\alpha} \cdot |y_i|^{1 - \alpha} \leq (\sum_{i} |x_i|)^{\alpha} \cdot (\sum_{i} |y_i|)^{1-\alpha}$.}.}
The case $\alpha = 0$ gives the original bound of Proposition \ref{prop:regret-bound}.
\end{proof}

\begin{proof}[Proof of Proposition \ref{prop:clean-corrupt-threshold-idn}]
Plug in $\rho_{\pm 1} \equiv f$ into Proposition \ref{prop:clean-corrupt-threshold}.
\end{proof}

\begin{proof}[Proof of Proposition \ref{prop:eta-monotone}]
If $\eta( x ) < \eta( x ' )$, then certainly $s( x ) < s( x' )$ since $s$ is order preserving for $\eta$ by BCN-admissibility Condition (a).
Thus, by Lemma \ref{lemm:eta-diff-bound}, 
$$ ( \forall x, x' \in \XCal ) \, \etaCont( x ) - \etaCont( x' ) \leq \max( 1 - \rho_{-1}( x ) - \rho_{1}( x ), 1 - \rho_{-1}( x' ) - \rho_{1}( x' ) ) \cdot ( \eta( x ) - \eta( x' ) ). $$
By the total noise assumption (Assumption \ref{ass:total-noise}), $1 - \rho_{-1}( x ) - \rho_{1}( x ) > 0$ for every $x$.
Since $\eta( x ) - \eta( x' ) < 0$ by assumption, we conclude that $\etaCont( x ) - \etaCont( x' ) < 0$.
\end{proof}

\begin{proof}[Proof of Proposition \ref{prop:auc-regret}]
From \citet{Clemencon:2008}, \citet[Theorem 11]{Agarwal:2014},
$$ \reg_{\mathrm{AUC}}( s; \D ) = \frac{1}{2 \cdot \pi \cdot (1 - \pi)} \cdot \Expectation{\X \sim M, \X' \sim M}{ | \eta( \X ) - \eta( \X' ) | \cdot \mathbb{I}( \eta( \X ) - \eta( \X' ), s( \X ) - s( \X' ) ) } $$
where
$$ \mathbb{I}( \Delta \eta, \Delta s ) = \indicator{ \Delta \eta \cdot \Delta s < 0 } + \nicefrac[]{1}{2} \cdot \indicator{ \Delta s = 0 }. $$

By Proposition \ref{prop:eta-monotone}, for this noise model,
$$ \eta( x ) \neq \eta( x' ) \implies \sign( \eta( x ) - \eta( x' ) ) = \sign( \etaCont( x ) - \etaCont( x' ) ), $$
Thus, in this case, $\sign( \Delta \eta ) = \sign( \Delta \etaCont )$, and so $\mathbb{I}( \Delta \eta, \Delta s ) = \mathbb{I}( \Delta \etaCont, \Delta s )$.
When $\eta( x ) = \eta( x' )$, however, there is no guarantee on the relative values of $\etaCont( x )$ and $\etaCont( x' )$.
But if $\Delta \eta = 0$, then the first term in $\mathbb{I}$ above is necessarily zero, while that for $\Delta \etaCont$ can only be $\geq 0$.
Thus, in general we have
$$ \mathbb{I}( \Delta \eta, \Delta s ) \leq \mathbb{I}( \Delta \etaCont, \Delta s ), $$
and so
$$ \reg_{\mathrm{AUC}}( s; \D ) \leq \frac{1}{2 \cdot \pi \cdot (1 - \pi)} \cdot \Expectation{\X \sim M, \X' \sim M}{ | \eta( \X ) - \eta( \X' ) | \cdot \mathbb{I}( \etaCont( \X ) - \etaCont( \X' ), s( \X ) - s( \X' ) ) }. $$

What remains then is the $| \eta( x ) - \eta( x' ) |$ term.
Now, by Lemma \ref{lemm:eta-diff-bound}, when $\eta( x ) \neq \eta( x' )$,
\begin{align*}
    \frac{\etaCont( x ) - \etaCont( x' )}{\eta( x ) - \eta( x' )} &\geq \max( 1 - \rho_{-1}( x ) - \rho_{1}( x ), 1 - \rho_{-1}( x' ) - \rho_{1}( x' ) ) \\
    &\geq 1 - 2 \cdot \rho_{\mathrm{max}}.
\end{align*}
If $\eta( x ) = \eta( x' )$, we trivially have $| \eta( x ) - \eta( x' ) | \leq | \etaCont( x ) - \etaCont( x' ) | \cdot (1 - 2 \cdot \rho_{\mathrm{max}})^{-1}$.
We conclude that
\begin{align*}
    \reg_{\mathrm{AUC}}( s; \D ) &\leq \frac{1}{2 \cdot \pi \cdot (1 - \pi)} \cdot \Expectation{\X \sim M, \X' \sim M}{ | \eta( \X ) - \eta( \X' ) | \cdot \mathbb{I}( \etaCont( \X ) - \etaCont( \X' ), s( \X ) - s( \X' ) ) } \\
    &\leq \frac{1}{2 \cdot \pi \cdot (1 - \pi)} \cdot \frac{1}{1 - 2 \cdot \rho_{\mathrm{max}}} \cdot \Expectation{\X \sim M, \X' \sim M}{ | \etaCont( \X ) - \etaCont( \X' ) | \cdot \mathbb{I}( \etaCont( \X ) - \etaCont( \X' ), s( \X ) - s( \X' ) ) } \\
    &= \frac{\piCont \cdot (1 - \piCont)}{\pi \cdot (1 - \pi)} \cdot \frac{1}{1 - 2 \cdot \rho_{\mathrm{max}}} \cdot \reg_{\mathrm{AUC}}( s; \DCont ).
\end{align*}
\end{proof}

\begin{proof}[Proof of Proposition \ref{prop:corrupt-sim}]
The form of $\etaCont$ follows from Equation \ref{eqn:eta-bcn} and Proposition \ref{lemm:corrupt-eta-general}.

Under Assumption \ref{ass:c}, the noise model $\SIN( \D, f_{-1}, f_{1} ) = \BCNPlus( \D, f_{-1}, f_{1}, s^* )$.
By Corollary \ref{corr:eta-cont-score-preserving},
$$ s^*( x ) < s^*( x' ) \implies \Contaminator{u}( s^*( x ) ) \leq \Contaminator{u}( s^*( x' ) ), $$
so that $\Contaminator{u}$ is a monotone function, and thus a valid GLM link.

Next, applying the triangle inequality to Lemma \ref{lemm:eta-diff},
and using $z = s( x ), z' = s( x' )$,
\begin{align*}
    | \etaCont( x ) - \etaCont( x' ) | &= | \Contaminator{u}( z ) - \Contaminator{u}( z' ) | \\
    &\leq | 1 - f_{-1}( z' ) - f_{1}( z' ) | \cdot | u( z ) - u( z' ) | + |f_{-1}( z ) - f_{-1}( z' )| \cdot |1 - u(z)| + |f_{1}( z ) - f_{1}( z' )| \cdot |u(z)| \\
    &\leq ( L + L_{-1} + L_{1} ) \cdot | z  - z' |,
\end{align*}
using the fact that $| 1 - f_{-1}( z' ) - f_{1}( z' ) | < 1$ by the total noise assumption (Assumption \ref{ass:total-noise}),
$|1 - u(z)| \leq 1$ and $|u(z)| \leq 1$ since $\mathrm{Im}( u ) = [ 0, 1 ]$, and
the Lipschitz assumptions on $u, f_{\pm 1}$.
It follows that $\Contaminator{u}$ is $( L + L_{-1} + L_{1} )$-Lipschitz.
\end{proof}

\begin{proof}[Proof of Proposition \ref{prop:isotron-consistency}]
By Proposition \ref{prop:corrupt-sim}, $\etaCont \in \SIM( L + L_2 + L_3, W )$.
Thus, as a member of the SIM family, it is suitable for estimation using SLIsotron.

Proposition \ref{prop:isotron} implies that one can always choose an iteration of SLIsotron with low regret.
Let $\hat{\etaCont}_{\mathsf{S}, t}$ denote the estimate produced by SLIsotron at iteration $t$.
If in an abuse of notation we let $\hat{\etaCont}_{\mathsf{S}}$ denote the estimate $\hat{\etaCont}_{\mathsf{S}, t^*}$, where $t^*$ is an appropriately determined iteration,
then we have that $\reg( \hat{\etaCont}_{\mathsf{S}}; \D, \ellSQ ) \stackrel{\Pr}{\to} 0$.

For AUC consistency, standard surrogate regret bounds \citep{Agarwal:2014} imply that
$$ \reg_{\mathrm{AUC}}( \hat{\etaCont}_{\SCont}; \DCont ) \leq \frac{1}{2 \cdot \piCont \cdot (1 - \piCont)} \cdot \sqrt{\reg( \hat{\etaCont}_{\SCont}; \DCont, \ellSQ )}. $$
By Proposition \ref{prop:auc-regret}, we conclude that
\begin{align*}
    \reg_{\mathrm{AUC}}( \hat{\etaCont}_{\SCont}; \D ) &\leq \frac{\piCont \cdot (1 - \piCont)}{\pi \cdot (1 - \pi)} \cdot \frac{1}{1 - 2 \cdot \rho_{\mathrm{max}}} \cdot \reg_{\mathrm{AUC}}( \hat{\etaCont}_{\SCont}; \DCont ) \\
    &\leq \frac{1}{2 \cdot \pi \cdot (1 - \pi)} \cdot \frac{1}{1 - 2 \cdot \rho_{\mathrm{max}}} \cdot \sqrt{\reg( \hat{\etaCont}_{\SCont}; \DCont, \ellSQ )}.
\end{align*}
The Isotron guarantee implies the RHS tends to 0 with sufficiently many samples.
Thus, $\reg_{\mathrm{AUC}}( \hat{\etaCont}_{\SCont}; \D ) \to 0$.

For classification consistency, standard surrogate regret bounds \citep{Zhang:2004,Bartlett:2006,Reid:2009} imply that we can bound the 0-1 regret in terms of the square loss regret:
$$ \reg( 2 \hat{\etaCont}_{\SCont} - 1; \DCont, \ellZO ) \leq \sqrt{\reg( \hat{\etaCont}_{\SCont}; \DCont, \ellSQ )}. $$
By Proposition \ref{prop:regret-bound}, for symmetric noise, thresholding our estimate of $\etaCont$ around $\nicefrac[]{1}{2}$ will be consistent wrt the clean distribution:
\begin{align*}
    \reg( c_{\SCont}; \D, \ellZO ) &= \reg( 2 \hat{\eta}_{\SCont} - 1; \D, \ellZO ) \\
    &\leq \frac{1}{1 - 2 \cdot \rho_{\mathrm{max}}} \cdot \reg( 2 \hat{\eta}_{\SCont} - 1; \DCont, \ellZO ) \\
    &= \frac{1}{1 - 2 \cdot \rho_{\mathrm{max}}} \cdot \sqrt{\reg( \hat{\etaCont}_{\mathsf{S}}; \DCont, \ellSQ )}.
\end{align*}
The Isotron guarantee implies the RHS tends to 0 with sufficiently many samples.
Thus, in the case of symmetric $\BCNPlus$ noise, thresholding $\etaCont$ around $\nicefrac[]{1}{2}$ will be consistent wrt the clean distribution.
\end{proof}


\section{Examples of the SIM family}
\label{app:sim-family}

Two simple examples of the SIM family are presented below.
The first was established in \citet{Kalai:2009}.

\begin{example}
Suppose that $\D$ corresponds to a concept that is linearly separable with margin $\gamma > 0$ \ie
$$ \eta( x ) = \indicator{ \langle w^*, x \rangle > 0 }, $$ 
and
$$ \Pr( \{ (x, y) \mid y \cdot \langle w^*, x \rangle < \gamma \} ) = 0. $$
Then, $ \eta \in \SIM( (2\gamma)^{-1}, || w^* || ) $.
The reason is that we can equally think of $\eta$ as
$$ \eta( x ) = \uMar( \langle w^*, x \rangle ) $$
where
\begin{equation}
    \label{eqn:u-margin}
     \uMar( z ) = \begin{cases} 1 & \text{ if } z > \gamma \\ \frac{z + \gamma}{2 \gamma} & \text{ if } z \in [-\gamma, +\gamma] \\ 0 & \text{ if } z < -\gamma. \end{cases}
 \end{equation}
The function $u$ is clearly $(2\gamma)^{-1}$-Lipschitz.
\end{example}

\begin{example} Suppose that $\D$ corresponds to a concept that can be modelled using logistic regression \ie
$$ \eta( x ) = \frac{1}{1 + e^{-\langle w^*, x \rangle}}. $$
Then, $ \eta \in \SIM( 1, || w^* || ) $.
\end{example}

\section{Special cases of the ILN model}
\label{app:iln-special-case}

Several special cases of the ILN model are of interest.
(Table \ref{tbl:noise-models} summarises.)

\subsection{Instance-independent noise models}

The following have been the focus of a vast literature.

\begin{definition}[Noise-free learning]
Suppose we have an ILN model $ \ILN( \D, \rho_{-1}, \rho_{1} ) $ where $\rho_{\pm 1} \equiv 0$.
Then, we have the standard problem of learning from (noise free) binary labels.
\end{definition}

\begin{definition}[SLN model]
Suppose we have an ILN model $ \ILN( \D, \rho_{-1}, \rho_{1} ) $ where $\rho_{\pm 1} \equiv \rho$ for some constant $\alpha < \frac{1}{2}$.
Then, we have the problem of learning with symmetric label noise (SLN learning), also known as the problem of learning with random classification noise (RCN learning) \citep{Long:2008, vanRooyen:2015}.
We will write the corresponding corrupted distribution as $\SLN( \D, \alpha )$.
\end{definition}

\begin{definition}[CCN model]
Suppose we have an ILN model $ \ILN( \D, \rho_{-1}, \rho_{1} ) $ where $\rho_1 \equiv \alpha, \rho_{-1} \equiv \beta$ for some constants $\alpha, \beta < 1$.
Then, we have the problem of learning with class-conditional label noise (CCN learning) \citep{Angluin:1988,Blum:1998,Scott:2013,Natarajan:2013}.
We will write the corresponding corrupted distribution as $\CCN( \D, \beta, \alpha )$.
\end{definition}

\subsection{Boundary-consistent noise models}

The following is a far-reaching generalisation of the above.

\begin{definition}[BCN model]
Suppose we have an ILN model $ \ILN( \D, \rho_{-1}, \rho_{1} ) $ where $\rho_y = f_y \circ s$ for some functions $f_{\pm 1} \colon \Real \to [0, 1]$, and a function $s \colon \XCal \to \Real$ such that:
\begin{enumerate}[(a)]
    \item $s$ is order preserving for $\eta$ \ie
$$ ( \forall x, x' \in \XCal ) \, \eta( x ) < \eta( x' ) \implies s( x ) < s( x' ), $$
or equivalently,
$$ ( \exists u \colon \Real \to [ 0, 1 ] \text{ monotone} ) \, \eta = u \circ s. $$
The $u$ above is not required to be \emph{strictly} monotone, so it may not be true that $s = v \circ \eta$ for some $v \colon [0, 1] \to \Real$;
as a simple example, suppose that $\eta( x ) = \indicator{ s( x ) > 0 }$.

    \item $f_{\pm 1}$ are non-decreasing on $(-\infty, u^{\dagger}( 1/2 )]$ and non-increasing on $[u^{\dagger}( 1/2 ), \infty)$, where
$$ u^{\dagger}( 1/2 ) = \sup_{z \in \Real} \left\{ z \colon u( z ) \leq \frac{1}{2} \right\} $$ 
is the generalised inverse of $u$ at $\frac{1}{2}$;
or more compactly, if $f_{\pm 1}$ are differentiable,
\begin{equation}
    \label{eqn:bcn-admissible}
     ( \forall z \in [0, 1] ) \, f_{\pm 1}'( z ) \cdot (z - u^{\dagger}( 1/2 )) \leq 0.
\end{equation}
\end{enumerate}

In the case where $\D$ is linearly separable, and $s$ is such that $u^{\dagger}(1/2) = 0$, such a model was considered\footnote{\citet{Du:2015} considers $\Pr( \Contaminator{\Y} \neq \Y \mid \X = x )$, which is precisely our $\rho_{\Y}( x )$.} in \citet{Du:2015},
where it was termed learning with boundary consistent noise (BCN learning).
(A similar model was studied in \citet{Bootkrajang:2016} from a probabilistic perspective.)
We borrow this terminology for the case of general $\D$.
We will write the corresponding corrupted distribution as $\BCN( \D, f_{-1}, f_{1}, s )$;
further, we will say that $( f_{-1}, f_{1}, s, \eta )$ are \emph{BCN-admissible} if they satisfy the conditions detailed above.
\end{definition}

The above model in turn has several special cases that are of interest.

\subsubsection{The probabilistically transformed noise model}

A simple noise model is where the noise is some monotone transformation of the underlying $\eta$.

\begin{definition}[PTN model]
\label{ex:gpcn}
Suppose we have an ILN model $ \ILN( \D, \rho_{-1}, \rho_{1} ) $ where $\rho_y = f_y \circ \eta$ for some functions $f_{\pm 1} \colon [0, 1] \to [0, 1]$ 
such that $(f_{-1}, f_{1}, \eta, \eta)$ are BCN-admissible.
In this model, labels are flipped with higher probability for those instances with high inherent uncertainty (\ie with $\eta$ values close to $\frac{1}{2}$).
We term this problem learning with probabilistically transformed noise (PTN learning).
We will write the corresponding corrupted distribution as $\PTN( \D, f_{-1}, f_{1} )$;
further, we will say that $( f_{-1}, f_{1} )$ are \emph{PTN-admissible} if they satisfy the conditions detailed above.
\end{definition} 

When $f_{\pm 1} \equiv f$ above, we flip labels with probability proportional to the distance of $\eta$ from $\nicefrac[]{1}{2}$.
In the general case, it is intuitive that we will need some conditions on $f_{\pm 1}$ to ensure order preservation.

\subsubsection{The Bylander model}

\citet{Bylander:1997} describes the following model, termed monotonic or probabilistically-consistent noise.
One has a distribution $\D$ that is linearly separable with margin $\gamma > 0$,
\ie $\eta( x ) = \uMar( \langle w^*, x \rangle )$ with $\uMar$ as per Equation \ref{eqn:u-margin}.
One observes samples from a distribution $\DCont$ that satisfies the following conditions:
\begin{align*}
   \langle w^*, x \rangle \geq \langle w^*, x' \rangle  &\implies \frac{\eta( x )}{1 - \eta( x )} \geq \frac{\eta( x' )}{1 - \eta( x' )} \\
   \langle w^*, x \rangle \geq -\langle w^*, x' \rangle  &\implies \frac{\eta( x )}{1 - \eta( x )} \geq \frac{1 - \eta( x' )}{\eta( x' )},
\end{align*}
or equivalently,
\begin{align*}
   \langle w^*, x \rangle \geq \langle w^*, x' \rangle  &\implies \eta( x ) \geq \eta( x' ) \\
   \langle w^*, x \rangle \geq -\langle w^*, x' \rangle  &\implies \eta( x ) \geq 1 - \eta( x' ).
\end{align*}
The contrapositive of these implications is
\begin{align*}
   \eta( x ) < \eta( x' ) &\implies \langle w^*, x \rangle < \langle w^*, x' \rangle \\
   \eta( x ) < 1 - \eta( x' ) &\implies \langle w^*, x \rangle < -\langle w^*, x' \rangle.
\end{align*}
The first of these implications means that $\eta( x ) = \phi( \langle w^*, x \rangle )$ for some non-decreasing $\phi$.
The second of these implications is satisfied if $\phi( -z ) = 1 - \phi( z )$.
We formalise this as follows.

\begin{definition}[BYLN model]
\label{ex:bylander}
Suppose we have an ILN model $ \ILN( \D, \rho_{-1}, \rho_{1} ) $ where $\rho_y = f_y \circ s$, where
$f_1 \equiv f_{-1} \equiv f$ such that
\begin{enumerate}[(a)]
    \item $(f, f, s, \eta)$ is BCN-admissible,

    \item $f$ is symmetric around $u^{\dagger}(1/2)$,
    \ie $ f( z ) = g( | z - u^{\dagger}(1/2) | )$ for some non-increasing function $g \colon \Real_+ \to \Real_+$.
\end{enumerate}
We term this model for general $\D$ as learning with {Bylander noise} (BYLN learning).
We will write the corresponding corrupted distribution as $\BYLN( \D, f, s )$;
further, we will say that $( f, s, \eta )$ are \emph{BYLN-admissible} if they satisfy the conditions detailed above.
\end{definition}

In the case where $\D$ is linearly separable, and $s$ is such that $u^{\dagger}(1/2) = 0$,
the BYLN model is as considered in \citet{Bylander:1997, Bylander:1998, Servedio:1999}.

\subsubsection{The $\BCNPlus$ model}

The $\BCNPlus$ model introduced in Definition \ref{defn:bcn-plus} is seen to be the BCN model augmented with an additional assumption.

\begin{assumption}
\label{ass:c}
The difference $\Delta( z ) = f_{1}( z ) - f_{-1}( z )$ between the positive and negative flip functions is non-increasing.
\end{assumption}

This assumption proves crucial in guaranteeing that $\etaCont$ is order-preserving for $\eta$ (see Appendix \ref{app:order-fails}).
It is trivially satisfied in special cases of the $\BCNPlus$ model.

\begin{example}
For the case of CCN noise $\CCN( \D, \rho_{-1}, \rho_{1} )$, the flip functions are constant, and so Assumption \ref{ass:c} is trivially satisfied.
\end{example}

\begin{example}
For the case of symmetric BCN noise $\BCN( \D, f, f, s )$, the difference between the flip functions is a constant, and so Assumption \ref{ass:c} is trivially satisfied.
\end{example}

\subsection{Instance-dependent model}

The final special case of ILN is a generic instance- (but not label-) dependent noise model, previously considered in \citet{Ghosh:2015}.

\begin{definition}[IDN model]
Suppose we have an ILN model $ \ILN( \D, \rho_{-1}, \rho_{1} ) $ where $\rho_{-1} \equiv \rho_1 \equiv f$ for some function $f \colon \XCal \to [0, 1/2)$.
We term this problem learning with instance-dependent noise (IDN learning).
We will write the corresponding corrupted distribution as $\IDN( \D, f )$.
\end{definition}

\subsection{Relation between the noise models}

The above noise models are related to each other as follows:
\begin{align*}
    \SLN( \D, \alpha ) &= \CCN( \D, \alpha, \alpha ) \\
    \CCN( \D, \beta, \alpha ) &= \PTN( \D, \beta \cdot \mathbb{1}, \alpha \cdot \mathbb{1} ) \\
    \PTN( \D, f_{-1}, f_{1} ) &= \BCN( \D, f_{-1}, f_{1}, \eta ) \\
    \BYLN( \D, f, s ) &= \BCN( \D, f, f, s ) \\
    \BCN( \D, f_{-1}, f_{1}, s ) &= \ILN( \D, f_{-1} \circ s, f_{1} \circ s ) \\
    \BCN( \D, f, f, s ) &= \IDN( \D, f \circ s ) \\
    \IDN( \D, f ) &= \ILN( \D, f, f ).
\end{align*}
Here, $\mathbb{1}$ refers to the function which is $1$ everywhere.

\begin{table}[!t]
    \centering
    \renewcommand{\arraystretch}{1.25}
    \begin{tabular}{@{}p{1.5in}lp{2.5in}@{}}
        \toprule
        \toprule
        \textbf{Noise model} & \textbf{Notation} & \textbf{Description} \\
        \toprule
        Instance- and label-dependent noise & $\ILN( \D, \rho_{-1}, \rho_{1} )$ & Flip probability function of instance and label \\
        Instance-dependent noise & $\IDN( \D, f )$ & Flip probability function of instance only \\
        Class-conditional noise & $\CCN( \D, \beta, \alpha )$ & Flip probability depends on label only \\
        Symmetric label noise & $\SLN( \D, \alpha )$ & Constant flip probability \\
        Boundary-conditional noise & $\BCN( \D, f_{-1}, f_{1}, s )$ & Flip probability function of \emph{score} on instance and label, where score is consistent with underlying class-probability function \\
        Bylander noise & $\BYLN( \D, f, s )$ & Flip probability function of \emph{score} on instance only, where score is consistent with underlying class-probability function \\
        Probabilistically transformed noise & $\PTN( \D, f_{-1}, f_{1} )$ & Flip probability function of underlying class-probability function and label \\
        \bottomrule
    \end{tabular}
    
    \caption{Summary of noise models.}
    \label{tbl:noise-models}
\end{table}

\begin{remark}
As seen above, the BCN model reduces to the PTN model when $u$ is invertible.
Note however that when $u$ is not invertible, the BCN model is more powerful than the PTN model.
For example, if $\D$ is separable with a margin, then under the PTN model, all deterministically positive instances are flipped with some probability, and similarly all deterministically negative instances.
However, under the BCN model, instances closer to the optimal decision boundary, regardless of their label, will have a higher chance of being flipped.
\end{remark}

\section{Special cases of $\DCont$}
\label{app:additional-properties}

We list the components of $\DCont$ in some special cases.

\begin{example}
For the class-conditional noise model $\CCN( \D, \beta, \alpha )$, we have
\begin{align}
    \label{eqn:ccn-eta}
     \etaCont( x ) &= (1 - \alpha - \beta ) \cdot \eta( x ) + \beta \\
     \nonumber \piCont &= \pi \cdot (1 - \alpha - \beta) + \beta \\
     \nonumber \PCont( x ) &= \piCont^{-1} \cdot \left( (1 - \alpha) \cdot \pi \cdot P( x ) + \beta \cdot (1 - \pi) \cdot Q( x ) \right) \\
     \nonumber \QCont( x ) &= (1 - \piCont)^{-1} \cdot \left( \alpha \cdot \pi \cdot P( x ) + (1 - \beta) \cdot (1 - \pi) \cdot Q( x ) \right).
 \end{align}
 This is in agreement with \citet[Lemma 7]{Natarajan:2013}, \citet[Appendix C]{Menon:2015}.
\end{example}

\begin{example}
For the boundary-conditional noise model $\BCN( \D, f_{-1}, f_{1}, s )$,
with $\eta = u \circ s$ for some non-decreasing $u$,
we have
\begin{equation}
\label{eqn:eta-bcn}
\begin{aligned}    
     \etaCont( x ) &= \Contaminator{u}( s( x ) ) \text{ where } \\
     \Contaminator{u}( z ) &= ( 1 - f_1( z ) ) \cdot u( z ) + f_{-1}( z ) \cdot ( 1 - u ( z ) ) \\
     &= ( 1 - f_1( z ) - f_{-1}( z ) ) \cdot u( z ) + f_{-1}( z ).
\end{aligned}
\end{equation}
The monotonicity of $\Contaminator{u}$ is studied in Lemma \ref{lemm:eta-diff-bound}.
\end{example}

\begin{example}
For the PTN model $\PTN( \D, f_{-1}, f_{1} )$, we have
By Proposition \ref{lemm:corrupt-eta-general}, and the fact that $\rho_y = g \circ \eta$ for the PTN model,
\begin{equation}
\label{eqn:eta-ptn}
\begin{aligned}    
    \etaCont( x ) &= \varphi( \eta( x ) ) \\
    \varphi( z ) = (1 - f_{-1}( z ) - f_{1}( z )) \cdot z + f_{-1}( z ).
\end{aligned}
\end{equation}
\end{example}

\begin{example}
For the instance-dependent noise model $\IDN( \D, f )$, we have
\begin{align*}
     \etaCont( x ) &= (1 - 2 \cdot f( x ) ) \cdot \eta( x ) + f( x ) \\
     \piCont &= \pi + \Expectation{\X \sim M}{ f( \X ) \cdot ( 1 - 2 \cdot \eta( \X ) ) } \\
     \PCont( x ) &= \piCont^{-1} \cdot \left( (1 - f( x )) \cdot \pi \cdot P( x ) + f( x ) \cdot (1 - \pi) \cdot Q( x ) \right) \\
     \QCont( x ) &= (1 - \piCont)^{-1} \cdot \left( f( x ) \cdot \pi \cdot P( x ) + (1 - f( x )) \cdot (1 - \pi) \cdot Q( x ) \right).
 \end{align*}
For the class-conditionals, we can equally write
\begin{align*}
     P( x ) &= (1 - 2 \cdot f( x ))^{-1} \cdot \pi^{-1} \cdot \left( (1 - f( x )) \cdot \piCont \cdot \PCont( x ) - f( x ) \cdot (1 - \piCont) \cdot \QCont( x ) \right) \\
     Q( x ) &= (1 - 2 \cdot f( x ))^{-1} \cdot (1 - \pi)^{-1} \cdot \left( -f( x ) \cdot \piCont \cdot \PCont( x ) + (1 - f( x )) \cdot (1 - \piCont) \cdot \QCont( x ) \right).
 \end{align*}
\end{example}

\section{Boundary consistent noise and flip probabilities}

Given an instance $x \in \XCal$, let $F( x )$ denote the probability that the instance has its label flipped.
It is easy to check that
\begin{align*}
    F( x ) &= \Pr( \Y \neq \YCont \mid \X = x ) \\
    &= \Pr( \Y \neq \YCont \mid \Y = 1, \X = x ) \cdot \Pr( \Y = 1 \mid \X = x ) + \Pr( \Y \neq \YCont \mid \Y = -1, \X = x ) \cdot \Pr( \Y = -1 \mid \X = x ) \\
    &= \rho_1( x ) \cdot \eta( x ) + \rho_{-1}( x ) \cdot (1 - \eta( x )) \\
    &= (\rho_1( x ) - \rho_{-1}( x )) \cdot \eta( x ) + \rho_{-1}( x ).
\end{align*}

\subsection{Guaranteeing maximisation at $\frac{1}{2}$ for BCN model}

In the case of boundary consistent noise,
$$ F( x ) = \varphi( s( x ) ) $$
where
$$ \varphi( z ) = (f_1( z ) - f_{-1}( z )) \cdot u( z ) + f_{-1}( z ). $$
Suppose we want $F$ to be increasing when $\eta < \nicefrac[]{1}{2}$, and decreasing otherwise.
Observe that
\begin{align*}
     \varphi'( z ) &= (f_1( z ) - f_{-1}( z )) \cdot u'( z ) + (f'_1( z ) - f'_{-1}( z )) \cdot u( z ) + f'_{-1}( z ) \\
     &= (f_1( z ) - f_{-1}( z )) \cdot u'( z ) + f'_1( z ) \cdot u( z ) + f'_{-1}( z ) \cdot (1 - u( z )).
 \end{align*}
When $u( z ) < \frac{1}{2}$, the second and third terms are guaranteed to be positive (by Condition (b) of BCN-admissibility).
Since $u'( z ) > 0$, for the first term to be positive we need $\Delta( z ) \geq 0$.
Similarly, when $u( z ) > \frac{1}{2}$, the second and third terms are guaranteed to be negative;
for the first term to be negative we need $\Delta( z ) \leq 0$.

Thus, a sufficient condition for $F$ to be maximised when $\eta = \frac{1}{2}$ is for
\begin{equation}
    \label{eqn:delta}
    \Delta( z ) \cdot (2 \cdot u( z ) - 1) \leq 0.
\end{equation}

\subsection{Relation to Assumption \ref{ass:c}}

Note that above, we do \emph{not} require Assumption \ref{ass:c} (\ie $\Delta( z )$ is decreasing).
Indeed, Assumption \ref{ass:c} by itself does not guarantee that $F$ is maximised when $\eta = \frac{1}{2}$.
As a simple example, for the CCN model,
$$ F( x ) = ( \alpha - \beta ) \cdot \eta( x ) + \beta. $$
Evidently, this is maximised at either $\eta( x ) = 0$ or $\eta( x ) = 1$, depending on whether $\alpha > \beta$ or not.

On the other hand, $\Delta( z )$ satisfying Equation \ref{eqn:delta} by itself does not guarantee that $\etaCont$ is order preserving for $\eta$.
Consider for example a case where $f_1( z ) = \indicator{z \leq 0} \cdot \frac{1}{2} \cdot e^z$, $f_{-1}( z ) \equiv 0$, and $\eta( x ) = 1/(1 + \exp(-s(x)))$.
Then, $\etaCont$ will not be order preserving for $\eta$.

If $\Delta( z )$ satisfies both Equation \ref{eqn:delta} and Assumption \ref{ass:c}, then we will have that $F$ is maximised when $\eta = \frac{1}{2}$, and also that $\etaCont$ is order preserving for $\eta$.

\section{Label swapping and Assumption \ref{ass:c}}

Assumption \ref{ass:c} implies an asymmetry in the treatment of positive and negative labels.
This is at first glance surprising, since intuitively we would expect our results to hold even if we swap the labels.
In particular, suppose we have some $\D = (M, \eta)$ with a $\BCNPlus( \D, f_{-1}, f_{1}, s )$ noise model.
Then,
$$ 1 - \etaCont( x ) = f_{1}( s( x ) ) \cdot \eta( x ) + (1 - f_{-1}( s( x ) )) \cdot (1 - \eta( x )). $$
Now consider $\D' = (M, 1 - \eta)$ so that the positive and negative labels are swapped.
Then, it is not hard to see that for a $\BCNPlus( \D', f_{1}, f_{-1}, -s )$ noise model,
$$ \etaCont'( x ) = (1 - f_{-1}( -s( x ) )) \cdot (1 - \eta( x )) + f_{1}( -s( x ) ) \cdot \eta( x ). $$
Therefore, if the flip functions $f_{\pm 1}$ are even (\ie symmetric around the origin), we have
$$ \etaCont'( x ) = 1 - \etaCont( x ). $$
So,
\begin{align*}
    \eta'( x ) < \eta'( x' ) &\iff \eta( x ) > \eta( x' ) \\
    &\implies \etaCont( x ) > \etaCont( x' ) \\
    &\implies \etaCont'( x ) < \etaCont'( x' ).
\end{align*}
Thus, order preservation is retained.
This may seem peculiar since for $\D'$, we have the opposite of Assumption \ref{ass:c} holding.
But note that
$$ \etaCont'( x ) = \varphi( -s( x ) ) $$
where
\begin{align*}
     \varphi( z ) &= (1 - f_{-1}( z )) \cdot (1 - u( z )) + f_1( z ) \cdot u( z ) \\
     &= 1 - \left( f_{-1}( z ) \cdot (1 - u( z )) + (1 - f_1( z )) \cdot u( z ) \right) \\
     &= 1 - \left( f_{-1}( z ) \cdot v( z ) + (1 - f_1( z )) \cdot (1 - v( z )) \right)
\end{align*}
where $v( z ) = 1 - u( z )$.
For $\varphi( z )$ to be non-decreasing, the second term above must be non-increasing.
This term is precisely that arising from the standard BCN model, but with a link function $v$ that is non-increasing, and with flip functions satisfying the opposite of Assumption \ref{ass:c}.
Thus, it is not hard to see that we can guarantee the opposite of the standard BCN model, so that the term is non-increasing.

\section{The generalised loss object under the CCN model}
\label{app:generalised-loss}

For the class-conditional noise model $\CCN( \D, \rho_{-1}, \rho_{1} )$,
the generalised loss of Proposition \ref{prop:general-mube} is simply
\begin{align*}
    \tilde{\ell}_1( s, x ) &= w^{-1} \cdot \left( (1 - \rho_{-1} \cdot \ell_1( s( x ) ) - \rho_{1} \cdot \ell_{-1}( s( x ) ) \right) \\
    \tilde{\ell}_{-1}( s, x ) &= w^{-1} \cdot \left( -\rho_{-1} \cdot \ell_1( s( x ) ) + (1 - \rho_{1} ) \cdot \ell_{-1}( s( x ) ) \right)
\end{align*}
where $w = 1 - \rho_{-1} - \rho_{1}$.
The dependence on $x$ is only via the correpsonding $s( x )$ value.
Thus, we may equally consider the noise-corrected loss
\begin{align*}
    \Contaminator{\ell}_1( v ) &= w^{-1} \cdot \left( (1 - \rho_{-1} \cdot \ell_1( v ) - \rho_{1} \cdot \ell_{-1}( v ) \right) \\
    \Contaminator{\ell}_{-1}( v ) &= w^{-1} \cdot \left( -\rho_{-1} \cdot \ell_1( v ) + (1 - \rho_{1} ) \cdot \ell_{-1}( v ) \right),
\end{align*}
with Proposition \ref{prop:general-mube} then reducing to
$$ R( s; \D, \ell ) = R( s; \DCont, \Contaminator{\ell} ), $$
as shown in \citet[Lemma 1]{Natarajan:2013}, who termed the approached of minimising $\Contaminator{\ell}$ as the ``method of unbiased estimators''.

\begin{remark}
\citet[Lemma 1]{Natarajan:2013} was generalised in a different direction by \citet{vanRooyen:2015b}, who considered problems with general label spaces.
The noise model in \citet{vanRooyen:2015b} is still instance independent, unlike Proposition \ref{prop:general-mube}.
\end{remark}

\section{Failure of order preservation under $\etaCont$}
\label{app:order-fails}

We illustrate that for noise models other than $\BCNPlus$, order preservation under $\etaCont$ is not guaranteed.

\subsection{Failure of order preservation for the BCN model}

Order preservation is not guaranteed for the BCN model without Condition (c) of the $\BCNPlus$ model.



\begin{example}
Suppose $f_{1}( z ) \equiv 0$, $f_{-1}( z ) = a \cdot \indicator{z \leq 0}$ for some $a < 1$,
and $s$ is such that $\eta( x ) = \frac{1}{1 + e^{-s( x )}}$.
Certainly $( f_{-1}, f_{1}, s )$ is BCN-admissible.
It is easy to check that
$$ \etaCont( x ) = \varphi( s( x ) ) $$
where
\begin{align*}
     \varphi( z ) &= \left( 1 - a \cdot \indicator{z \leq 0} \right) \cdot \frac{e^z}{1 + e^z} + a \cdot \indicator{z \leq 0} \\
     &= \begin{cases} (1 - a) \cdot \frac{e^z}{1 + e^z} + a & \text{ if } z \leq 0 \\ \frac{e^z}{1 + e^z} & \text{ if } z > 0, \end{cases}
 \end{align*}
 which is easily checked to not be monotone in $z$.
\end{example}

The difference $\Delta( z ) = f_1( z ) - f_{-1}( z )$ above is non-decreasing.
Swapping the flip functions thus makes the function non-increasing, satisfying Assumption \ref{ass:c}.
We can confirm that in this case, $\etaCont$ will indeed be order-preserving for $\eta$.

\begin{example}
Suppose $f_{-1}( z ) \equiv 0$, $f_{1}( z ) = a \cdot \indicator{z \leq 0}$ for some $a < 1$,
and $s$ is such that $\eta( x ) = \frac{1}{1 + e^{-s( x )}}$.
Certainly $( f_{-1}, f_{1}, s )$ is BCN-admissible.
It is easy to check that
$$ \etaCont( x ) = \varphi( s( x ) ) $$
where
\begin{align*}
     \varphi( z ) &= \left( 1 - a \cdot \indicator{z \leq 0} \right) \cdot \frac{e^z}{1 + e^z} \\
     &= \begin{cases} a \cdot \frac{e^z}{1 + e^z} & \text{ if } z \leq 0 \\ \frac{e^z}{1 + e^z} & \text{ if } z > 0, \end{cases}
 \end{align*}
 which is easily checked to be monotone in $z$.
\end{example}

\subsection{Failure of order preservation for the IDN model}

For the IDN model, order preservation will not be guaranteed in general.
Consider the simple case where $f( x ) = \frac{1}{2} \eta( x )$.
This means that there is more noise for positive instances.
Then, we have
\begin{align*}
   \etaCont( x ) &= (1 - \eta( x )) \cdot \eta( x ) + \frac{1}{2} \cdot \eta( x ) \\
   &= \eta( x ) \cdot \left( \frac{3}{2} - \eta( x ) \right).
 \end{align*}
This will not be order preserving for $\eta$, since $\varphi( z ) = z \cdot (2 - z)$ is not monotone on $[0,1]$.

\section{On Bayes-optimal scorers coinciding on clean and corrupted distributions}
\label{app:bayes-opt}

Corollary \ref{corr:bayes-opt-same} is a statement about the minimisers when using all measurable scorers $\Real^{\XCal}$.
When using \eg linear scorers, one does not have the same equivalence in general, unless the Bayes-optimal scorer happens to lie in our chosen class.
For example, with the unhinged loss and kernelised linear scorers $\langle w, \Phi( x ) \rangle_{\HCal}$ for some RKHS $\HCal$, under the IDN model we have optimal weight
\begin{align*}
    w^* &= \Expectation{( \X, \YCont ) \sim \DCont}{ \Y \cdot \Phi( \X ) } \\
    &= \Expectation{\X \sim M}{ \Phi( \X ) \cdot ( 2 \cdot \etaCont( \X ) - 1 ) } \\
    &= \Expectation{\X \sim M}{ ( 1 - 2 \cdot f( \X ) ) \cdot \Phi( \X ) \cdot ( 2 \cdot \eta( \X ) - 1 ) },
\end{align*}
which possesses an additional weighting term compared to the optimal weight on $\D$.
Nonetheless, we can expect the scores resulting from this solution to have the correct sign for classification.
The score on an instance $x' \in \XCal$ is
$$ s^*( x' ) = \Expectation{\X \sim M}{ ( 1 - 2 \cdot f( \X ) ) \cdot k( \X, x' ) \cdot ( 2 \cdot \eta( \X ) - 1 ) } $$
for kernel function $k( x, x' ) = \langle \Phi( x ), \Phi( x' ) \rangle_{\HCal}$.
If\footnote{This is not actually a valid kernel for an RKHS, since delta functions are not square integrable.} $k( x, x' ) = \delta_{x}( x')$, this would reduce to $(1 - 2 \cdot f( x' ) ) \cdot (2 \cdot \eta( x') - 1) \cdot m( x' )$, which has the same sign as $2 \cdot \eta( x') - 1$.

\section{Proof of Proposition \ref{prop:regret-bound} specialised to 0-1 loss}
\label{app:regret-01}

The regret for 0-1 loss is \citep[Theorem 2.2]{Devroye:1996}, \citep[Lemma 8]{Reid:2009}
\begin{equation}
    \label{eqn:regret-01}
    \reg( s; \D, \ellZO ) = \Expectation{\X \sim M}{ \left| \eta( \X ) - \frac{1}{2} \right| \cdot \indicator{ (2\eta( \X ) - 1) \cdot s( \X ) < 0 } }.
\end{equation}
The following shows that for 0-1 loss and label-independent noise, there is a simple relationship between the regrets on the clean and corrupted distributions.
A key ingredient is the following.

\begin{proposition}
\label{prop:noisy-clean-01}
Pick any distribution $\D$.
Suppose that $\DCont = \IDN( \D, f )$
for admissible $f \colon \XCal \to [0, 1/2)$.
Then,
$$ ( \forall x \in \XCal ) \, \eta( x ) - \frac{1}{2} = \frac{1}{1 - 2 \cdot f( x )} \cdot \left( \etaCont( x ) - \frac{1}{2} \right). $$
\end{proposition}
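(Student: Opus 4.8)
The plan is to reduce the statement to a one-line algebraic identity by first specialising Lemma \ref{lemm:corrupt-eta-general} to the IDN model. Since $\IDN(\D, f) = \ILN(\D, f, f)$ sets $\rho_1 \equiv \rho_{-1} \equiv f$, Equation \ref{eqn:corrupt-eta-general} gives
$$ \etaCont(x) = (1 - f(x)) \cdot \eta(x) + f(x) \cdot (1 - \eta(x)) = (1 - 2 \cdot f(x)) \cdot \eta(x) + f(x). $$
This explicit form is the only input I need; it already appears in the ``special cases of $\DCont$'' example for the IDN model, so it is available.

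Next I would subtract $\nicefrac[]{1}{2}$ from both sides and regroup. The key observation is that the constant terms conspire to produce exactly a multiple of $(\eta(x) - \nicefrac[]{1}{2})$: expanding $(1 - 2 \cdot f(x)) \cdot (\eta(x) - \nicefrac[]{1}{2})$ yields $(1 - 2 \cdot f(x)) \cdot \eta(x) - \nicefrac[]{1}{2} + f(x)$, which matches $\etaCont(x) - \nicefrac[]{1}{2}$ term by term. Hence
$$ \etaCont(x) - \frac{1}{2} = (1 - 2 \cdot f(x)) \cdot \left( \eta(x) - \frac{1}{2} \right). $$

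Finally I would divide through by $1 - 2 \cdot f(x)$ to obtain the claimed identity. This last step is exactly where admissibility is invoked: because $f(x) < \nicefrac[]{1}{2}$ for every $x$ (Assumption \ref{ass:total-noise} in the IDN form), the factor $1 - 2 \cdot f(x)$ is strictly positive, so the division is valid and sign-preserving. There is no genuine obstacle here — the result is a routine rearrangement — and the only point requiring care is to flag that $1 - 2 \cdot f(x) \neq 0$ before dividing. I would also remark in passing that the strict positivity of this factor is what makes the identity useful downstream: combined with the regret expression in Equation \ref{eqn:regret-01}, it shows that the clean and corrupted $\ellZO$-regrets disagree only through the instance-wise reweighting $1/(1 - 2 \cdot f(x))$, which is the crux of the specialised proof of Proposition \ref{prop:regret-bound}.
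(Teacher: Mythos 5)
Your proposal is correct and follows essentially the same route as the paper's own proof: both specialise Lemma \ref{lemm:corrupt-eta-general} to the IDN case to get $\etaCont(x) = (1 - 2 \cdot f(x)) \cdot \eta(x) + f(x)$, regroup to obtain $\etaCont(x) - \nicefrac[]{1}{2} = (1 - 2 \cdot f(x)) \cdot (\eta(x) - \nicefrac[]{1}{2})$, and divide by the factor $1 - 2 \cdot f(x)$, which admissibility guarantees is nonzero. Your additional remark about strict positivity being what drives the downstream regret bound matches the paper's use of this identity in Appendix \ref{app:regret-01}.
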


\begin{proof}
By Proposition \ref{lemm:corrupt-eta-general},
$$ \etaCont( x ) = ( 1 - 2 \cdot f( x ) ) \cdot \eta( x ) + f( x ). $$
Thus,
\begin{align*}
    \etaCont( x ) - \frac{1}{2} &= \eta( x ) - \frac{1}{2} + f( x ) \cdot (1 - 2 \cdot \eta( x )) \\
    &= \eta( x ) - \frac{1}{2} + 2 \cdot f( x ) \cdot \left( \frac{1}{2} - \eta( x ) \right) \\
    &= \left( \eta( x ) - \frac{1}{2} \right) \cdot ( 1 - 2 \cdot f( x ) ).
\end{align*}
Thus, since $f( x ) \neq \frac{1}{2}$,
$$ \eta( x ) - \frac{1}{2} = \frac{1}{1 - 2 \cdot f( x )} \cdot \left( \etaCont( x ) - \frac{1}{2} \right). $$
\end{proof}

The above required that no instance has label flipped with probability $\frac{1}{2}$, which is a mild and intuitive condition.
If we further assume that the flip probability for every instance is \emph{less than} $\frac{1}{2}$,
this simple relationship implies the Bayes-optimal classifier is unaffected.

\begin{corollary}
\label{corr:noisy-clean-01}
Pick any distribution $\D$.
Suppose that $\DCont = \IDN( \D, f )$
for admissible $f \colon \XCal \to [0, 1/2).$
Then,
$$ ( \forall x \in \XCal ) \, \eta( x ) > \frac{1}{2} \iff \etaCont( x ) > \frac{1}{2} $$
and so
$$ \underset{s}{\argmin} R( s; \D, \ellZO ) = \underset{s}{\argmin} R( s; \DCont, \ellZO ). $$
\end{corollary}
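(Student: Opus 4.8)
The plan is to read both claims straight off Proposition \ref{prop:noisy-clean-01}, which supplies the pointwise identity $\eta(x) - \frac{1}{2} = (1 - 2 f(x))^{-1} \cdot (\etaCont(x) - \frac{1}{2})$. Since admissibility gives $f(x) \in [0, 1/2)$, the factor $1 - 2 f(x)$ is strictly positive, so $(1 - 2 f(x))^{-1} > 0$. Multiplication by a strictly positive scalar preserves sign, and therefore $\eta(x) - \frac{1}{2}$ and $\etaCont(x) - \frac{1}{2}$ always have the same sign. This yields at once the threshold equivalence $\eta(x) > \frac{1}{2} \iff \etaCont(x) > \frac{1}{2}$, and, by the same reasoning, $\eta(x) < \frac{1}{2} \iff \etaCont(x) < \frac{1}{2}$ and $\eta(x) = \frac{1}{2} \iff \etaCont(x) = \frac{1}{2}$; in particular $\sign(2 \eta(x) - 1) = \sign(2 \etaCont(x) - 1)$ for every $x \in \XCal$.

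For the statement about minimisers I would appeal to the 0-1 regret formula (Equation \ref{eqn:regret-01}). Inspecting that expression, a scorer $s$ belongs to $\underset{s}{\argmin} R(s; \D, \ellZO)$ precisely when the indicator $\indicator{(2 \eta(x) - 1) \cdot s(x) < 0}$ vanishes $M$-almost everywhere, i.e.\ when $\sign(s(x)) = \sign(2 \eta(x) - 1)$ on the set $\{ \eta \neq \frac{1}{2} \}$; on the set $\{ \eta = \frac{1}{2} \}$ the weight $|\eta(x) - \frac{1}{2}|$ is zero, so $s$ is unconstrained there. The identical characterisation holds for $\DCont$ with $\etaCont$ in place of $\eta$. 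Because the two sign functions agree everywhere and $\{ \eta = \frac{1}{2} \} = \{ \etaCont = \frac{1}{2} \}$, the membership conditions defining the two minimiser sets are literally the same constraint on $s$, and hence the two sets coincide.

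There is no substantive obstacle here; the only point requiring a little care is that $\argmin$ denotes a \emph{set} of scorers, so one must verify set equality rather than merely exhibiting a single common optimiser. The threshold equivalence settles the constrained region $\{ \eta \neq \frac{1}{2} \}$, where optimality pins down $\sign(s)$, while the observation $\{ \eta = \frac{1}{2} \} = \{ \etaCont = \frac{1}{2} \}$ confirms that the unconstrained region—on which the choice of $s$ is free—is the same under both distributions. Consequently no scorer can be optimal for one distribution without being optimal for the other, giving the claimed equality. (The same conclusion also follows from the more general Corollary \ref{corr:bayes-opt-same}, since $\ellZO$ satisfies Equation \ref{eqn:loss-sum-constant}; the present route is simply the self-contained $\ellZO$-specific argument built on Equation \ref{eqn:regret-01}.)
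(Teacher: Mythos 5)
Your proof is correct and takes essentially the same approach as the paper: the threshold equivalence is read off Proposition \ref{prop:noisy-clean-01} together with the strict positivity of $1 - 2 \cdot f( x )$, which is exactly the paper's argument (the paper also notes the alternative of plugging $t = \nicefrac[]{1}{2}$ into Proposition \ref{prop:clean-corrupt-threshold}). Your second paragraph, characterising both $\argmin$ sets via the regret formula in Equation \ref{eqn:regret-01}, simply makes explicit the step that the paper leaves implicit, and is a valid way to finish.
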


\begin{proof}[Proof of Corollary \ref{corr:noisy-clean-01}]
By Proposition \ref{prop:noisy-clean-01}, if $f( x ) < \frac{1}{2}$ for every $x$, so that $1 - 2 \cdot f( x ) > 0$,
the two class-probability functions have the same sign around $\frac{1}{2}$.
Thus, $\eta( x ) > \frac{1}{2} \iff \etaCont( x ) > \frac{1}{2}$.

Alternately, simply plug in $t = \frac{1}{2}$ to Proposition \ref{prop:clean-corrupt-threshold}.
\end{proof}

We are now in a position to provide the regret bound.

\begin{proposition}
Pick any distribution $\D$.
Suppose that $\DCont = \IDN( \D, f )$
for admissible $f \colon \XCal \to [0, 1]$ such that
$$ ( \forall x \in \XCal ) \, f( x ) \leq \rho_{\mathrm{max}} < \frac{1}{2}. $$
Then, for any scorer $\scorer$,
$$ \reg( s; \D, \ellZO ) \leq \frac{1}{1 - 2 \cdot \rho_{\mathrm{max}}} \cdot \reg( s; \DCont, \ellZO ). $$
\end{proposition}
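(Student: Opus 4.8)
The plan is to reduce everything to the pointwise integral representation of the $0$-$1$ regret given in Equation \ref{eqn:regret-01}, applied once on $\D$ and once on $\DCont$, and then to compare the two integrands instance by instance. A structural fact I would invoke at the outset is that $\IDN$ corruption leaves the instance marginal $M = \Pr(\X)$ unchanged, so that both $\reg(s;\D,\ellZO)$ and $\reg(s;\DCont,\ellZO)$ are expectations of a pointwise quantity against the \emph{same} measure $M$; this is what makes a pointwise comparison of the two regrets legitimate.

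First I would write, using Equation \ref{eqn:regret-01},
$$ \reg(s;\D,\ellZO) = \Expectation{\X \sim M}{ \left| \eta(\X) - \tfrac{1}{2} \right| \cdot \indicator{ (2\eta(\X)-1)\cdot s(\X) < 0 } }. $$
The argument then rests on two already-established facts. The sign of the ``margin'' $2\eta(x)-1$ is preserved by corruption: Corollary \ref{corr:noisy-clean-01} gives $\eta(x) > \tfrac12 \iff \etaCont(x) > \tfrac12$, so $\sign(2\eta(x)-1) = \sign(2\etaCont(x)-1)$, and hence $\indicator{(2\eta(x)-1)s(x)<0}$ equals $\indicator{(2\etaCont(x)-1)s(x)<0}$ for every $x$. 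The magnitude is rescaled in a controlled way: Proposition \ref{prop:noisy-clean-01} gives $|\eta(x)-\tfrac12| = (1-2f(x))^{-1}\,|\etaCont(x)-\tfrac12|$.

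Substituting both facts into the integrand turns $\reg(s;\D,\ellZO)$ into $\Expectation{\X\sim M}{(1-2f(\X))^{-1}\,|\etaCont(\X)-\tfrac12|\cdot \indicator{(2\etaCont(\X)-1)s(\X)<0}}$. Since $f(x)\le \rho_{\mathrm{max}} < \tfrac12$ by hypothesis, $1-2f(x) \ge 1-2\rho_{\mathrm{max}} > 0$, and therefore $(1-2f(x))^{-1} \le (1-2\rho_{\mathrm{max}})^{-1}$ uniformly in $x$. Pulling this constant out of the expectation and recognising the remaining integral as $\reg(s;\DCont,\ellZO)$ — again via Equation \ref{eqn:regret-01}, now applied with $\etaCont$ — yields the claimed bound.

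There is no genuinely hard step: the result is a near-immediate corollary of Proposition \ref{prop:noisy-clean-01} and Corollary \ref{corr:noisy-clean-01}. The only point requiring a moment's care is the boundary locus $\{x : \eta(x) = \tfrac12\}$: there the clean integrand vanishes because $|\eta(x)-\tfrac12| = 0$, and Proposition \ref{prop:noisy-clean-01} forces $\etaCont(x) = \tfrac12$ as well, so the corrupted integrand vanishes too and the pointwise identification of the two indicators introduces no inconsistency. I would also verify that the sign equivalence is being used only for the strict inequality matching the strict ``$<0$'' inside the indicator, which it is.
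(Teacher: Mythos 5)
Your proposal is correct and follows essentially the same route as the paper's own proof: both rest on the integral representation of the $0$-$1$ regret (Equation \ref{eqn:regret-01}), the exact rescaling identity $|\eta(x)-\nicefrac[]{1}{2}| = (1-2f(x))^{-1}|\etaCont(x)-\nicefrac[]{1}{2}|$ from Proposition \ref{prop:noisy-clean-01}, the sign preservation of Corollary \ref{corr:noisy-clean-01} to swap the indicator, and the uniform bound $(1-2f(x))^{-1} \leq (1-2\rho_{\mathrm{max}})^{-1}$. Your explicit treatment of the boundary set $\{x : \eta(x) = \nicefrac[]{1}{2}\}$ and the observation that the marginal $M$ is shared by $\D$ and $\DCont$ are points the paper leaves implicit, but they do not change the argument.
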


\begin{proof}
If $f( x ) \leq \rho_{\mathrm{max}}$, then $1 - 2 \cdot f( x ) \geq 1 - 2 \cdot \rho_{\mathrm{max}}$, and so by Proposition \ref{prop:noisy-clean-01},
$$ \eta( x ) - \frac{1}{2} \leq \frac{1}{1 - 2 \cdot \rho_{\mathrm{max}}} \cdot \left( \etaCont( x ) - \frac{1}{2} \right). $$
Now, by Equation \ref{eqn:regret-01}, for any scorer $s$,
\begin{align*}
    \reg( s; \D, \ellZO ) &= \Expectation{\X \sim M}{ \left| \eta( \X ) - \frac{1}{2} \right| \cdot \indicator{ (2\eta( \X ) - 1) \cdot s( \X ) < 0 } } \\
    &\leq \frac{1}{1 - 2 \cdot \rho_{\mathrm{max}}} \cdot \Expectation{\X \sim M}{ \left| \etaCont( \X ) - \frac{1}{2} \right| \cdot \indicator{ (2\eta( \X ) - 1) \cdot s( \X ) < 0 } } \\
    &= \frac{1}{1 - 2 \cdot \rho_{\mathrm{max}}} \cdot \Expectation{\X \sim M}{ \left| \etaCont( \X ) - \frac{1}{2} \right| \cdot \indicator{ (2\etaCont( \X ) - 1) \cdot s( \X ) < 0 } } \\
    &= \frac{1}{1 - 2 \cdot \rho_{\mathrm{max}}} \cdot \reg( s; \DCont, \ellZO ),
\end{align*}
where the penultimate line is because $\eta( x ) > \frac{1}{2} \iff \etaCont( x ) > \frac{1}{2}$ by Corollary \ref{corr:noisy-clean-01}.
\end{proof}

\section{Simplified proofs of Proposition \ref{prop:eta-monotone}}
\label{app:simplified-monotone}

We present some simplified proofs of Proposition \ref{prop:eta-monotone} in some special cases.
Appendix \ref{app:gpcn-monotone} considers the case of the symmetric PTN model.
Appendix \ref{app:eta-monotone-diffble-proof} considers the case when $f_{\pm 1}$ are differentiable.

\subsection{Proof for PTN model}
\label{app:gpcn-monotone}

We now show that the PTN model of Example \ref{ex:gpcn} will guarantee $\etaCont$ is order preserving for $\eta$.


Suppose $\DCont = \PTN( \D, g, g )$ for some $g \colon [0, 1] \to [0, 1/2)$.
Recall from Equation \ref{eqn:eta-ptn} that
\begin{align*}
    \etaCont( x ) &= \varphi( \eta( x ) )
\end{align*}
where $\varphi( z ) = (1 - 2 \cdot g( z )) \cdot z + g( z )$.
Therefore, we just need to establish strict monotonicity of $\varphi$.

For differentiable $g$, strict monotonicity is easy to establish: this is because
\begin{align*}
    \varphi'( z ) &= 1 - 2 \cdot g( z ) - 2 \cdot g'( z ) \cdot z + g'( z ) \\
    &= 1 - 2 \cdot g( z ) + g'( z ) \cdot (1 - 2 \cdot z ).
\end{align*}
Since $1 - 2 \cdot g( z ) \geq 1 - 2 \cdot \rho_{\mathrm{max}}$, and $g'( z ) \geq 0 \iff z \leq \frac{1}{2}$ by definition of the PTN model (see Example \ref{ex:gpcn}), we have
$\varphi'( z ) \geq 1 - 2 \cdot \rho_{\mathrm{max}} > 0$.

For non-differentiable $g$, we must explicitly check that $x < y \implies \varphi( x ) < \varphi( y )$.
We have
\begin{align*}
    \varphi( x ) - \varphi( y ) &= x - y + g( x ) \cdot (1  - 2 \cdot x) - g( y ) \cdot (1 - 2 \cdot y) \\
    &= x - y - g( x ) \cdot (2 \cdot x - 1) + g( y ) \cdot (2 \cdot y - 1).
\end{align*}
Consider the three possible cases.
\begin{itemize}
    \item Suppose $x \leq \frac{1}{2} \leq y$.
Then, $1 - 2 \cdot x \geq 0$ and $2 \cdot y - 1 \geq 0$, and so
\begin{align*}
    \varphi( x ) - \varphi( y ) &= x - y + g( x ) \cdot (1 - 2 \cdot x) + g( y ) \cdot (2 \cdot y - 1) \\
    &\leq x - y + 2 \cdot \max( g(x), g(y) ) \cdot (y - x) \\
    &= (x - y) \cdot (1 - 2 \cdot \max( g(x), g(y) ) ) \\
    &< 0,
\end{align*}
since $x - y < 0$ and $1 - 2 \cdot \max( g(x), g(y) ) > 0$.

    \item Suppose $\frac{1}{2} \leq x < y$.
Then, since $g$ is decreasing on $[\frac{1}{2}, 1]$, $g( x ) > g( y )$, and so
\begin{align*}
    \varphi( x ) - \varphi( y ) &< x - y - 2 \cdot g( x ) \cdot (x - y) \\
    &= (x - y) \cdot (1 - 2 \cdot g( x )) \\
    &< 0,
\end{align*}
since $x - y < 0$ and $1 - 2 \cdot g( x ) > 0$.

    \item Suppose $x < y \leq \frac{1}{2}$. Then, $g( x ) < g( y )$ and so
\begin{align*}
    \varphi( x ) - \varphi( y ) &< x - y + 2 \cdot g( y ) \cdot ( y - x ) \\
    &= (x - y) \cdot (1 - 2 \cdot g( y )) \\
    &< 0,
\end{align*}
since $x - y < 0$ and $1 - 2 \cdot g( y ) > 0$.
\end{itemize}

Thus, we conclude that $\eta( x ) < \eta( x' ) \implies \varphi( \eta( x ) ) < \varphi( \eta( x' ) ) \implies \etaCont( x ) < \etaCont( x' )$.

%
\subsection{Proof for differentiable $f_{\pm 1}$}
\label{app:eta-monotone-diffble-proof}

For the case of differentiable $f_{\pm 1}$, the following is one consequence of Assumption \ref{ass:c}.

\begin{lemma}
\label{lemm:ass-c}
Pick any $\D$.
Suppose $f_{\pm 1}$ are differentiable, and $(f_{-1}, f_{1}, s, \eta)$
are BCN-admissible (Equation \ref{eqn:bcn-admissible}),
and additionally satisfy Assumption \ref{ass:c}.
Then,
$$ ( f_{-1}'( z ) + f_{1}'( z ) ) \cdot u( z ) \leq f_{-1}'( z ) $$
where $\eta = u \circ s$.
\end{lemma}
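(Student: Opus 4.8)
The plan is to rewrite the target inequality in the equivalent form $f_1'(z) \cdot u(z) \le f_{-1}'(z) \cdot (1 - u(z))$ and then to split on the sign of $z - u^{\dagger}(1/2)$. The key observation is that this single quantity simultaneously controls two things: the signs of the flip-function derivatives (via BCN-admissibility, Equation \ref{eqn:bcn-admissible}) and the position of $u(z)$ relative to $\nicefrac{1}{2}$ (via monotonicity of $u$ together with the definition of $u^{\dagger}(1/2)$ as a generalised inverse). Pairing these correctly is what makes the argument go through.

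First I would record the three ingredients. From Assumption \ref{ass:c} in differentiable form, $\Delta'(z) = f_1'(z) - f_{-1}'(z) \le 0$, that is $f_1'(z) \le f_{-1}'(z)$. From Equation \ref{eqn:bcn-admissible}, $f_{\pm 1}'(z) \ge 0$ when $z < u^{\dagger}(1/2)$ and $f_{\pm 1}'(z) \le 0$ when $z > u^{\dagger}(1/2)$. Finally, since $\eta = u \circ s$ with $u$ non-decreasing and $u^{\dagger}(1/2) = \sup\{z : u(z) \le \nicefrac{1}{2}\}$, we have $u(z) \le \nicefrac{1}{2}$ for $z < u^{\dagger}(1/2)$ and $u(z) \ge \nicefrac{1}{2}$ for $z > u^{\dagger}(1/2)$.

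Next I would run the case analysis, in both cases through the chain $f_1'(z)\,u(z) \le f_{-1}'(z)\,u(z) \le f_{-1}'(z)\,(1 - u(z))$. The first inequality holds in either case, since $f_1'(z) \le f_{-1}'(z)$ and $u(z) \ge 0$. For the second inequality, note it is equivalent to $f_{-1}'(z)\,(2\,u(z) - 1) \le 0$. When $z < u^{\dagger}(1/2)$ we have $f_{-1}'(z) \ge 0$ and $2\,u(z) - 1 \le 0$, so the product is $\le 0$; when $z > u^{\dagger}(1/2)$ we have $f_{-1}'(z) \le 0$ and $2\,u(z) - 1 \ge 0$, so the product is again $\le 0$. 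Either way the claimed bound follows.

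The only delicate point, and the step I expect to be the main (if minor) obstacle, is the boundary $z = u^{\dagger}(1/2)$: there $z - u^{\dagger}(1/2) = 0$, so Equation \ref{eqn:bcn-admissible} becomes vacuous and yields no sign information on $f_{\pm 1}'$. Assuming the derivatives are continuous, the matching one-sided limits force $f_{\pm 1}'(u^{\dagger}(1/2)) = 0$, whence the inequality degenerates to $0 \le 0$; absent continuity, this isolated point can be absorbed by a limiting argument. Everything else is a short sign-chasing computation.
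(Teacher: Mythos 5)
Your proof is correct and is essentially the paper's own argument: the paper rearranges $(f_{-1}'+f_{1}')\,u - f_{-1}' = (f_{-1}'+f_{1}')\,(u - 1/2) + \tfrac{1}{2}(f_{1}'-f_{-1}')$ and makes the first term nonpositive via Condition (b) and the second via Assumption \ref{ass:c}, which is the same two-ingredient sign-chasing you carry out as the chain $f_1'\,u \le f_{-1}'\,u \le f_{-1}'\,(1-u)$. The boundary point $z=u^{\dagger}(1/2)$ that you flag is silently glossed over by the paper as well, and the clean fix requires no continuity of $f_{\pm 1}'$: Condition (b) makes $u^{\dagger}(1/2)$ a local maximiser of the differentiable functions $f_{\pm 1}$, so $f_{\pm 1}'(u^{\dagger}(1/2))=0$ by Fermat's theorem, and both terms vanish there.
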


\begin{proof}
Observe that
\begin{align*}
    ( f_{-1}'( z ) + f_{1}'( z ) ) \cdot u( z ) - f_{-1}'( z ) &= f_{-1}'( z ) \cdot (u( z ) - 1) + f_{1}'( z ) ) \cdot u( z ) \\
    &= f_{-1}'( z ) \cdot (u( z ) - \nicefrac[]{1}{2}) + f_{1}'( z ) ) \cdot (u( z ) - \nicefrac[]{1}{2}) + \frac{1}{2} \cdot (f_{1}'( z ) + f_{-1}'( z )) \\
    &= (f_{-1}'( z ) + f_{1}'( z )) \cdot (u( z ) - \nicefrac[]{1}{2}) + \frac{1}{2} \cdot (f_{1}'( z ) - f_{1}'( z )).
\end{align*}
The first term is $\leq 0$ by Condition (b) of BCN-admissibility.
The second term is $\leq 0$ by Assumption \ref{ass:c}.
Thus, the result is shown.

\end{proof}

We use this to show the desired order preserving property of $\etaCont$.
Recall from Equation \ref{eqn:eta-bcn} that for a BCN model,
$$ \etaCont( x ) = \varphi( s( x ) ) $$
where
$$ \varphi( z ) = (1 - f_1( z ) - f_{-1}( z )) \cdot u( z ) + f_{-1}( z ). $$
Assuming all terms are differentiable, we have
\begin{align*}
    \varphi'( z ) &= (1 - f_1( z ) - f_{-1}( z )) \cdot u'( z ) + f'_{-1}( z ) - (f'_{1}( z ) + f'_{-1}( z )) \cdot u( z ).
\end{align*}
Since $1 - f_1( z ) - f_{-1}( z ) > 0$ by Assumption \ref{ass:total-noise}, $u'( z ) \geq 0$ by monotonicity of $u$, and the last term is $\geq 0$ by Lemma \ref{lemm:ass-c}, we have $\varphi'( z ) \geq 0$.
Further, $\varphi'( z ) = 0$ only if $u'( z ) = 0$, meaning $\varphi$ is strictly monotone whenever $u$ is, \ie
$$ ( \forall x, y \in \Real ) \, u( x ) < u( y ) \implies \varphi( x ) < \varphi( y ), $$
which in turn means that
$$ ( \forall x, x' \in \XCal ) \, \eta( x ) < \eta( x' ) \implies \etaCont( x ) < \etaCont( x' ). $$

\section{Examples of corrupted SIM members}
\label{app:corrupt-sim-examples}

We present two examples of SIM members corrupted by noise following the SIN model.

\begin{example}
Suppose we are in the CCN regime, so that $f_1 \equiv \alpha, f_{-1} \equiv \beta$ for admissible $\alpha, \beta < 1$.
Then, as per Equation \ref{eqn:ccn-eta},
$$ \Contaminator{u}( z ) = ( 1 - \alpha - \beta ) \cdot u( z ) + \beta. $$
That is, the corrupted class-probability function is a scaled and translated version of the original class-probability function.
If further $ u( z ) = \indicator{ z > 0 } $, so that $\D$ is separable, we have
$$ \Contaminator{u}( z ) = \begin{cases} 1 - \alpha & \text{ if } z > 0 \\ \beta & \text{ if } z < 0. \end{cases} $$
That is, the corrupted class-probability function takes on two unique values, depending on which side of the optimal hyperplane one is on.
\end{example}

\begin{example}
\label{ex:bylander-separable}
Suppose we are in the Bylander regime, so that $f_1 \equiv f_{-1} \equiv f$ and $f( z ) = g( | z | )$ for some arbitrary monotone decreasing function $g$.
Then,
$$ \Contaminator{u}( z ) = ( 1 - 2 \cdot f( z) ) \cdot u( z ) + f( z ). $$

If further assume $ u( z ) = \indicator{ z > 0 } $, so that $\D$ is separable, we have
\begin{align*}
     \Contaminator{u}( z ) &= \begin{cases} 1 - f( z ) & \text{ if } z > 0 \\ f( z ) & \text{ if } z < 0 \end{cases} \\
     &= \begin{cases} 1 - g( z ) & \text{ if } z > 0 \\ g( -z ) & \text{ if } z < 0. \end{cases}
 \end{align*}
Observe that if $g$ satisfies $g( -z ) = 1 - g( z )$, then this is
$$ \Contaminator{u}( z ) = g( -z ). $$
That is, a structured form of monotonic noise on a linearly separable distribution yields a distribution scorable by some generalised linear model.
In the case where $g( z ) = {1}/({1 + e^{z}})$ for example, we end up with a logistic regression model.
This observation has been made previously, \eg \citet{Du:2015}.
\end{example}




\section{Application of Isotron to CCN setting}
\label{app:isotron-ccn}

To see the challenge in estimating $\etaCont$, recall the following example.

\begin{example} 
Suppose that $ \eta( x ) = u( \langle w^*, x \rangle )$ for some known $u$.
Suppose that $f_1 \equiv \rho_1, f_{-1} \equiv \rho_{-1}$ for admissible $\rho_1, \rho_{-1} < 1$.
Then,
$$ \etaCont( x ) = ( 1 - \rho_1 - \rho_{-1} ) \cdot u( \langle w^*, x \rangle ) + \rho_{-1}, $$
or for simplicity
$$ \etaCont( x ) = \alpha \cdot u( \langle w^*, x \rangle ) + \beta. $$
\end{example}

If we had access to clean samples, then we could minimise the canonical loss corresponding to the link function $u$ over the class of linear scorers (a simple convex objective) in order to recover $w^*$ asymptotically.
For example, if we know $u$ is a sigmoid, we would minimise the logistic loss on clean samples.

Can we similarly learn $\etaCont$ from corrupted samples?
If we knew the parameters $\alpha, \beta$, then the same procedure could be applied.
However, we unfortunately do not know these in general, and must be estimated as well.
Estimating $\alpha, \beta$ means that, effectively, we are also estimating the link function.
We thus are apparently faced with the challenging problem of having to \emph{learn the link function as well as the weight}.
(Note that the resulting problem is entirely equivalent to learning a neural network with a single hidden unit.)


To solve this problem,
one might resort to an alternating procedure wherein one alternately takes a gradient step in the direction of $w$, and then in $\alpha, \beta$.
This approach is simple, but it is unclear whether the procedure is consistent.
Thus, the Isotron solves a non-trivial estimation problem.

\nipsOnly{
\begin{remark}
Suppose one knows the precise form of $u$, but does not know $w^*$.
For example, one may know that $\D$ is separable with a certain margin.
Then, under the \emph{symmetric} $\BCNPlus$ model, we can in fact infer the label flipping function as
$$ f( z ) = \frac{\Contaminator{u}( z ) - u( z )}{1 - 2 \cdot u( z )}. $$
The estimation error in this term depends wholly on the error in estimating $\Contaminator{u}$.
\end{remark}
}

\end{document}